\documentclass[lettersize,journal]{IEEEtran}
\usepackage{amsmath,amsfonts}
\usepackage{array}
\usepackage{textcomp}
\usepackage{stfloats}
\usepackage{url}
\usepackage{verbatim}
\usepackage{graphicx}
\def\BibTeX{{\rm B\kern-.05em{\sc i\kern-.025em b}\kern-.08em
    T\kern-.1667em\lower.7ex\hbox{E}\kern-.125emX}}
\usepackage{balance}
\usepackage[section]{placeins}
\usepackage{algorithm}
\usepackage{algpseudocode}

\usepackage{capt-of}

\newcommand{\myparagraph}[1]{\noindent\textbf{#1.}}

\def\ie{i.e.}
\def\eg{e.g.}
\def\mcr2{$\text{MCR}^2$}
\def\trc{$\text{TR}^2\text{C}$}
\def\name{TDSC}
\def\boldPi{\boldsymbol{\Pi}}
\def\boldGamma{\boldsymbol{\Gamma}}
\def\x{\boldsymbol{x}}
\def\X{\mathbf{X}}
\def\C{\mathbf{C}}
\def\A{\mathbf{A}}
\def\Y{\mathbf{Y}}
\def\y{\boldsymbol{y}}
\def\Z{\mathbf{Z}}
\def\z{\boldsymbol{z}}

\def\I{\mathbf{I}}
\def\W{\mathbf{W}}

\def\vzero{\boldsymbol{0}}
\def\rank{\mathop{\min}\{d,N\}}

\usepackage{amsmath}
\usepackage{amssymb}
\usepackage{mathtools}
\usepackage{amsthm}

\usepackage{multirow}
\usepackage{booktabs}
\usepackage{tabularx}
\newcolumntype{P}[1]{>{\centering\arraybackslash}p{#1}}

\usepackage{subcaption}

\DeclareMathOperator{\Diag}{Diag}

\DeclareMathOperator{\tr}{tr}

\makeatletter
\newcommand{\mymaketitlesupplementary}{%
  \newpage
  \begin{center}
    \Large
    \textbf{\@title}\\
    \vspace{0.5em}
    Supplementary Material\\
    \vspace{1.0em}
  \end{center}
}
\makeatother

\usepackage{colortbl}
\usepackage[dvipsnames]{xcolor}
\definecolor{myGray}{gray}{0.9}

\newcommand{\mcred}{\color{red}}

\newcommand{\mcgreen}{\color[HTML]{007D06}}

\usepackage[accsupp]{axessibility}

\theoremstyle{plain}

\newtheorem{proposition}{Proposition}

\theoremstyle{definition}

\theoremstyle{remark}

\begin{document}

\title{Jointly Learning Structured Representations and Stabilized Affinity for Human Motion Segmentation
}
\author{\IEEEauthorblockN{Xianghan Meng\thanks{Xianghan Meng, Zhiyuan Huang, Zhengyu Tong, and Chun-Guang Li are with the  School of Artificial Intelligence, Beijing University of Posts and Telecommunications, Beijing 100876, P.R. China (e-mail: mengxianghan@bupt.edu.cn, huangzhiyuan@bupt.edu.cn, tongzhengyu@bupt.edu.cn, lichunguang@bupt.edu.cn).},~Zhiyuan Huang,~Zhengyu Tong,~and~Chun-Guang Li$^\ast$\thanks{Chun-Guang Li is the corresponding author.}\thanks{This work is supported by the National Natural Science Foundation of China under Grant 62576048. }}

}

\markboth{Journal of \LaTeX\ Class Files,~Vol.~18, No.~9, September~2026}%
{How to Use the IEEEtran \LaTeX \ Templates}

\maketitle

\begin{abstract}

Human Motion Segmentation (HMS), which aims to partition a video into non-overlapping segments corresponding to different human motions, has recently attracted increasing research attention. 
Existing HMS approaches are predominantly based on subspace clustering, which are grounded on the assumption that the distribution of high-dimensional temporal features well aligns with a Union-of-Subspaces (UoS).
For videos in the real world, however, the raw frame-level features often violate the UoS assumption and yield unsatisfactory segmentation performance. 
%
To address this issue, we propose an efficient and effective approach for HMS, named Temporal Deep Self-expressive subspace Clustering (\name{}), which jointly learns temporally consistent structured representations and stabilized affinity for accurate and robust HMS. 
Specifically, in \name{}, 
we alternately learn structured representations of the input frame features and self-expressive coefficients via a properly regularized self-expressive model, in which 
%
a coding-rate maximization regularizer is incorporated to avoid representation collapse and conform the learned representations to span a desired UoS distribution, and meanwhile, 
temporal constraints 
are incorporated to promote temporally adjacent frames to be partitioned into the same groups. 
Moreover, we develop a temporal momentum averaging mechanism to stabilize affinity evolution and design a reparameterization strategy to enable efficient optimization. 
We conduct extensive experiments on five benchmark HMS datasets using both conventional (HoG) and up-to-date deep features (\ie, CLIP, DINOv2) to validate the effectiveness of our approach. 

\end{abstract}

\begin{IEEEkeywords}
Human motion segmentation, structured representation learning, deep subspace clustering, video segmentation.
\end{IEEEkeywords}

\section{Introduction}
\label{Sec:Introduction}
Human motion analysis has been extensively studied in video technology over the past two decades, driven by its wide applications, \eg, motion estimation and prediction~\cite{Chen:TCSVT11,Chen:TCSVT23,Yu:TCSVT25}, abnormal activity detection~\cite{Dong:TCSVT09,Lee:TCSVT15}, and motion generation~\cite{Zeng:TCSVT25,Cui:TCSVT25}. 
As a preparatory step for these analysis, Human Motion Segmentation (HMS) aims to partition a sequence of video frames into distinct, non-overlapping segments, each corresponding to a specific human motion~\cite{Jhuang:ICCV13}. 
Given that frame-wise annotation of long video sequences is highly labor-intensive, HMS is typically formulated as an unsupervised time-series clustering problem.

Existing approaches for HMS typically assume that the frames in a video that capture consecutive motions conform to a Union of low-dimensional Subspaces (UoS) embedded in high-dimensional space. %
Thus, subspace clustering methods, which aim to group 
the frames according to their underlying subspaces, have emerged as the dominant line of research for HMS task~\cite{Elhamifar:CVPR09,Liu:ICML10,Lu:ECCV12,You:CVPR16-EnSC,Li:TIP17}.
The core component 
in these subspace clustering approaches is a self-expressive model~\cite{Elhamifar:CVPR09}, which reveals the underlying UoS distribution by representing each data point as a 
linear combination of the remaining points. Under mild conditions, the nonzero combination coefficients in the self-expressive model are guaranteed to associate 
to 
the data points from the same subspace~\cite{Soltanolkotabi:AS12,Elhamifar:TPAMI13,Li:JSTSP18}.
For the HMS task, 
an important prior 
is that the successive (\ie, temporally neighboring) frames in a video are more likely to belong to the same human motion. 
To encode such temporal continuity, various temporal continuity regularizers have been employed in the self-expressive model 
to encourage the successive 
frames to be assigned to the same subspace~\cite{Tierney:CVPR14-OSC,Li:ICCV15-TSC}.
Recently, transfer learning-based subspace clustering approaches, \eg,~\cite{Wang:AAAI18-TSS,Wang:TIP18-LTS,Zhou:CVPR20-MTS,Zhou:TPAMI22}, have been developed, 
which incorporate domain knowledge transfer to enhance the HMS performance. 
Despite the flourishing development of the HMS approaches in the past decade, the actual clustering performance of HMS seems 
being stuck in a bottleneck.

For human activity recognition, as investigated in prior works~\cite{Jiang:TPAMI12-Keck,Ryoo:ICCV09-Ut}, video frames typically capture both complex human motions and cluttered backgrounds. As a result, the distribution of frame-wise features may deviate substantially from the ideal union-of-subspaces (UoS) assumption.
As a consequence, it is necessary to learn 
the structured representation of the frames in video to 
align with the UoS distribution. 

\begin{figure}[t]
    \centering
    \includegraphics[trim=230pt 20pt 300pt 50pt, clip,width=\linewidth]{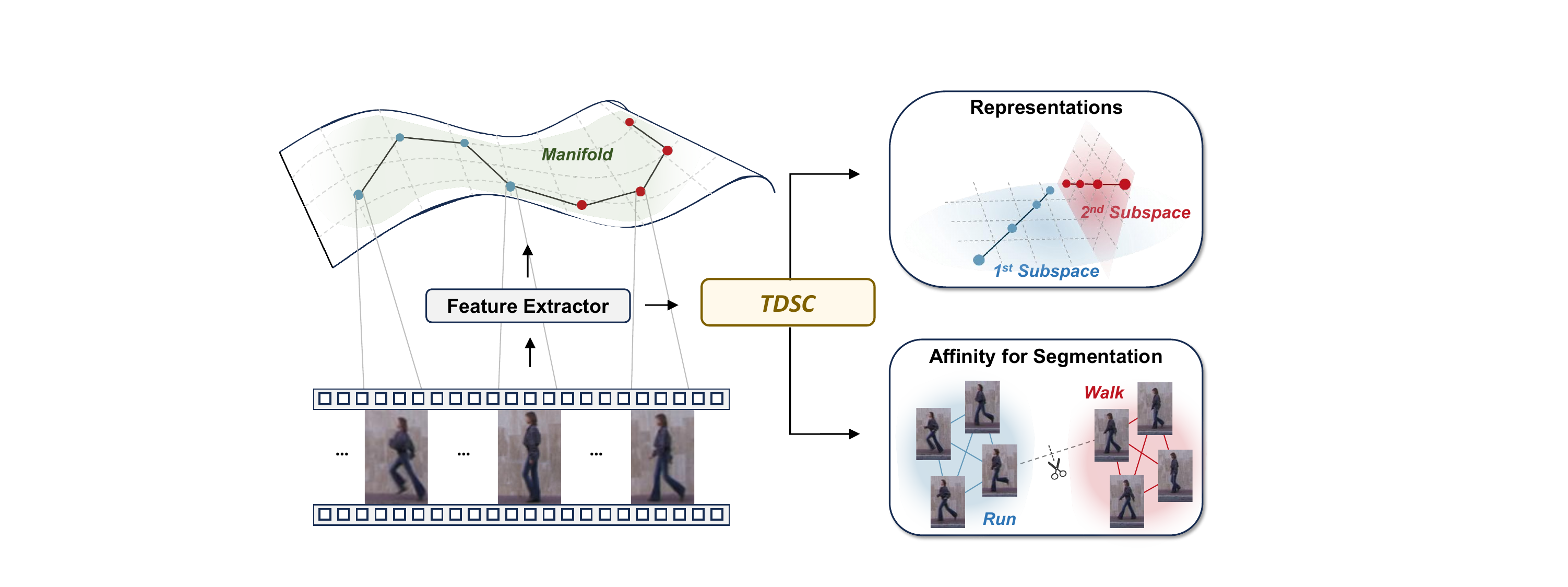}
    \caption{\textbf{Motivation of the paper.} Our proposed \name{} jointly learns structured representations that align with the UoS distribution and learns stabilized affinity between data points for robust human motion segmentation. }
    \label{fig:motivation}
\end{figure}

To tackle this issue, we propose an efficient and effective approach for the HMS task, named Temporal Deep Self-expressive subspace Clustering (\name{}), which jointly learns temporally consistent structured representations aligning with a UoS distribution and 
stabilized affinity among video frames.
Our \name{} is built upon the self-expressive model \cite{Elhamifar:CVPR09}, a well-established framework for subspace clustering with broad theoretical guarantees and successful applications.
Concretely, 
in \name{}, we incorporate a total coding-rate maximization regularizer on the representation learning in the self-expressive model to prevent representation collapse and encourage structured representations that conform to the desired UoS distribution, and introduce temporal constraints 
to facilitate temporally coherent representation learning and improve segmentation results.
Moreover, we develop a temporal momentum averaging mechanism for stabilizing the affinity matrix updating and employ a reparameterization strategy to fulfill efficient optimization. 
Extensive experiments on five HMS benchmark datasets with 
three types of feature extractors demonstrate that our  \name{} consistently achieves state-of-the-art performance.

Our conference paper~\cite{Meng:ICCV25-TR2C} introduced a preliminary Temporal Rate Reduction Clustering (\trc{}) model, which is based on Maximal Coding Rate Reduction (MCR$^2$) framework \cite{Yu:NIPS20} to learn representations that align with a UoS distribution. In this extended article, we further develop a 
\name{}, which is built upon the self-expressive model \cite{Elhamifar:CVPR09} and 
enjoys theoretical justification, improved computational efficiency, and better empirical performance. Moreover, we introduce a Temporal Momentum Averaging (TMA) mechanism within the reparameterization procedure to learn stabilized affinity during the optimization process.

\myparagraph{Paper Contributions} Compared to our conference paper~\cite{Meng:ICCV25-TR2C}, the main contributions of the article can be summarized as follows.

\begin{enumerate}

        
    \item We propose an extended model for HMS, termed \name{}, which is based on the self-expressive model to jointly learn structured representation and stabilized affinity for robust segmentation. The proposed model enjoys 
    theoretical guarantee, improved efficiency, and superior segmentation performance.
    
    \item We introduce a Temporal Momentum Averaging (TMA) mechanism within the reparameterization optimization scheme to stabilize the training of affinity matrices.
    
    \item We conduct more extensive experiments on five benchmark datasets with multiple types of feature extractors, demonstrating that \name{} consistently achieves state-of-the-art performance on the HMS task.

\end{enumerate}

\myparagraph{Paper outline} The remainder of this article is organized as follows. We present the related works in Section \ref{Sec:Related Work}, the formulation and optimization of our proposed \name{} in Section \ref{Sec:method}, the experimental setups and results in Section \ref{Sec:Experiments}, and the conclusion in Section \ref{Sec:Conclusion}.

\section{Related Work} 
\label{Sec:Related Work}

In this section, we will review the previous work for HMS at first, then introduce the relevant work on the structured representation learning and deep subspace clustering, and finally demonstrate recent approaches for temporal action segmentation.

\myparagraph{Probabilistic methods for HMS}
Early human motion segmentation approaches predominantly modeled time-series data using probabilistic frameworks, \eg, Hidden Markov Models~\cite{Smyth:AISworkshop1999}, Dynamic Bayesian Networks~\cite{Murphy:2002} and Auto-regressive Moving Average Models~\cite{Xiong:ICDM02}.
These methods typically rely on Expectation–Maximization (EM) algorithm for parameter estimation and inference.
In addition, there are also some approaches 
that extend the classical clustering techniques (\eg, $k$-means) by incorporating Dynamic Time Warping to handle temporal misalignments~\cite{Zhou:CSCVPR10,Zhou:TPAMI12}.

\myparagraph{Subspace clustering based methods for HMS}
Under the assumption that human motion data lie on a UoS, with each motion corresponding to one subspace, subspace clustering naturally becomes an attractive tool for HMS.
To date, a number of temporal subspace clustering methods have been proposed, \eg, Ordered Subspace Clustering (OSC)~\cite{Tierney:CVPR14-OSC} and Temporal Subspace Clustering (TSC)~\cite{Li:ICCV15-TSC}, both of which explicitly exploit temporal continuity.
In OSC, the $\|\cdot\|_{1,2}$ norm is introduced as a temporal continuity regularization; in TSC, the temporal continuity based graph Laplacian is introduced to encourage the 
successive frames to be grouped into the same subspace. 
Then, in~\cite{Gholami:CVPR17}, Gaussian Process %
is incorporated to handle missing data to enhance the robustness; 
in \cite{Wang:PR22}, minimum spanning tree is introduced to characterize the affinity between the successive 
frames with less redundancy.
In addition, transfer learning is also introduced to align the source domain and target domain by optimizing a linear projection~\cite{Wang:AAAI18-TSS,Wang:TIP18-LTS} or learning multi-mutual consistency and diversity across different domains~\cite{Zhou:CVPR20-MTS,Zhou:TPAMI22}. 
However, the performance of the approaches 
mentioned above is still unsatisfactory due to the fact that the distribution of the features associated with frame sequences deviate from the UoS structure.

\myparagraph{Joint representation learning and subspace clustering based methods for HMS}
To learn effective temporal representations for HMS, %
in \cite{Bai:ICDM20}, a dual-side auto-encoder is introduced 
to learn representations assisted with temporal consistency constraints; after that, in \cite{Bai:TIP22}, a velocity guidance mechanism is leveraged %
for the better capturing of changes between different motions; 
in \cite{Dimiccoli:ICIP19}, non-local self-similarity is introduced to form the representations of each frame; %
in \cite{Dimiccoli:TIP20}, graph consistency is introduced to regularize the learned representations.
More recently, %
in \cite{Dimiccoli:ICCV21-GCTSC}, an approach termed Graph Constraint Temporal Subspace Clustering (GCTSC) is developed, in which graph consistency-based representation learning is combined with temporal subspace clustering (TSC). %
Unfortunately, %
in these joint representation learning and subspace clustering based 
methods, there is no evidence to demonstrate that the learned representations are suitable or well aligned with the UoS distribution.

\myparagraph{Structured representation learning and deep subspace clustering}
Techniques for structured representation learning can generally be categorized into supervised and unsupervised approaches.
In the supervised setting, OL\'{E}~\cite{Lezama:CVPR18} proposes a nuclear norm based geometric loss for low-rank embeddings lying on orthogonal subspaces. 
Then, a principle called Maximal Coding Rate Reduction (\mcr2)---which maximizes the reduction in coding rates before and after partitioning the data---has been shown to learn discriminative and diverse features that align with a UoS distribution~\cite{Yu:NIPS20,Wang:ICML24}.
In the unsupervised setting, \mcr2 has been applied to image representation learning and clustering. In particular, it has been combined with contrastive learning in~\cite{Li:Arxiv22,Ding:ICCV23} and has demonstrated strong clustering performance with pretrained CLIP features~\cite{Radford:ICML21-CLIP}.
However, there is still a lack of theoretical guarantee that \mcr2 can learn UoS representations in the unsupervised setting.
%
Earlier work on joint representation learning and subspace clustering additionally learned a linear transformation of the data for the self-expressive model \cite{Patel:ICCV13,Patel:JSTSP15}. 
Then, deep subspace clustering methods, which attempt to jointly learn representations and self-expressive coefficients, are developed, in which 
neural networks are leveraged for feature extraction, \eg~\cite{Peng:arxiv17,Ji:NIPS17-DSCNet,Peng:TIP18,Pan:CVPR18,Zhang:CVPR19,Lv:TIP21,Wang:TIP23,Zhu:TCSVT26}. However, these approaches lack theoretical justification or empirical evidence that the learned representations form a UoS distribution.
%
Recently, a total coding rate regularized deep self-expressive model has been shown to learn structured representations that align with a UoS distribution while encouraging orthogonality across subspaces \cite{Meng:ICLR25}. 
However, existing methods still lack effective temporal extensions for addressing the HMS problem.

\myparagraph{Temporal action segmentation (TAS)}
TAS is a temporal video segmentation task closely related to HMS, as both aim to partition a video into non-overlapping segments characterizing specific human motions or actions~\cite{Ding:TPAMI23}.
The main differences between TAS and HMS lie in two aspects. First, they differ in the scale and nature of the actions: HMS typically concerns macro-scale motions, whereas TAS focuses on micro-scale manipulative actions. Second, TAS is studied under supervised \cite{Richard:CVPR16,Lea:CVPR17}, semi-supervised \cite{Singhania:AAAI22,Ding:ECCV22}, or unsupervised settings \cite{Sener:ICCV15,Alayrac:CVPR16}, while HMS is typically formulated as an unsupervised problem \cite{Tierney:CVPR14-OSC,Li:ICCV15-TSC}.
Recent unsupervised TAS methods mainly rely on temporally weighted hierarchical clustering~\cite{Sarfraz:CVPR21}, optimal transport~\cite{Xu:CVPR24}, and hierarchical vector quantization~\cite{Spurio:AAAI25}.

\section{Our Method}
\label{Sec:method}

This section will review the preliminaries, including the problem setting of HMS and the Temporal Rate Reduction Clustering (\trc) approach, then demonstrate our proposed Temporal Deep Self-expressive subspace Clustering (\name) model, and present a differential programming strategy to solve the problem efficiently.

\begin{figure*}[t]
    \centering
    \includegraphics[trim=100pt 50pt 120pt 0pt, clip,width=0.95\linewidth]{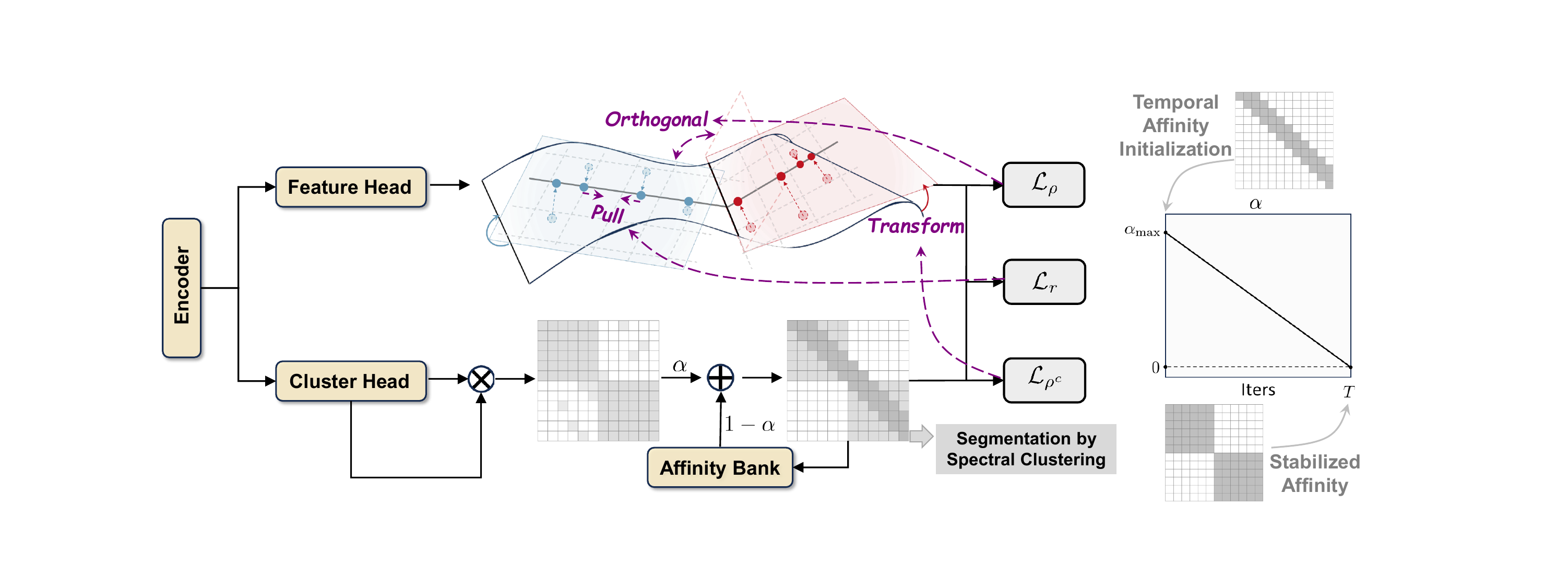}
    \caption{\textbf{The general pipeline of our proposed \name{}.} Left: Structured representations and stabilized affinity are jointly learned. Right: The evolved affinity matrix by our proposed Temporal Moving Average (TMA) module.}
    \label{fig:framework}
\end{figure*}

\subsection{Preliminaries}

\myparagraph{Problem setting}
Given a video consisting of $N$ frames $\mathcal{D}=\{{\mathcal{I}}_i\}_{i=1}^N$, HMS aims to sort each frame into one of the $K$ prescribed human motions.
Let $\x_i$ be the extracted feature vector from the $i$-th frame (by \eg, a HoG descriptor), which is typically used as the input of the HMS approaches.

\myparagraph{Temporal Rate Reduction Clustering (\trc) approach}
In the HMS task, frame-level features corresponding to different human motions are typically approximated by a Union of Subspaces (UoS), where each subspace is spanned by frames belonging to a specific motion.
Based on this approximation, the frames of different human motions can be separated by discovering the underlying UoS structure and identifying different subspaces of the features.

However, the segmentation performance of these methods seems to be stuck, primarily due to the misalignment between the data and the UoS assumption, especially in scenarios involving complex motions and cluttered background~\cite{Jiang:TPAMI12-Keck,Ryoo:ICCV09-Ut}.
To address this limitation, it is crucial to learn a structured representation of the data, \ie, learn a mapping function $\mathcal{F}:\X\rightarrow \Z$ that transforms the input $\X\coloneqq[\x_1,\cdots,\x_N]$ into representations $\Z\coloneqq[\z_1,\cdots,\z_N]$ that conform to a UoS distribution, thereby enhancing segmentation performance.

In the conference version of this article, we propose a Temporal Rate Reduction Clustering (\trc) model, which jointly learns temporally consistent structured representations that align with a UoS distribution and affinity
~\cite{Meng:ICCV25-TR2C}. 
\trc{} learns structured representations that conform to a UoS structure through the framework 
of Maximal Coding Rate Reduction ($\text{MCR}^2$). Concretely, the optimization problem of $\text{MCR}^2$ framework is formulated as follows:
\begin{align}
\label{Eq:MCR2}
\begin{aligned}
    \mathop{\max}_{\Z}\quad& \rho(\Z,\epsilon) -\rho^c(\Z,\epsilon~|~\boldPi)\\
    \mathrm{s.t.}\quad &\|\z_i\|_2^2=1, \quad \text{for}~i=1,\cdots,N,
\end{aligned}
\end{align}
where $\rho^c(\Z,\epsilon~|~\boldPi)$ denotes the sum of class-wise coding rate under a partition $\boldPi$, $\rho(\Z,\epsilon)$ denotes the total coding rate of all representations which 
is defined as follows: 
%
%
\begin{equation}
    \rho(\Z,\epsilon)\coloneqq\frac{1}{2}\log\det(\I+\frac{d}{N\epsilon^2}\Z\Z^\top), 
\end{equation}
and $\epsilon >0$ is an error tolerance parameter. 
$\rho(\Z,\epsilon)$ is the average coding length required to encode finite samples drawn from a mixture of Gaussian or subspaces distributions. From a geometric perspective, it also computes the volume occupied by $\Z$ by measuring the number of $\epsilon$-radius balls that can be maximally packed~\cite{Ma:TPAMI07}.

In supervised settings where $\boldPi$ is available, the $\text{MCR}^2$ framework provably learns representations $\Z$ that conform to a union of orthogonal subspaces~\cite{Yu:NIPS20,Wang:ICML24}. However, since that the HMS problem is unsupervised, following \cite{Ding:ICCV23}, \trc{} adopts a relaxed formulation of $\rho^c(\Z,\epsilon~|~\boldPi)$ that approximates the partition $\boldPi$ using the affinity $\boldGamma$ of representations: 
\begin{equation}
    \bar \rho^c(\Z,\epsilon~|~\boldGamma)\coloneqq\frac{1}{N}\sum_{j=1}^N\log\det(\mathbf{I}+\frac{d}{\epsilon^2}\Z\Diag(\boldGamma_j)\Z^\top)
\end{equation}
where $\boldGamma_j$ being the $j$-th column of $\boldGamma$.
%

To further encourage adjacent frames to belong to the same motion segment, inspired by \cite{Li:ICCV15-TSC}, \trc{} additionally incorporates a temporal Laplacian regularization $r(\Z)$ to encourage the temporal continuity 
and thus solves optimization problem as follows: 
%
%
\begin{align}
\label{Eq:TR2C}
\begin{aligned}
    \mathop{\max}_{\Z,\boldGamma}\quad& \rho(\Z,\epsilon) -\lambda_1\bar \rho^c(\Z,\epsilon~|~\boldGamma) + \lambda_2 r(\Z)\\
    \mathrm{s.t.}\quad &\|\z_i\|_2^2=1, \quad \text{for}~i=1,\cdots,N,
\end{aligned}
\end{align}
where $\lambda_1 >0$ and $\lambda_2>0$ are two hyper-parameters.

\myparagraph{Limitations of \trc}
Despite the substantial performance improvement achieved by \trc{}~\cite{Meng:ICCV25-TR2C}, 
several limitations still remain:
\begin{enumerate}
    \item \textbf{Lack of theoretical guarantee.} The $\text{MCR}^2$ framework 
    guarantees that the learned representations conform to a UoS structure only in the supervised setting. To the best of our knowledge, there is currently no theoretical evidence that this property still holds in the unsupervised scenario, which is the case for HMS in \trc.
    \item \textbf{Computational bottleneck.} The affinity-based formulation of $\bar\rho^c$ requires computing $\log\det(\cdot)$ for $N+1$ times. The complexity $\mathcal{O}(Nd^3)$ 
    becomes the 
    major computation and memory bottleneck of \trc. 
    \item \textbf{Optimization instability.} Although the objective function in \trc{} is concave with respect to $\boldGamma$ in isolation, we empirically observe that the segmentation accuracy fluctuates during joint optimization.
\end{enumerate}

\subsection{Our Proposed Method}

To address the aforementioned limitations of \trc{}, we propose an extended method that 
enjoys theoretical guarantee in unsupervised structured representation learning, lower computational complexity, and improved optimization stability.

In contrast to \trc{}, which detects the UoS structure of data based on the relaxed $\text{MCR}^2$ framework, 
our extended method is built upon the self-expressive model, a well-established approach for subspace clustering with broad theoretical guarantee.

\myparagraph{Discovering UoS structure by self-expressive model}
Based on the closure property of subspaces under linear combinations, the self-expressive model characterizes the underlying subspace structure by representing each sample as a linear combination of the remaining samples, \ie,
\begin{equation}
\x_j=\sum_{i\neq j}c_{ij}\x_i,
\end{equation}
where $c_{ij}$ is the corresponding expressive coefficient.
Coupled with certain regularization (\eg, sparse), the nonzero expressive coefficients are guaranteed to associate 
with the data within the same subspace under mild conditions~\cite{Elham:TPAMI13, Soltanolkotabi:AS12,Li:JSTSP18}.

Denote $\C\coloneqq\{c_{ij}\}_{i,j=1}^N$, the relaxed matrix formulation of the self-expressive model is: 
\begin{equation}
\label{eq:se_model}
    \mathop{\min}_{\C}\quad\lambda_1\|\X-\X\C\|_F^2+\Omega(\C),\quad\mathrm{s.t.} \Diag(\C)=\vzero,
\end{equation}
where $\Omega:\mathbb{R}^{N\times N}\xrightarrow{}\mathbb{R}_+$ is a regularization on the self-expressive coefficient matrix $\C$, the constraint on the diagonal elements of $\C$ prevents trivial solutions in which each sample is represented only by itself, and $\lambda_1>0$ is a hyper-parameter.
After solving $\C$, the affinity matrix of data can be induced by $\A\coloneqq(|\C|+|\C^\top|)/2$, on which spectral clustering~\cite{Shi:TPAMI00} can be applied to yield the 
clustering result.
%


\myparagraph{Incorporating representation learning}
Similar to 
\trc{}, we incorporate the 
representation learning into the optimization problem \eqref{eq:se_model} for jointly learning representation and self-expressive coefficients, 
\ie, 
\begin{align}
\label{eq:joint_Z_pi}
\begin{aligned}
    \mathop{\min}_{\Z,\C}~\lambda_1\|\Z-\Z\C\|_F^2+\Omega(\C)\quad \text{s.t.}\quad\Diag(\C)=\vzero.
\end{aligned}
\end{align}
We note that in problem \eqref{eq:joint_Z_pi}, optimizing $\C$ with $\Z$ fixed reveals the underlying subspace structure, whereas optimizing $\Z$ with $\C$ fixed 
is to linearize the representations by reducing the self-expressive residual (as pointed out in \cite{Zhao:CPAL24}).

However, an important prior in the HMS task is that the temporally neighboring frames in the video are more likely to belong to the same motion. 
Therefore, it is helpful to introduce a temporal continuity regularizer, which encourages learning representations that are of temporal consistency between neighboring frames and thus facilitates the segmentation task.

\myparagraph{Incorporating temporal consistency}
Analogous to~\cite{Li:ICCV15-TSC,Dimiccoli:ICCV21-GCTSC,Meng:ICCV25-TR2C}, we introduce a temporal Laplacian regularizer, which is defined as follows:
\begin{equation}
    r(\Z)\coloneqq\frac{1}{2}\sum_{i=1}^N\sum_{j=1}^Nw_{ij}\|\z_i-\z_j\|_2^2=\tr(\Z\mathbf{L}\Z^\top),
\end{equation}
where $\mathbf{L}=\Diag(\W\boldsymbol{1}_N)-\W$ is the graph Laplacian matrix, $\boldsymbol{1}_N$ is a column vector of dimension $N$ consisting of 1, 
and the temporal affinity $\W=\{w_{ij}\}_{i,j=1}^N$ is defined as:
\begin{equation}
    w_{ij}\coloneqq\left\{\begin{array}{lr}
    1,     &  \text{if }|i-j|\leq\frac{s}{2},\\
    0,     & \text{otherwise,}
    \end{array}\right.
\end{equation}
where $s$ is the size of a temporal sliding window. %
The temporal Laplacian regularizer $r(\Z)$ conforms the similarity relationship among the learned representations to the pre-defined affinity $\W$, which geometrically governs the smoothness of learned representations along the temporal dimension.

By taking into account the temporal continuity prior, we formulate an optimization problem as follows:
\begin{align}
\label{eq:joint_Z_pi_temporal}
\begin{aligned}
    &\mathop{\min}_{\Z,\C}~\lambda_1\|\Z-\Z\C\|_F^2+\Omega(\C) + \lambda_2 r(\Z),\\
    &~\text{s.t.}~\Diag(\C)=\vzero,
\end{aligned}
\end{align}
where $\lambda_2 >0$ is a hyper-parameter.

In addition to the temporal Laplacian regularizer on $\Z$, we also incorporate temporal constraints on the expressive coefficients $c_{ij}$ by masking the coefficients $\{c_{ij}\}_{{|i-j|>\tau}}$ to $0$, 
that is to restrict the expressive coefficients to be nonzero only within a temporal neighborhood $|i-j|\leq\tau$. Such a simple mask operation directly leverages the temporal continuity prior of HMS to enhance the subspace-preserving property of the self-expressive coefficients. 
%
%
Formally, the mask operation is equivalent to a hard constraint
$\Omega:\mathbb{R}^{N\times N}\xrightarrow{}\mathbb{R}\cup\{+\infty\}$, which is defined as
\[
\Omega(\C) = 
\begin{cases}
0, & \text{if } c_{ij} = 0 \text{ for all } |i-j| > \tau,\\
+\infty, & \text{otherwise}.
\end{cases}
\]
Clearly it 
enforces the temporal locality of the self-expressive coefficients.

\myparagraph{Learning structured representations by coding rate regularization}
Although problem \eqref{eq:joint_Z_pi_temporal} looks appealing, unfortunately, there exist undesired trivial solutions 
$(\Z_\star, \C_\star)$
that all embeddings are collapsed, as justified in~\cite{Haeffele:ICLR21,Meng:ICLR25}.\footnote{
The existence of trivial optimal solutions often leads to an over-smoothing issue. For example, the over-smoothing issue in graph neural networks results in indistinguishable node embeddings~\cite{Li:AAAI18}.  
}
%
To avoid the collapsed solution, inspired by \cite{Yu:NIPS20, Meng:ICLR25}, we regularize the optimization problem \eqref{eq:joint_Z_pi_temporal} by maximizing the total coding rate $\rho(\Z,\epsilon)$ of learned representations $\Z$, \ie, 
\begin{align}
\label{Eq:TDSC}
\begin{aligned}
    \mathop{\min}_{\Z,\C}\quad& - \rho(\Z,\epsilon) + \lambda_1 \|\Z-\Z\C\|_F^2 + 
    \Omega(\C) + \lambda_2 r(\Z),\\
    \mathrm{s.t.} \quad & \Diag(\C)=\vzero, ~~
    \|\z_i\|_2^2=1, ~~ \text{for}~i=1,\cdots,N,
\end{aligned}
\end{align}
where $\lambda_1>0$ and $\lambda_2>0$ are two hyper-parameters.

Intuitively, 
the total coding rate measures the volume of all data~\cite{Ma:TPAMI07}, thus maximizing the total coding rate is to effectively 
get rid of the representation collapse issue in representation learning. 
%
Formally, we have the following proposition demonstrating that the optimal representation $\Z_\star$ in problem (\ref{Eq:TDSC}) is able to avoid representation collapse and align with a UoS distribution under mild conditions. 

\begin{proposition}
\label{proposition_1}
Let $\mathbf{M}_\star \coloneqq (\I-\C_\star)(\I-\C_\star)^\top + \frac{\lambda_2}{\lambda_1}\mathbf{L}$ and $\alpha=\frac{d}{N\epsilon^2}$, the optimal solution $\{\Z_\star, \C_\star\}$ of problem (\ref{Eq:TDSC}) has the following properties:
a) $\Z_\star^\top\Z_\star$ and $\mathbf{M}_\star$ are aligned in the same eigenspaces; and 
b) $\mathrm{rank}(\Z_\star)=\rank$, and the singular values $\sigma_{\Z_\star}^{(i)}=\sqrt{\frac{1}{2\lambda_1\sigma_{\mathbf{M}_\star}^{(i)}+\nu_\star}-\frac{1}{\alpha}} >0 $ for all $i=1,\ldots,\rank$ provided that $\lambda_1< \frac{1}{2\sigma_{\max}(\mathbf{M}_\star)}\frac{\alpha^2}{\alpha+\min\left\{\frac{d}{N},1\right\}}$, where $\nu_\star$ is the dual optimal solution.
\end{proposition}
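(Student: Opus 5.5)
The plan is to treat problem (\ref{Eq:TDSC}) as a problem in $\Z$ alone with $\C$ fixed at $\C_\star$, and then reduce it to a scalar problem over the singular values of $\Z$. With $\C_\star$ fixed, $\Omega(\C_\star)$ is constant. Using $\|\Z-\Z\C_\star\|_F^2=\tr(\Z(\I-\C_\star)(\I-\C_\star)^\top\Z^\top)$ and $r(\Z)=\tr(\Z\mathbf{L}\Z^\top)$, the $\Z$-subproblem becomes
\begin{equation*}
\min_{\Z}\; -\tfrac{1}{2}\log\det(\I+\alpha\Z\Z^\top)+\lambda_1\tr(\Z\mathbf{M}_\star\Z^\top).
\end{equation*}
As in \cite{Meng:ICLR25}, I would replace the $N$ unit-norm column constraints by the single constraint $\|\Z\|_F^2=N$; this is what a single dual variable $\nu_\star$ in the statement indicates. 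I would state explicitly that the proposition concerns this relaxed problem, or equivalently the case where the relaxed optimum is feasible for the original constraints.

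\textbf{Step 1 (alignment, part a).} Write $\Z=\mathbf{U}\boldsymbol{\Sigma}\mathbf{V}^\top$ and $\mathbf{M}_\star=\mathbf{Q}\boldsymbol{\Lambda}\mathbf{Q}^\top$, where $\mathbf{M}_\star$ is symmetric PSD because $\mathbf{L}$ is. The log-det term and the Frobenius constraint depend only on the singular values $\sigma_i$ of $\Z$. For fixed singular values, von Neumann's trace inequality gives
\begin{equation*}
\tr(\mathbf{M}_\star\Z^\top\Z)\ge\textstyle\sum_i \sigma_i^2\,\mu_{(i)},
\end{equation*}
where the $\sigma_i^2$ are sorted decreasingly and the eigenvalues $\mu_{(i)}$ of $\mathbf{M}_\star$ increasingly. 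Equality holds iff $\Z^\top\Z$ and $\mathbf{M}_\star$ share eigenvectors with this reversed ordering. Any optimum must therefore attain equality, which gives a). (Where the statement writes $\sigma^{(i)}_{\mathbf{M}_\star}$, I read this as the eigenvalue of $\mathbf{M}_\star$ paired with the $i$-th singular value of $\Z$.)

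\textbf{Step 2 (scalar problem and KKT).} Set $s_i=\sigma_i^2\ge0$ for $i=1,\dots,r$ with $r=\rank$. The problem becomes
\begin{equation*}
\min_{s\ge 0}\;\sum_i\Big(-\tfrac12\log(1+\alpha s_i)+\lambda_1\mu_i s_i\Big)\quad\text{s.t.}\quad\sum_i s_i=N.
\end{equation*}
This is a convex problem with a linear constraint, so KKT conditions are necessary and sufficient. With multiplier $\nu/2$ for the equality constraint, stationarity for each $s_i>0$ reads
\begin{equation*}
\frac{\alpha}{2(1+\alpha s_i)}=\lambda_1\mu_i+\frac{\nu}{2},\qquad\text{so}\qquad s_i=\frac{1}{2\lambda_1\mu_i+\nu}-\frac1\alpha.
\end{equation*}
For $s_i=0$, complementary slackness requires $2\lambda_1\mu_i+\nu\ge\alpha$. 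The optimal $\nu_\star$ must be positive, since otherwise $s_i$ would be unbounded for small $\mu_i$.

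\textbf{Step 3 (full rank, the main obstacle).} The difficulty is that $\nu_\star$ is only implicit, so I need an a priori upper bound on it. Suppose first that all $r$ singular values are active. Summing the stationarity formula over $i$ and using $\sum_i s_i=N$ gives
\begin{equation*}
\sum_{i=1}^r\frac{1}{2\lambda_1\mu_i+\nu_\star}=N+\frac r\alpha.
\end{equation*}
Since $2\lambda_1\mu_i+\nu_\star\ge\nu_\star$, this yields
\begin{equation*}
\nu_\star\le\frac{r}{N+r/\alpha}=\frac{\alpha\,\min\{\tfrac dN,1\}}{\alpha+\min\{\tfrac dN,1\}}.
\end{equation*}
Positivity of every $s_i$ needs $2\lambda_1\sigma_{\max}(\mathbf{M}_\star)+\nu_\star<\alpha$. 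By the bound on $\nu_\star$, it suffices that
\begin{equation*}
2\lambda_1\sigma_{\max}(\mathbf{M}_\star)<\alpha-\frac{\alpha\,\min\{\tfrac dN,1\}}{\alpha+\min\{\tfrac dN,1\}}=\frac{\alpha^2}{\alpha+\min\{\tfrac dN,1\}},
\end{equation*}
which is exactly the hypothesis on $\lambda_1$.

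To rule out inactive indices rigorously, I will argue by contradiction. Let $k<r$ be the number of active indices. The same summation over the active set gives $\nu_\star\le k\alpha/(N\alpha+k)$. This bound is increasing in $k$, so it is at most the bound for $k=r$. The inactive index $j$ would then have $2\lambda_1\mu_j+\nu_\star<\alpha$, contradicting its KKT condition. Hence all $r$ indices are active, $\mathrm{rank}(\Z_\star)=r$, and $\sigma^{(i)}_{\Z_\star}=\sqrt{s_i}$ is given by the stated formula, which proves b).
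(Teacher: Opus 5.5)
Your proposal is correct and follows the paper's argument. You make the same reduction of the $\Z$-subproblem at fixed $\C_\star$ to $-\tfrac{1}{2}\log\det(\I+\alpha\Z\Z^\top)+\lambda_1\tr(\Z\mathbf{M}_\star\Z^\top)$ with $\mathbf{M}_\star\succeq\mathbf{0}$. Where the paper then simply invokes Theorems~1--2 of \cite{Meng:ICLR25}, you prove them yourself (trace-inequality alignment, KKT in $s_i=\sigma_i^2$, and the bound $\nu_\star\le \alpha\min\{\tfrac{d}{N},1\}/(\alpha+\min\{\tfrac{d}{N},1\})$). You also correctly make explicit the relaxation $\|\Z\|_F^2=N$, which the single multiplier $\nu_\star$ presupposes and the paper's proof does not spell out.

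One inessential slip: your reason for $\nu_\star>0$ only works when $\mathbf{M}_\star$ is singular. If its smallest eigenvalue is positive, a slightly negative $\nu$ still gives finite $s_i$. Positivity is never needed, though, since for $\nu_\star\le 0$ the upper bound on $\nu_\star$ used in Step~3 holds trivially.
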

\begin{proof}
    For any fixed feasible $\C$, the subproblem of 
    (\ref{Eq:TDSC}) with respect to $\Z$ can be equivalently reformulated as:
    \begin{align}
    \label{Eq:TDSC_reform}
    \begin{aligned}
        \mathop{\min}_{\Z}\quad& - \frac{1}{2}\log\det(\I+\alpha\Z\Z^\top) + \lambda_1\tr\Big(\big(\Z^\top\Z\big)\mathbf{M}\Big),\\
        \mathrm{s.t.}\quad &\|\z_i\|_2^2=1, \quad \text{for}~i=1,\cdots,N,\\
    \end{aligned}
    \end{align}
    by introducing $\mathbf{M} \coloneqq (\I-\C)(\I-\C)^\top + \frac{\lambda_2}{\lambda_1}\mathbf{L}$.
    Note that $\mathbf{M} \succeq \mathbf{0}$ since that both $(\I-\C)(\I-\C)^\top$ and the temporal Laplacian $\mathbf{L}$ are positive semi-definite.
    Therefore, problem~(\ref{Eq:TDSC_reform}) shares the same structure with the formulation in \cite{Meng:ICLR25}, and as a consequence the conclusions follow directly from Theorems~1 and~2 therein.
\end{proof}

We call the 
optimization problem \eqref{Eq:TDSC} a Temporal Deep Self-expressive subspace Clustering (\name{}). 
Comparing to \trc{} \cite{Meng:ICCV25-TR2C}, 
the main advantages of \name{} are as follows:
\begin{enumerate}
    \item \textbf{
    Theoretical guarantee.} From the perspective of modeling the UoS structure under fixed representations, the self-expressive model enjoys broader theoretical guarantees in subspace clustering than coding-rate-based approaches~\cite{Elham:TPAMI13, Soltanolkotabi:AS12,Li:JSTSP18}. Moreover, for learning representations that conform to a UoS distribution, \name{} is also better supported theoretically than the MCR$^2$ framework, as justified in Proposition \ref{proposition_1}.
    \item \textbf{Lower complexity.} \name{} requires only a single $\log\det(\cdot)$ computation, whereas \trc{} performs this operation $N+1$ times. As a result, \name{} substantially alleviates the computational and memory bottlenecks associated with repeated computation of $\log\det(\cdot)$.
    \item \textbf{Improved 
    stability.} As will be described in Section \ref{Sec:Optimization}, a Temporal Momentum Averaging (TMA) mechanism is introduced to stabilize the affinity evolution of the training process.
\end{enumerate}
We will thoroughly validate the effectiveness of \name{} and compare it to baselines (including \trc{}) in Section \ref{Sec:Experiments}.

\subsection{Optimization}
\label{Sec:Optimization}

%
Although the proposed \name{} appears to be rational, it is still quite challenging to solve due to the joint optimization of representations and their corresponding affinity matrix. 
Thus, rather than directly optimizing $\Z$ and $\C$, we consult to 
differential programming approach to solve the joint optimization problem. 
We re-parameterize the optimization variables $\Z$ and $\C$ 
via properly designed neural networks and optimize over the parameters of the neural networks. 

To be specific, we introduce an encoder $f(\cdot)$, a feature head $g(\cdot)$ and a cluster head $h(\cdot)$ to form our implementation framework. 
Formally, the outputs of feature head and cluster head are computed by:
\begin{align}
\label{Eq:Z and Y}
\begin{aligned}
    \z_i &= g\left(f\left(\boldsymbol{x}_i\right)\right),\\
    \y_i &= h\left(f\left(\boldsymbol{x}_i\right)\right),   
\end{aligned}
\end{align}
for all $i\in\{1,\cdots,N\}$.
Then, after the normalization of the outputs $\Tilde{\z}_i=\z_i/\|\z_i\|_2$ and $\Tilde{\y}_i=\y_i/\|\y_i\|_2$, 
we compute $\C$ by
\begin{equation}
\label{Eq:Gamma}
    \C=\mathcal{P}_\Xi(\Tilde{\Y}^\top\Tilde{\Y}),
\end{equation}
where $\mathcal{P}_\Xi(\cdot)$ is a Sinkhorn-Knopp projection~\cite{Cuturi:NIPS13-sinkhorn}, commonly used for the normalization of affinity matrices in clustering~\cite{Caron:NIPS20-SwAV,Ding:ICML22,Ding:ICCV23}.

\myparagraph{Temporal Momentum Averaging}
Empirically, we observe that the segmentation accuracy fluctuates notably during the training period. We attribute this to the following two causes: a) self-expressive coefficient matrix $\C$ encodes pairwise relationships among representations whose structure evolves with the network parameters and thus the optimal $\C_\star$ shifts continuously; b) the segmentation accuracy depends on the global connectivity of the affinity graph, which might be fragile: small perturbations on $\C$ may alter the components of the affinity graph, 
leading to significant changes in the segmentation accuracy.\footnote{Similarly, in \trc{}, the segmentation accuracy 
also suffers from fluctuation during the training period. 
} 
%


%
To further stabilize the training process and provide a temporally continuous mechanism 
for the training initialization, we adopt a Temporal Momentum Averaging (TMA) module for the update of $\C$.
Specifically, let $\bar\C$ denote the running mean of $\C$, initialized as $\mathbf{W}$ to encode temporal continuity.
For each iteration $t\in\{1,\cdots,T\}$, $\bar\C$ is updated by:
\begin{align}
\label{eq:TMA}
\begin{aligned}
    \bar\C &\xleftarrow{} (1-\alpha^{(t)}) \cdot\bar\C+\alpha^{(t)} \cdot\C,
\end{aligned}
\end{align}
where $\alpha^{(t)}=\alpha^{(0)}\cdot(1-t/T)$ is the momentum hyper-parameter.\footnote{We eliminate the diagonal elements of $\bar\C$ to fit the constraint of \eqref{Eq:TDSC}.}

The time-varying $\alpha$ is illustrated in Fig.~\ref{fig:framework}. 
As shown, $\bar\C$ is initialized as $\W$ to provide a temporally continuous affinity for training initialization.
At the early training stage, since that the network parameters are randomly initialized, the update of $\C$ is dominated by the optimization objective rather than by historical momentum.
As training progresses, historical information becomes progressively reliable, we thus increase the weight on momentum.

Equipped with the reparameterization and TMA, rather than directly optimizing over $\Z$ and $\C$, we instead update the parameters of the networks by back-propagation. Specifically, we denote the parameters of networks $f(\cdot)$, $g(\cdot)$ and $h(\cdot)$ as  $\boldsymbol \theta$. Then the parameters $\boldsymbol \theta$ can be updated by minimizing the following loss functions\footnote{Note that the hard constraint $\Omega(\C)$ does not appear in $\mathcal{L}$, as it is enforced directly by masking the coefficients $c_{ij}$ to zero for $|i-j|>\tau$ when constructing $\C$ in \eqref{Eq:Gamma}.}:
\begin{align}
\label{Eq:loss}
\mathcal{L}=-\mathcal{L}_{\rho}+ \lambda_1 \mathcal{L}_{\rho^c_{\text{Exp}}}+ \lambda_2 \mathcal{L}_{r},
\end{align}
where 
\begin{align}
\begin{aligned}
    \mathcal{L}_{\rho}&\coloneqq\frac{1}{2}\log\det(\textbf{I}+\frac{d}{N\epsilon^2}\Z(\boldsymbol \theta)\Z(\boldsymbol \theta)^\top),\\
    \mathcal{L}_{r}&\coloneqq \tr(\Z(\boldsymbol \theta)\mathbf{L}\Z(\boldsymbol \theta)^\top),\\
    \mathcal{L}_{\rho^c_{\text{Exp}}}&\coloneqq \|\Z(\boldsymbol \theta)-\Z(\boldsymbol \theta)\bar\C(\boldsymbol \theta)\|_F^2.
\end{aligned}
\end{align}
%

Finally, %
having the affinity $\bar\A(\boldsymbol \theta)=(|\bar\C(\boldsymbol \theta)|+|\bar\C(\boldsymbol \theta)^\top|)/2$, 
we apply spectral clustering \cite{Shi:TPAMI00} to yield HMS results %
as in~\cite{Li:ICCV15-TSC,Dimiccoli:ICCV21-GCTSC}. %
For clarity, we illustrate the overall framework of our \name{} in Figure~\ref{fig:framework} and summarize the whole training procedure in Algorithm~\ref{alg:algorithm}.

\begin{figure}[h]
\centering
\resizebox{\linewidth}{!}{
\begin{minipage}{\linewidth}
\begin{algorithm}[H]
\caption{Our proposed \name{} for HMS}
\label{alg:algorithm}
\begin{algorithmic}[1] %
\linespread{1.1} %
\item[\textbf{Input:}] Input Features $\boldsymbol{X}\in \mathbb{R}^{D\times N}$, hyper-parameters $\lambda_1,\lambda_2$, number of iterations $T$, network parameters $\boldsymbol \theta$, %
learning rate $\eta$
\item[\textbf{Initialization:}] Randomly initialize parameters $\boldsymbol \theta$

\For{$t=1,\dots,T$} 
    \State\textit{\# Forward propagation}
    \State Compute $\boldsymbol{Z(\boldsymbol \theta)}$ and  $\boldsymbol{Y(\boldsymbol \theta)}$ by (\ref{Eq:Z and Y})

    \State Compute $\C(\boldsymbol \theta)$ by (\ref{Eq:Gamma})

    \State Apply temporal momentum averaging by \eqref{eq:TMA}
    
    \State\textit{\# Backward propagation}
    \State Compute loss $\mathcal{L}$ by (\ref{Eq:loss})
    \State Compute $\nabla_{\boldsymbol \theta} \doteq \frac{\partial \mathcal{L}}{\partial {\boldsymbol \theta}} $

    \State Set ${\boldsymbol \theta} \gets {\boldsymbol \theta} - \eta \cdot  \nabla_{\boldsymbol \theta}$

\EndFor
\item[\textbf{Test:}]  Apply spectral clustering on $\bar\A(\boldsymbol \theta)$. %
\end{algorithmic}
\end{algorithm}
\end{minipage}
}
\end{figure}

\section{Experiments}
\label{Sec:Experiments}
For a comprehensive evaluation of the effectiveness of our proposed approach, following~\cite{Li:ICCV15-TSC,Wang:AAAI18-TSS,Wang:TIP18-LTS,Zhou:CVPR20-MTS,Zhou:TPAMI22,Bai:TIP22,Dimiccoli:ICCV21-GCTSC}, we conduct experiments on five widely used benchmark datasets: the Weizmann action dataset (Weiz)~\cite{Gorelick:TPAMI07-Weiz}, the Keck gesture dataset (Keck)~\cite{Jiang:TPAMI12-Keck}, the UT interaction dataset (UT)~\cite{Ryoo:ICCV09-Ut}, the Multi-model Action Detection dataset (MAD)~\cite{Huang:ECCV14-MAD}, and the UCF-11 YouTube action dataset (YouTube)~\cite{Liu:CVPR09-youtube}.

\subsection{Datasets Description}
\label{Sec:Datasets}
\myparagraph{Weizmann action dataset (Weiz)}
The Weizmann dataset~\cite{Gorelick:TPAMI07-Weiz} consists of 90 video sequences recorded from 9 subjects, each performing 10 different actions, \eg, running, jumping, skipping, waving, and bending.
Each sequence has a spatial resolution of $180\times 144$ pixels and is captured at 50 frames per second (FPS).\\
\myparagraph{Keck gesture dataset (Keck)}
The Keck dataset~\cite{Jiang:TPAMI12-Keck} comprises 56 action sequences collected from 4 subjects, each performing 14 gestures derived from military hand signals, \eg, turning left, turning right, starting, and speeding up.
Each video has a spatial resolution of $640\times 480$ pixels and is recorded at 15 FPS.\\
\myparagraph{UT interaction dataset (UT)}
The UT dataset~\cite{Ryoo:ICCV09-Ut} contains 10 video sequences, where each sequence captures two people performing 6 distinct interaction motions, \eg, shaking hands, hugging, pointing, and kicking.
Each video is recorded at a resolution of $720\times 480$ pixels with a frame rate of 30 FPS.\\
\myparagraph{Multi-model Action Detection dataset (MAD)}
The MAD dataset~\cite{Huang:ECCV14-MAD} consists of 40 video sequences recorded from 20 subjects, with each subject contributing 2 sequences, and each sequence containing 35 motion instances.
All videos are captured at a resolution of $320\times 240$ pixels with a frame rate of 30 FPS, and the dataset provides both depth and skeleton modalities.\\
\myparagraph{UCF-11 YouTube action dataset (YouTube)}
The YouTube dataset~\cite{Liu:CVPR09-youtube} comprises 1,168 video sequences covering 11 action categories, \eg, biking, diving, and golf swinging.
Each sequence is recorded at a resolution of $320\times 240$ pixels with a frame rate of 30 FPS.
In this dataset, human motions are often intertwined with interacting objects, such as horses, bikes, or dogs.

For a fair comparison to the baseline methods, we restrict the number of motion categories in the Keck, MAD, and YouTube datasets to 10.
For the Keck, Weiz, and YouTube datasets, where each clip contains only a single human motion, we construct longer sequences by concatenating the original videos and conduct our experiments on these concatenated sequences.

\subsection{Experimental Setups}

For datasets Weiz, Keck, UT, and MAD, in line with previous work, we represent each frame by a 324-dimensional HoG descriptor~\cite{Zhu:CVPR06-HoG}.\footnote{The HoG features are available at \url{https://github.com/wanglichenxj/Low-Rank-Transfer-Human-Motion-Segmentation}.}
For the YouTube dataset, following \cite{Zhou:TPAMI22}, we use 1000-dimensional features extracted from a pretrained VGG-16 network~\cite{Simonyan:ICLR15}.
To further investigate the limit of our proposed approach, we also evaluate the performance using features extracted from the image encoder of pretrained CLIP model~\cite{Radford:ICML21-CLIP} and DINOv2 model~\cite{oquab:TMLR24-dinov2}. A summary of the datasets and feature configurations is provided in Table~\ref{tab:dataset-information}.

\begin{table}[h]
  \centering
  \caption{\textbf{Summary of the datasets and feature configurations.} We show the number of sequences, the number of motions, the maximal number of frames of all the sequences, dimension of HoG, VGG, CLIP, and DINOv2 features.}
  \resizebox{\linewidth}{!}{
    \begin{tabular}{lP{1.cm}P{1.cm}P{1.cm}P{1.5cm}P{1.7cm}}
    \toprule
    \rowcolor{myGray} &  &  &  & Dim & Dim \\
    \rowcolor{myGray} \multirow{-2}{*}{Datasets} & \multirow{-2}{*}{\#Seq}& \multirow{-2}{*}{\#Motions} & \multirow{-2}{*}{\#Frames} & (HoG/VGG) & (CLIP/DINOv2) \\
    \midrule
    Weiz & 9     & 10  & 826  & 324 (HoG) & 768/1024 \\
    Keck  & 4     & 10  & 1245  & 324 (HoG) & 768/1024 \\
    UT    & 10    & 6  &  650  & 324 (HoG) & - \\
    MAD   & 40    & 10  & 1379  & 324 (HoG) & - \\
    YouTube   & 4    & 10  &  2572  & 1000 (VGG) & -\\
    \bottomrule
    \end{tabular}}
  \label{tab:dataset-information}%
\end{table}%

\begin{table*}[tbp]
  \centering
  \caption{\textbf{The performance of \name{} comparing to state-of-the-art algorithms.} The best and second best results are in \textbf{bold} and the third best result is \underline{underlined}.}
  \resizebox{\textwidth}{!}{
    \begin{tabular}{lccccccccccc}
    \toprule
     & & \multicolumn{2}{c}{\textbf{Weiz}} & \multicolumn{2}{c}{\textbf{Keck}} & \multicolumn{2}{c}{\textbf{UT}} & \multicolumn{2}{c}{\textbf{MAD}} & \multicolumn{2}{c}{\textbf{YouTube}}\\
\cmidrule(lr){3-4}\cmidrule(lr){5-6}\cmidrule(lr){7-8}\cmidrule(lr){9-10} \cmidrule(lr){11-12}
\multirow{-2}{*}{\textbf{Method}}& \multirow{-2}{*}{\textbf{Venue}} & ACC   & NMI   & ACC   & NMI   & ACC   & NMI   & ACC   & NMI & ACC   & NMI \\
    \midrule
    \multicolumn{2}{l}{\color{gray}{\textit{(Temporal) Subspace Clustering Approaches}}} & & & & & & & & & & \\
    \quad LRR~\cite{Liu:ICML10} & ICML'10  & 43.82  & 36.38  & 48.62  & 42.97  & 40.51  & 41.62  & 22.49  & 23.97  & - & -\\
    \quad RSC~\cite{Li:CVPR19-RSC} & CVPR'19  & 41.12  & 48.94  & 34.85  & 32.52  & 36.64  & 18.81  & 37.30  & 34.18  & - & -\\
    \quad SSC~\cite{Elhamifar:CVPR09} & CVPR'09  & 60.09  & 45.76  & 38.58  & 31.37  & 49.98  & 43.89  & 47.58  & 38.17  & - & -\\
    \quad LSR~\cite{Lu:ECCV12} & ECCV'12  & 50.93  & 50.91  & 45.48  & 48.94  & 43.22  & 51.83  & 36.67  & 39.79 & 93.16 & 96.64 \\
    \quad OSC~\cite{Tierney:CVPR14-OSC} & CVPR'14  & 70.47  & 52.16  & 59.31  & 43.93  & 68.77  & 58.46  & 55.89  & 43.27  & - & -\\
    \quad TSC~\cite{Li:ICCV15-TSC} & ICCV'15  & 61.11  & 81.99  & 47.81  & 71.29  & 53.40  & 75.93  & 55.56  & 77.21 & 90.40 & 95.00 \\
    \midrule
    \multicolumn{2}{l}{\color{gray}{\textit{Transferable Subspace Clustering Approaches}}} & & & & & & & & & & \\
    \quad TSS~\cite{Wang:AAAI18-TSS} & AAAI'18  & 62.08  & 85.09  & 53.95  & 80.49  & 59.44  & 78.78  & 57.92  & 82.86  & 62.94 & 88.20 \\
    \quad LTS~\cite{Wang:TIP18-LTS} & TIP'18  & 63.91  & 85.99  & 55.09  & 82.26  & 62.99  & 81.28  & 59.80  & 82.11 & 62.26 & 88.98 \\
    \quad MTS~\cite{Zhou:CVPR20-MTS} & CVPR'20  & 64.36  & 83.71  & 60.10  & 82.70  & 64.33  & 82.39  & 61.63  & 83.14  & 64.40 & 81.41\\
    \quad CDMS~\cite{Zhou:TPAMI22} & TPAMI'22 & 65.05 & 83.75 & 62.07 & 80.40 & 66.43 & 83.06 & 65.36 & 82.51 & 67.98 & 91.33\\
    \midrule
    \multicolumn{3}{l}{\color{gray}{\textit{Representation Learning Based \& Recent Approaches}}} & & & & & & & & & \\
    \quad MLC~\cite{Ding:ICCV23}  & ICCV'23 & 37.30  & 45.86  & 47.29  & 49.78  & 45.79  & 35.30  & 30.27  & 29.40  & 94.82 & 97.30 \\
    \quad DGE~\cite{Dimiccoli:TIP20} & TIP'20 & - & - & 72.00 & 83.00 & - & -& 67.00 & 82.00 & - & - \\
    \quad DSAE~\cite{Bai:ICDM20} & ICDM'20 & 61.99 & 78.79 & 57.53 & 74.07 & 60.06 & 79.50 & 55.48 & 77.34 & - & - \\
    \quad VSDA~\cite{Bai:TIP22} & TIP'22 &62.87 & 79.92 & 58.04 & 73.97 & 62.03 & 82.26 & 56.06 & 77.70 & - & - \\
    \quad TL-SVM~\cite{Xing:TIP24-TLSVM} & TIP'24 & 79.23 & 83.21 & 78.42 & 83.27 & 81.42 & 83.24 & 80.23 & 81.57 & - & - \\
    \quad GCTSC~\cite{Dimiccoli:ICCV21-GCTSC}& ICCV'21 & 85.01  & 90.53  & 78.64  & 83.25  & 87.00  & 82.56  & 82.97  & 84.71  & 95.79 & 96.30 \\
    \quad \trc{}~\cite{Meng:ICCV25-TR2C}& ICCV'25  & $\underline{94.12}{\scriptstyle\pm1.20}$ & $\underline{95.91}{\scriptstyle\pm0.66}$ & $ \underline{83.50}{\scriptstyle\pm1.98}$ & $\underline{85.63}{\scriptstyle\pm0.86}$ & $\underline{93.54}{\scriptstyle\pm1.05}$ & $\underline{91.83}{\scriptstyle\pm0.65}$ & $\underline{83.08}{\scriptstyle\pm0.62}$ & $\textbf{86.86}{\scriptstyle\pm0.37}$ & $\underline{97.96}{\scriptstyle\pm1.53}$ & $\underline{98.96}{\scriptstyle\pm0.83}$ \\
    \quad \trc{}$\scriptstyle \text{+ TMA}$ & - & $\textbf{95.12}{\scriptstyle\pm0.36}$ & $\textbf{96.04}{\scriptstyle\pm0.27}$ & $\textbf{84.41}{\scriptstyle\pm1.36}$ & $\textbf{86.12}{\scriptstyle\pm0.34}$ & $\textbf{95.02}{\scriptstyle\pm0.56}$ & $\textbf{93.12}{\scriptstyle\pm0.63}$ & $\textbf{83.15}{\scriptstyle\pm0.48}$ & $\underline{86.51}{\scriptstyle\pm0.19}$ & $\textbf{99.74}{\scriptstyle\pm0.24}$ & $\textbf{99.65}{\scriptstyle\pm0.34}$\\
    \rowcolor{myGray}\quad \textbf{\name{}}  & Ours & $\textbf{95.57}{\scriptstyle\pm0.48}$ & $\textbf{96.87}{\scriptstyle\pm0.26}$ & $\textbf{86.84}{\scriptstyle\pm1.55}$ & $\textbf{87.43}{\scriptstyle\pm0.40}$ & $\textbf{96.94}{\scriptstyle\pm0.33}$ & $\textbf{95.55}{\scriptstyle\pm0.29}$ & $\textbf{84.20}{\scriptstyle\pm0.75}$ & $\underline{86.75}{\scriptstyle\pm0.39}$ & $\textbf{99.08}{\scriptstyle\pm1.84}$ & $\textbf{99.68}{\scriptstyle\pm0.63}$\\
    \bottomrule
    \end{tabular}%
    }
  \label{tab:benchmark-performance}%
\end{table*}%

We use clustering accuracy (ACC) and normalized mutual information (NMI) as evaluation metrics.
For each setting, we run the model with 5 different random seeds and report the mean and standard deviation of the results.

In all experiments, we use a lightweight neural network: the encoder $f(\cdot)$ is implemented as a two-layer Multi-Layer Perceptron (MLP), while the feature head $g(\cdot)$ and the cluster head $h(\cdot)$ are implemented as Fully Connected (FC) layers.
For all datasets, the sliding window size is fixed as $s=2$ and the initial momentum hyper-parameter is fixed as $\alpha^{(0)}=0.9$.
The hidden and output dimensions of MLP and FC are fixed as 512 and 64, respectively.
The sensitivity of the model to hyper-parameters is illustrated in Figure~\ref{fig:sensitivity} and the detailed hyper-parameter configurations are provided in Appendix~\ref{sec:hyperpara}.

\subsection{Comparative Results}
We compare the performance of \name{} on HoG features against classical subspace clustering algorithms, \eg, LRR~\cite{Liu:ICML10}, RSC~\cite{Li:CVPR19-RSC}, SSC~\cite{Elhamifar:CVPR09}, LSR~\cite{Lu:ECCV12}, temporally regularized subspace clustering algorithms, \eg, OSC~\cite{Tierney:CVPR14-OSC}, TSC~\cite{Li:ICCV15-TSC}, transferable subspace clustering algorithms, \eg, TSS~\cite{Wang:AAAI18-TSS}, LTS~\cite{Wang:TIP18-LTS}, MTS~\cite{Zhou:CVPR20-MTS}, CDMS~\cite{Zhou:TPAMI22}, and representation learning assisted temporal clustering algorithms, \eg, DGE~\cite{Dimiccoli:TIP20}, DSAE~\cite{Bai:ICDM20}, VSDA~\cite{Bai:TIP22}, GCTSC~\cite{Dimiccoli:ICCV21-GCTSC}. All results for these baselines are taken from~\cite{Dimiccoli:ICCV21-GCTSC,Zhou:TPAMI22,Xing:TIP24-TLSVM}.

As shown in Table~\ref{tab:benchmark-performance}, our proposed \name{} outperforms all the baselines on the HMS benchmarks.
Specifically, although \name{} does not exploit 
extra data through a transfer learning strategy, its clustering accuracy still exceeds that of the transfer-learning-based method by about 20\%.
Moreover, \name{} consistently outperforms other representation learning assisted HMS approaches, namely~\cite{Dimiccoli:TIP20,Bai:ICDM20,Bai:TIP22,Dimiccoli:ICCV21-GCTSC,Meng:ICCV25-TR2C}. This may stem from the fact that existing representation learning schemes based on self-similarity, auto-encoders, or graph consistency only indirectly influence the structure of representations and thus have limited impact on improving separability.
In contrast, \name{} explicitly enforces the UoS distribution on learned representations, which leads to better cluster separation.


\subsection{More Evaluations}

\myparagraph{Qualitative evaluation of representations}
To obtain a more intuitive comparison, we visualize both the raw HoG inputs and the representations learned by GCTSC~\cite{Dimiccoli:ICCV21-GCTSC} and \name{} on all datasets.
For each dataset, we select a subset containing 3 motion categories to keep the visualization legible.
We use Principal Component Analysis (PCA) for dimensionality reduction, as it performs a linear projection and thus preserves the intrinsic structure of the data.
 
As can be observed in Figure~\ref{fig:PCA}, the original HoG features (top row) and representations learned by GCTSC (middle row) tend to lie on one-dimensional manifolds, without forming a clearly separated UoS structure.
This explains why previous HMS methods struggle to achieve satisfactory results when directly operating on the raw HoG descriptors.
In contrast, the \name{} representations shown in the last row exhibit a much clearer union-of-orthogonal-subspaces pattern, where different motions are more distinctly separated and therefore easier to segment.
The striking difference between visualization results highlights that enforcing a UoS distribution on the features is a crucial factor underlying the strong performance of \name{}.

\begin{figure*}[tbp]
    \centering
    \resizebox{\linewidth}{!}{%
    \begin{tabular}{ccccc}
    \includegraphics[trim=90pt 100pt 90pt 50pt, clip,width=0.2\textwidth]{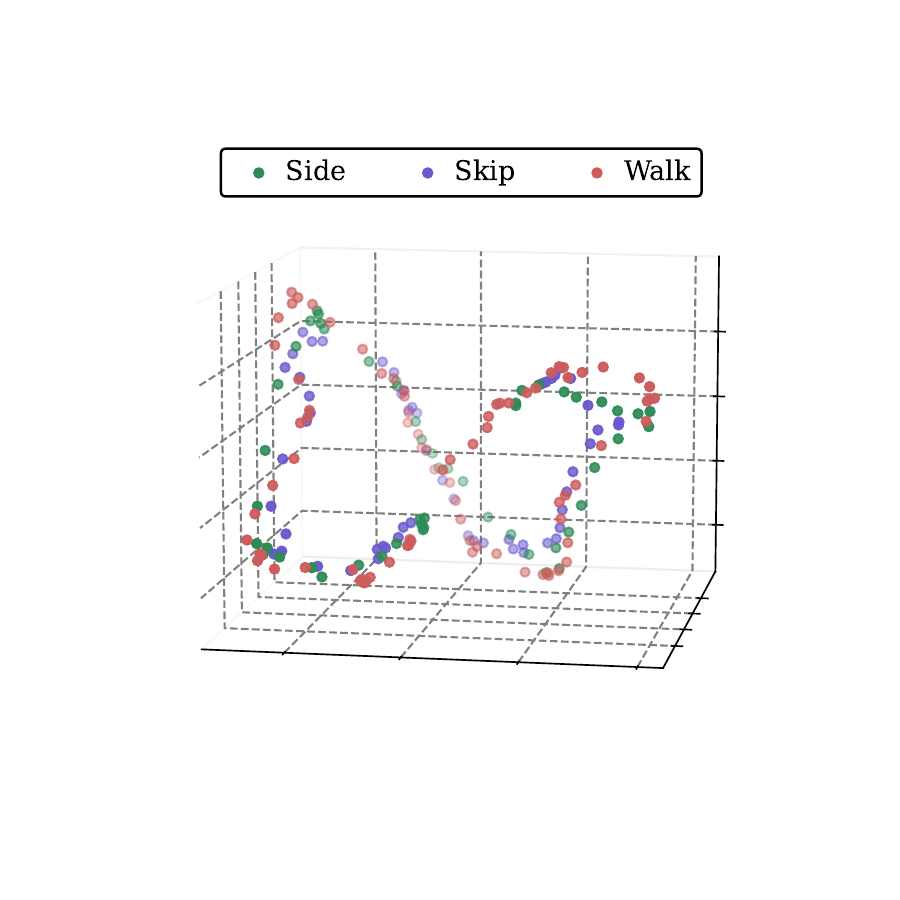}&
    \includegraphics[trim=90pt 100pt 90pt 50pt, clip,width=0.2\textwidth]{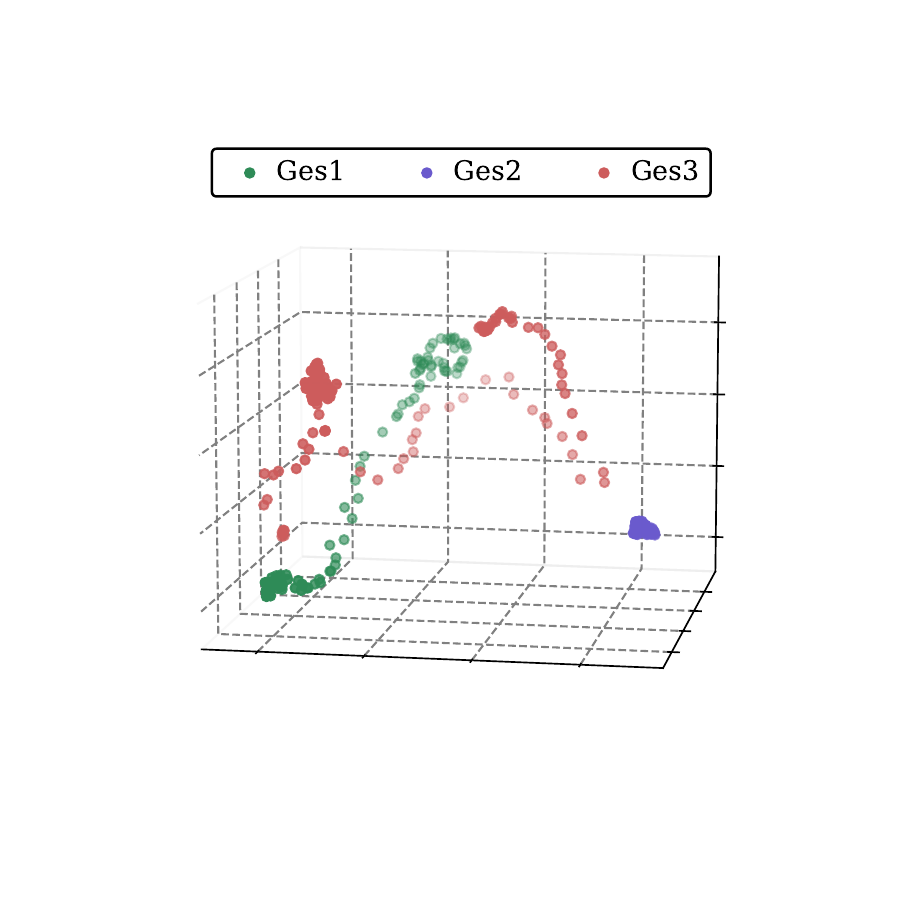}&
    \includegraphics[trim=90pt 100pt 90pt 50pt, clip,width=0.2\textwidth]{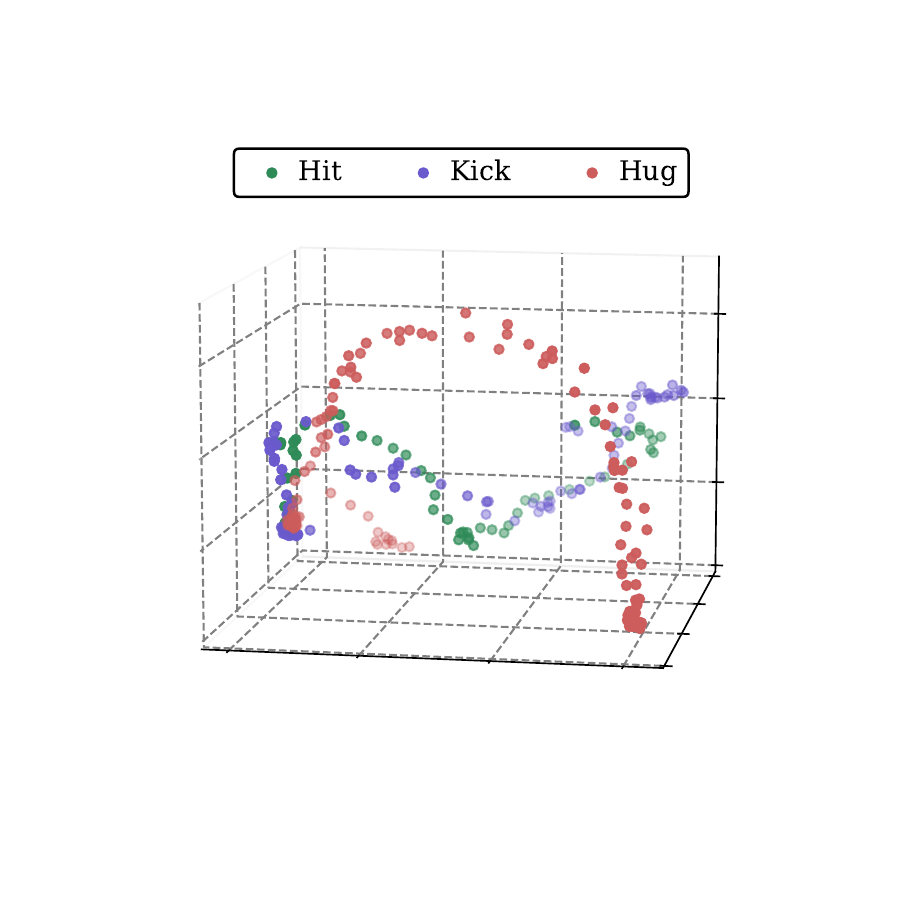}&
    \includegraphics[trim=90pt 100pt 90pt 50pt, clip,width=0.2\textwidth]{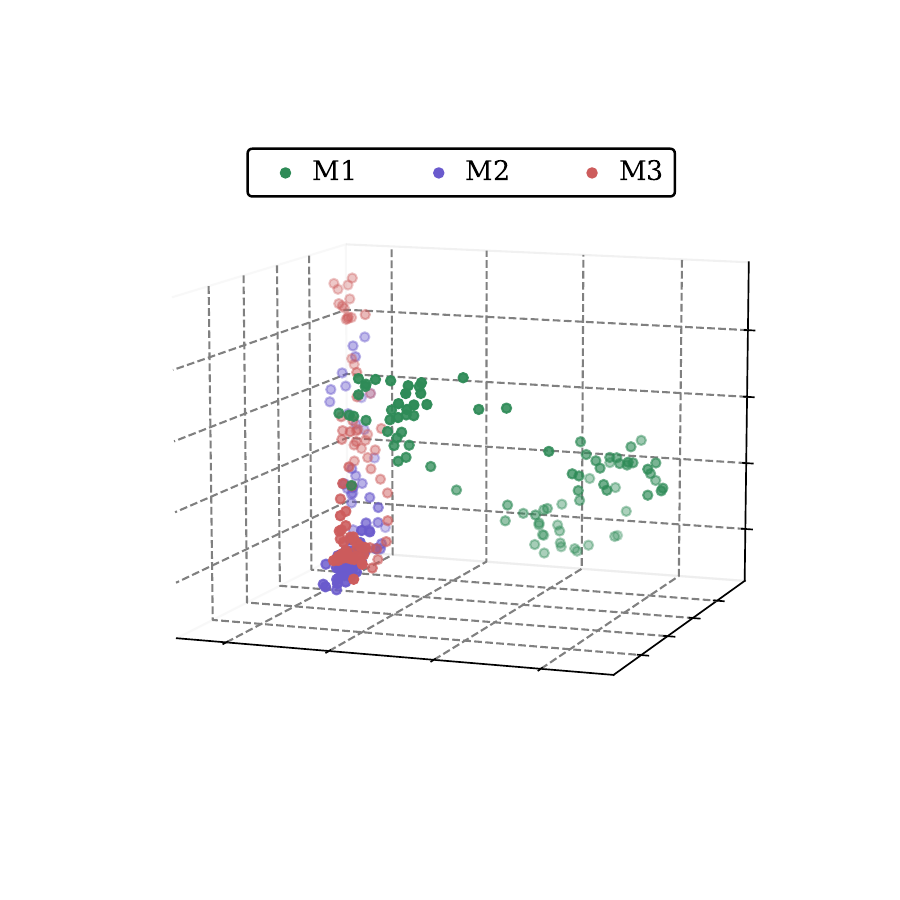}&
    \includegraphics[trim=90pt 100pt 90pt 50pt, clip,width=0.2\textwidth]{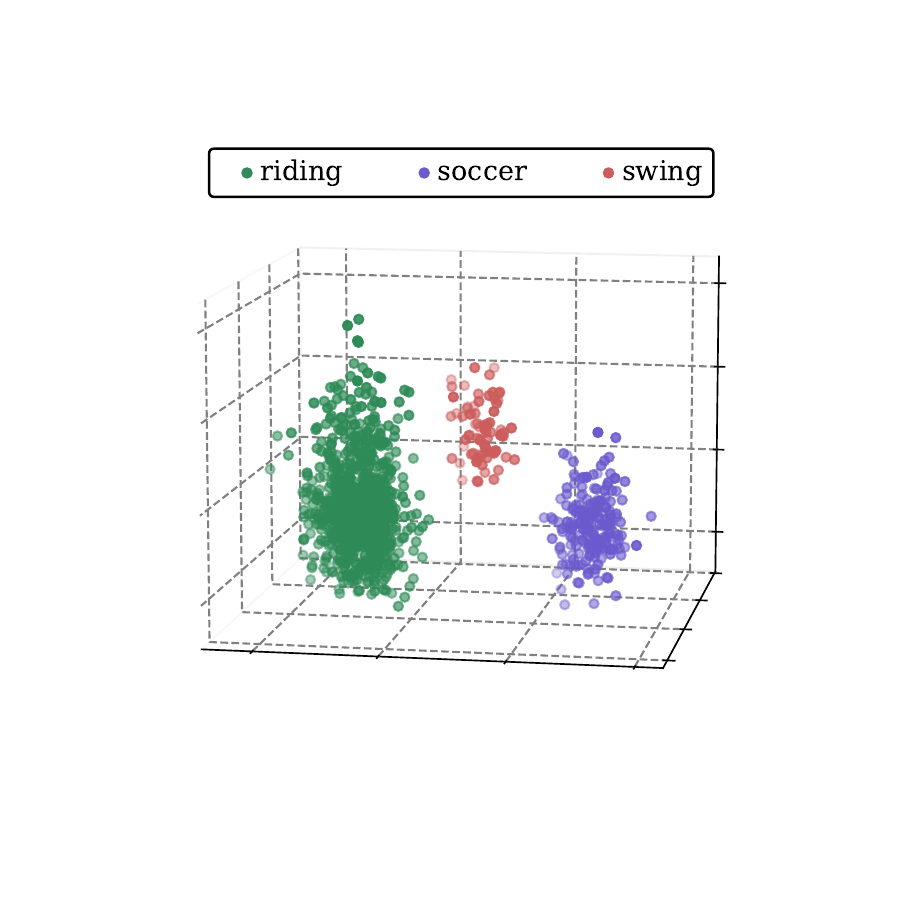}\\
    \includegraphics[trim=90pt 100pt 90pt 100pt, clip,width=0.2\textwidth]{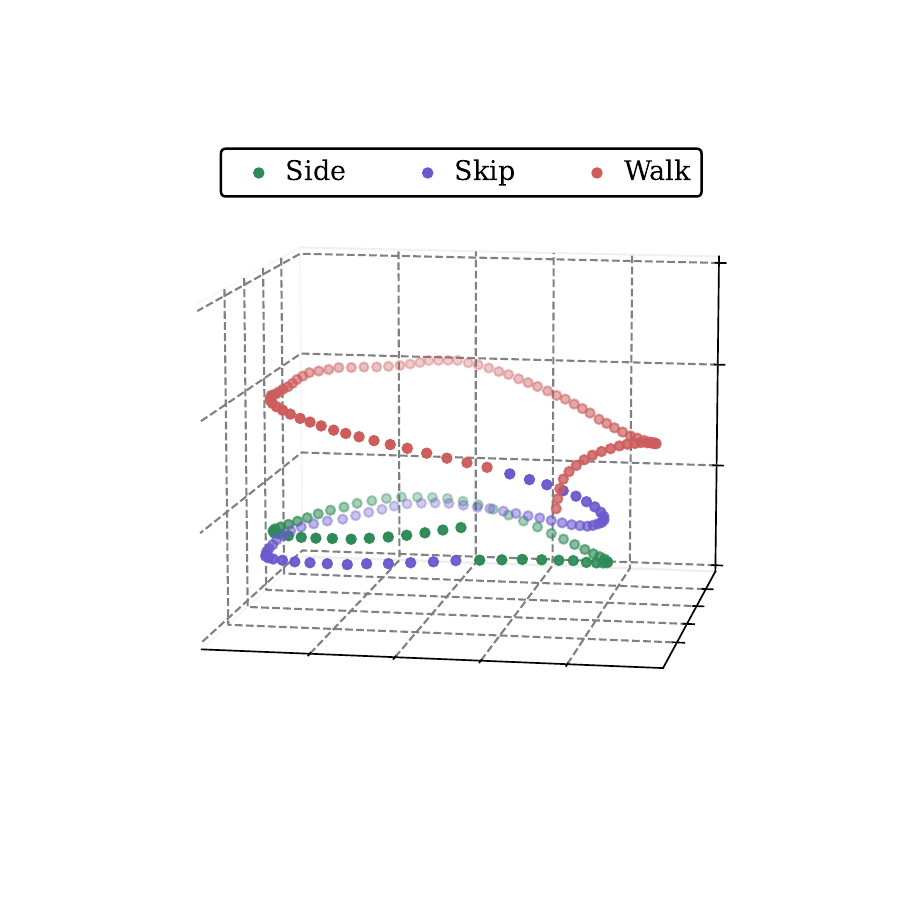}&
    \includegraphics[trim=90pt 100pt 90pt 100pt, clip,width=0.2\textwidth]{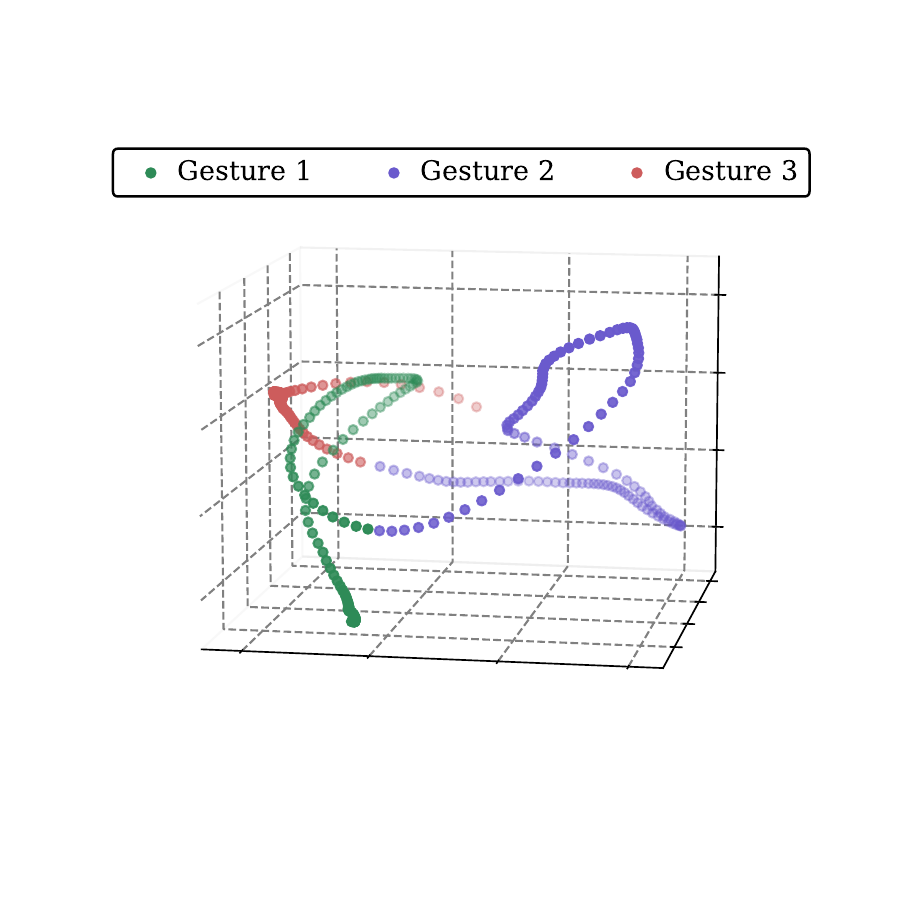}&
    \includegraphics[trim=90pt 100pt 90pt 100pt, clip,width=0.2\textwidth]{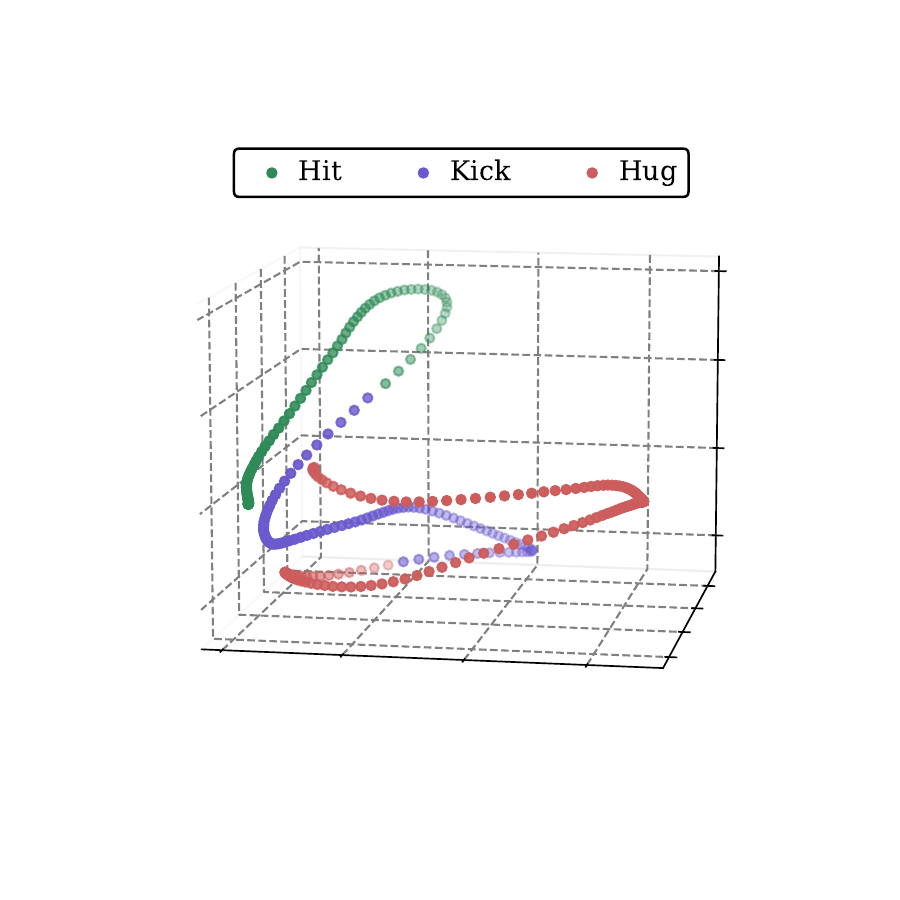}&
    \includegraphics[trim=90pt 100pt 90pt 100pt, clip,width=0.2\textwidth]{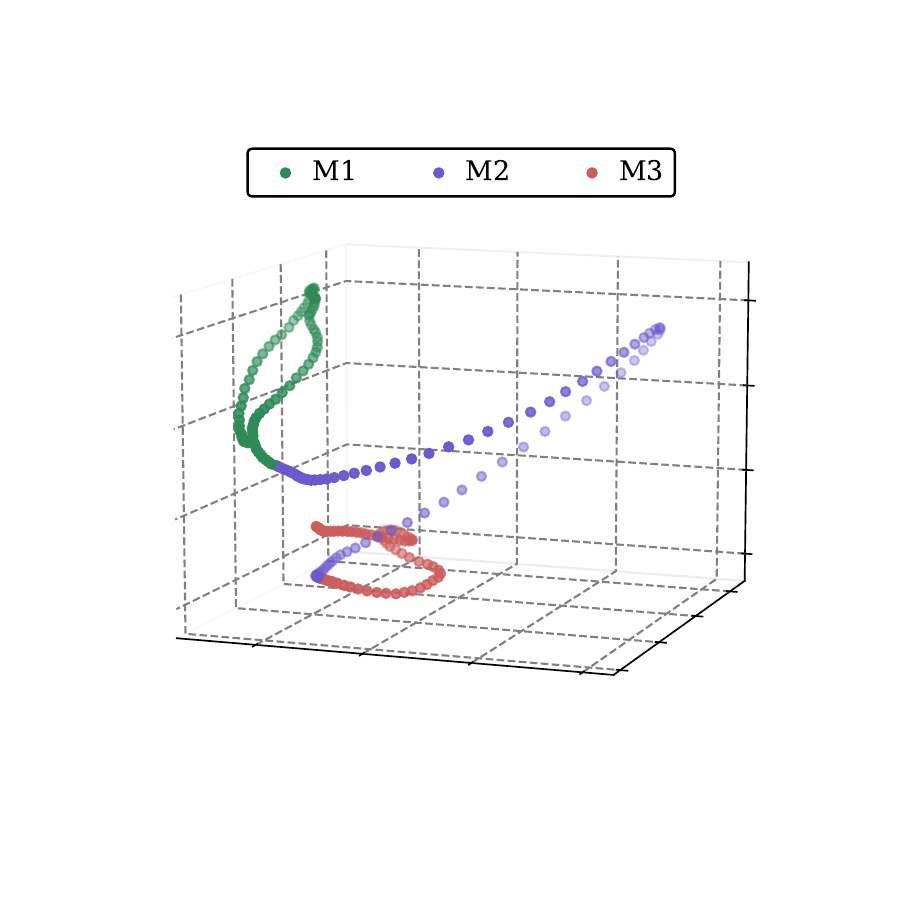}&
    \includegraphics[trim=90pt 100pt 90pt 100pt, clip,width=0.2\textwidth]{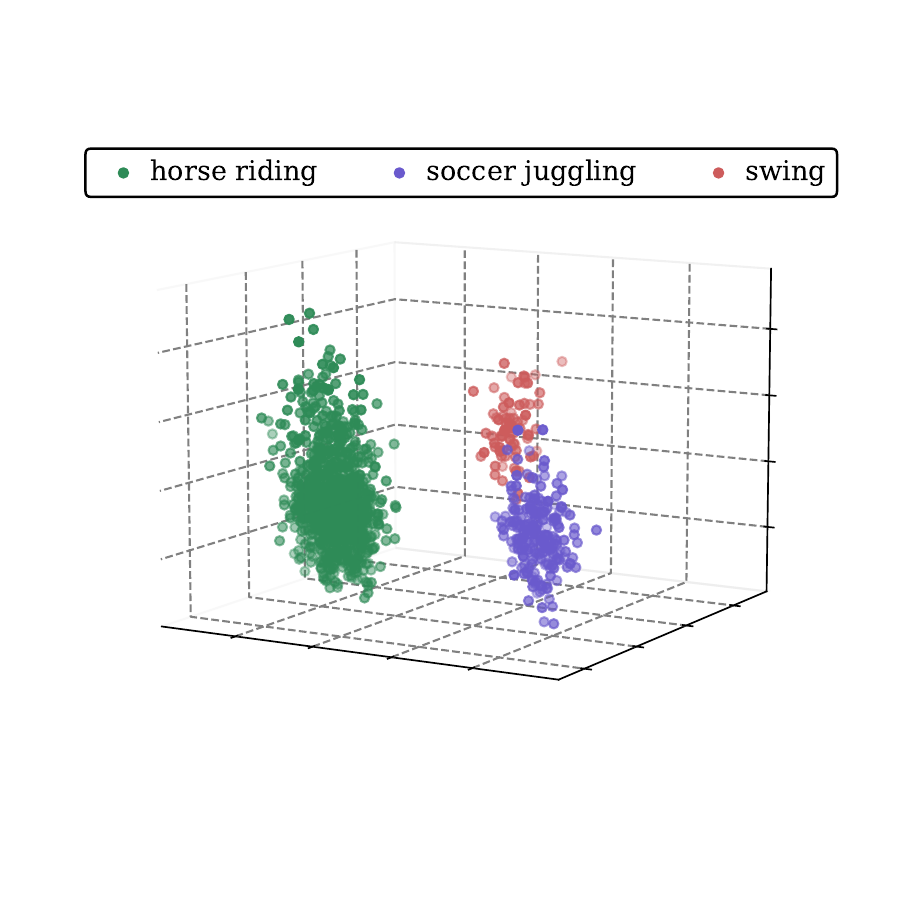}\\
    \subcaptionbox{Weiz}{
            \includegraphics[trim=90pt 100pt 90pt 100pt, clip,width=0.2\textwidth]{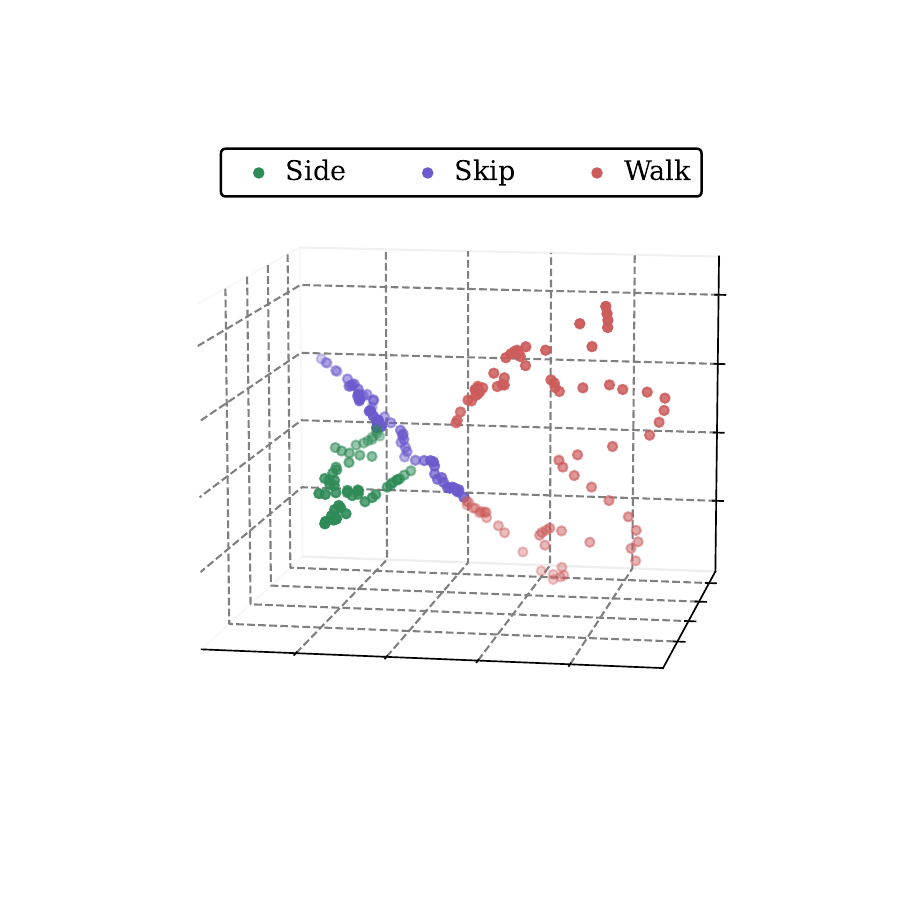}}&
    \subcaptionbox{Keck}{
            \includegraphics[trim=90pt 100pt 90pt 100pt, clip,width=0.2\textwidth]{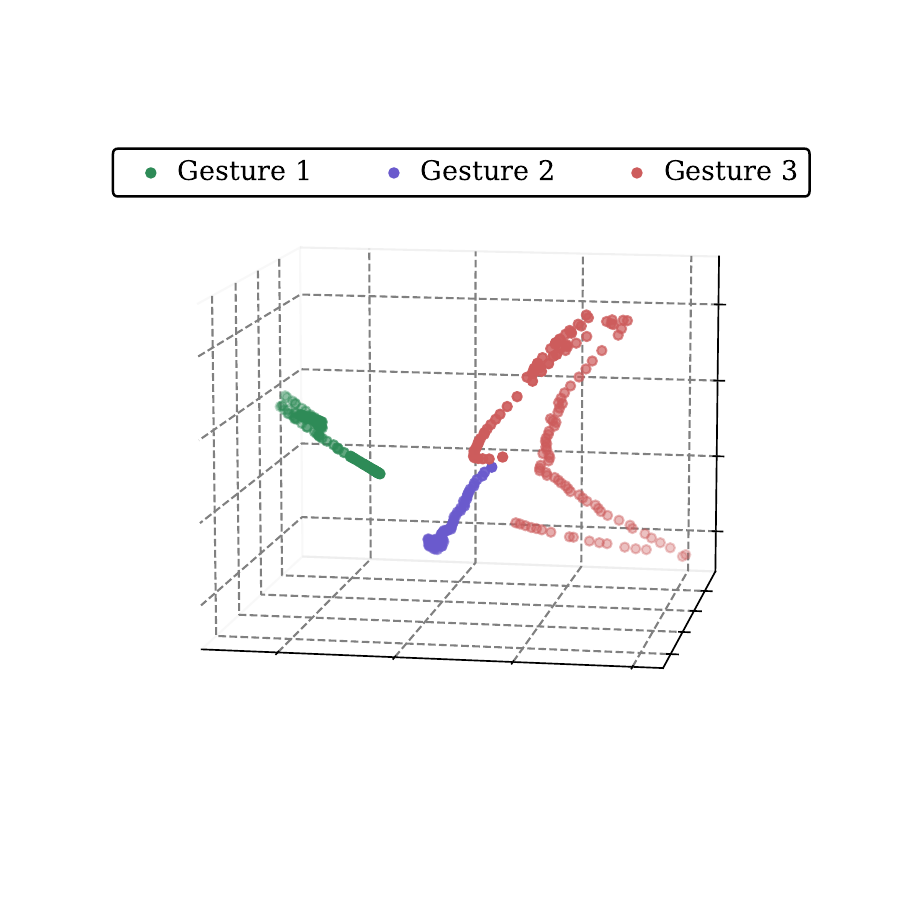}}&
    \subcaptionbox{UT}{
            \includegraphics[trim=90pt 100pt 90pt 100pt, clip,width=0.2\textwidth]{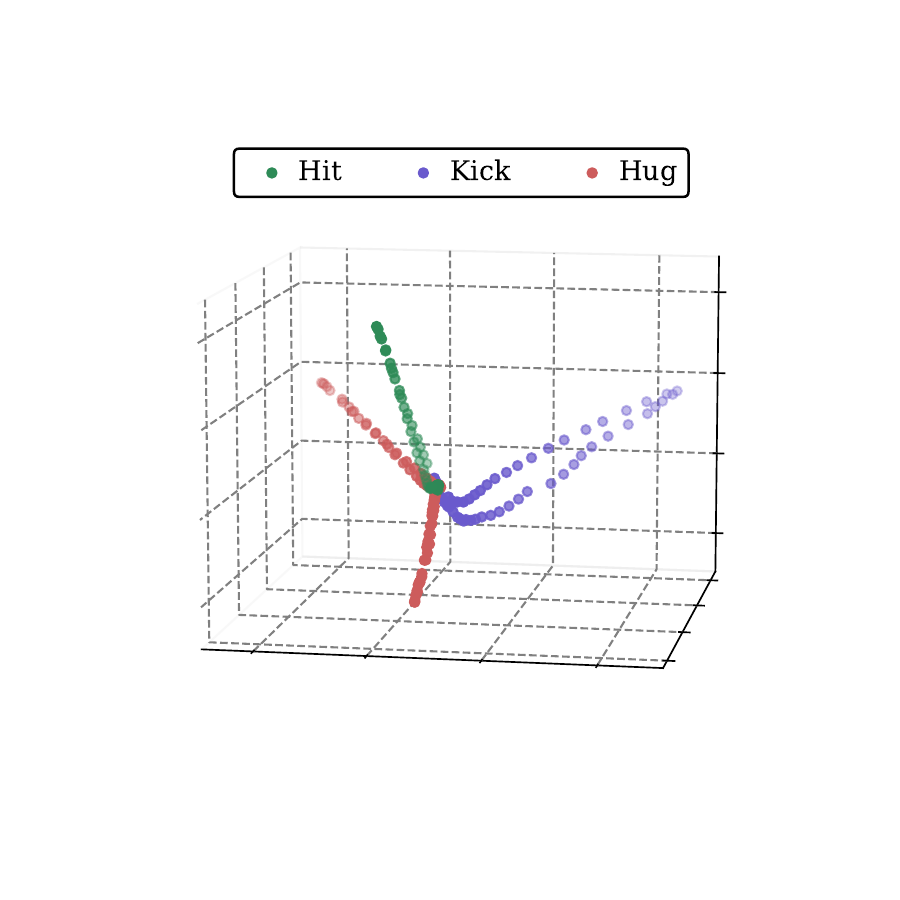}}&
    \subcaptionbox{MAD}{
           \includegraphics[trim=90pt 100pt 90pt 100pt, clip,width=0.2\textwidth]{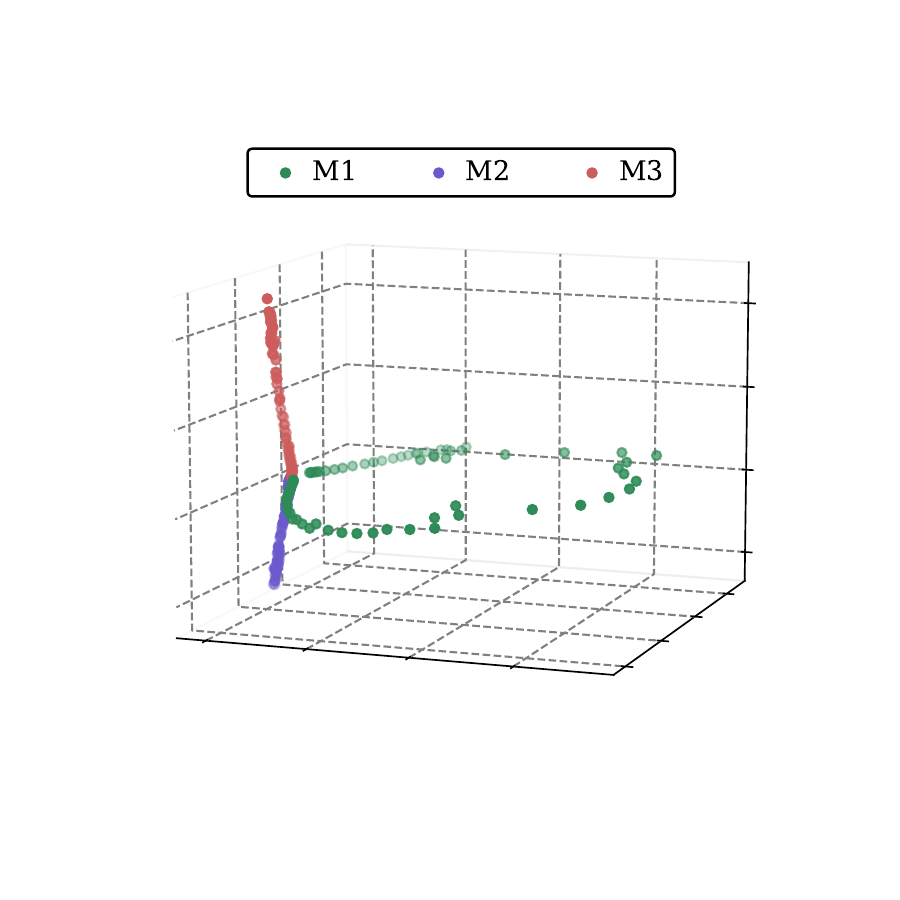}}&
    \subcaptionbox{YouTube}{
            \includegraphics[trim=90pt 100pt 90pt 100pt, clip,width=0.2\textwidth]{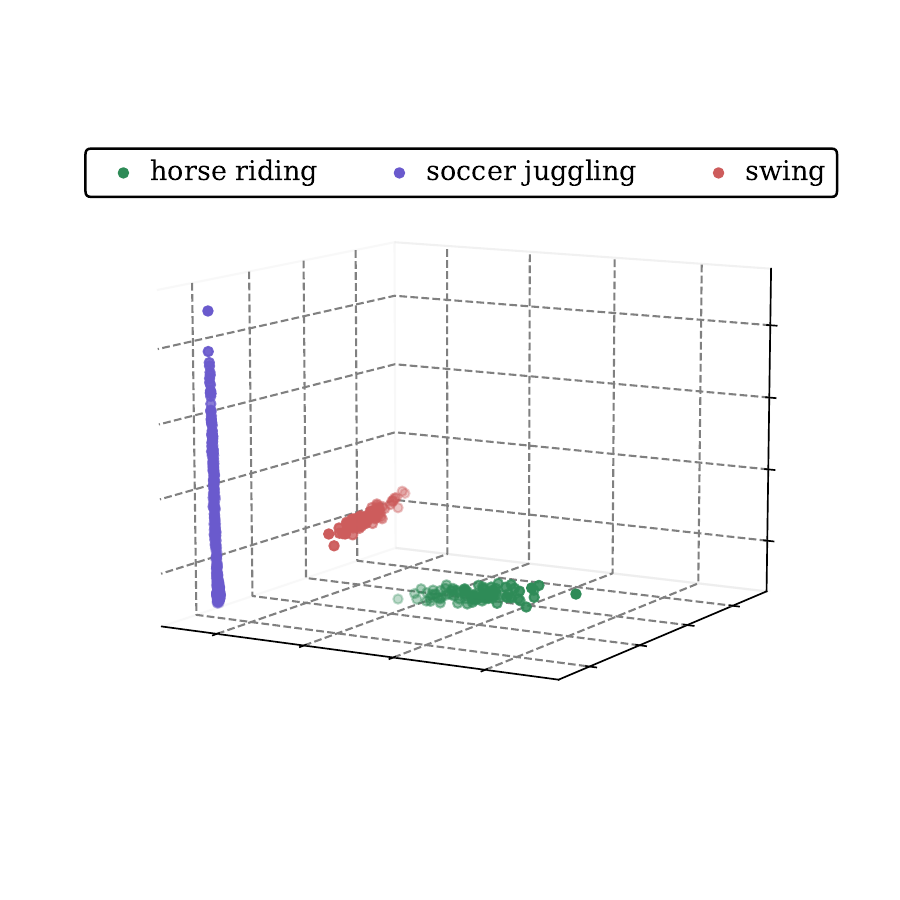}}
    \end{tabular}
    }
\caption{\textbf{Visualization of features via PCA.} First row: input HoG features. Second row: GCTSC representations. Last row: \name{} representations.
Experiments are conducted on the first sequence of each dataset.}
\label{fig:PCA}
\end{figure*}

\myparagraph{Robustness evaluation of representations}
Intuitively, when the learned representations conform to a UoS structure, they should exhibit strong robustness against random noise perturbations.
To examine this property, we add isotropic Gaussian noise $\mathcal{N}(\boldsymbol{0},\sigma\textbf{I})$ with noise level $\sigma>0$ to the representations learned by \name{}, \trc{}~\cite{Meng:ICCV25-TR2C}, GCTSC~\cite{Dimiccoli:ICCV21-GCTSC}, and the input HoG features.
We then perform clustering on these corrupted features using LSR~\cite{Lu:ECCV12} and report the clustering accuracy together with the standard deviation over 5 random runs.\footnote{For LSR, we report the best result over a set of hyper-parameter values $\gamma\in\{0.01,0.05,0.1,0.2,0.5,1,2,5,10,20,50,100,200,400,800\}$.}
As illustrated in Figure~\ref{fig:noise}, although GCTSC attains highly competitive accuracy in the noise-free case ($\sigma=0$), its performance drops sharply on all datasets once Gaussian noise is introduced.
In contrast, the representations obtained by \name{} and \trc{} are substantially more robust to noise contamination.
On average, the clustering accuracy for \name{} and \trc{} decrease by at most 25\% across all datasets, whereas the accuracies for GCTSC and HoG features degrade to the level of random guessing.

\begin{figure*}[htb]
    \centering
    \begin{subfigure}[b]{0.2\linewidth}
           \includegraphics[trim=0pt 0pt 25pt 25pt, clip, height=85pt]{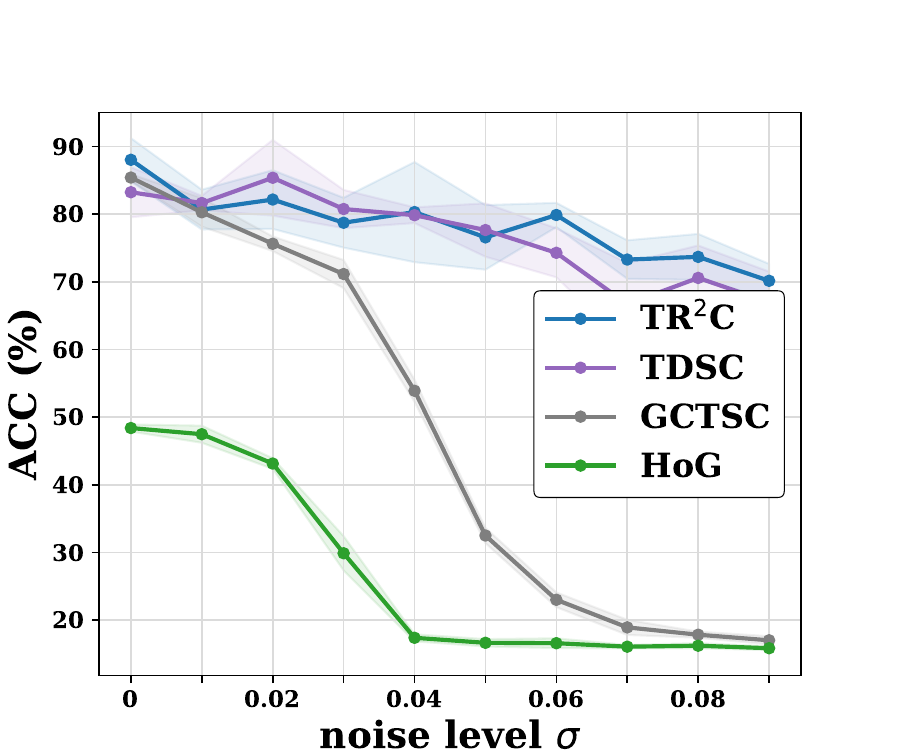}
            \caption{Weiz}
    \end{subfigure}\hfill
    \begin{subfigure}[b]{0.2\linewidth}
           \includegraphics[trim=20pt 0pt 25pt 25pt, clip, height=85pt]{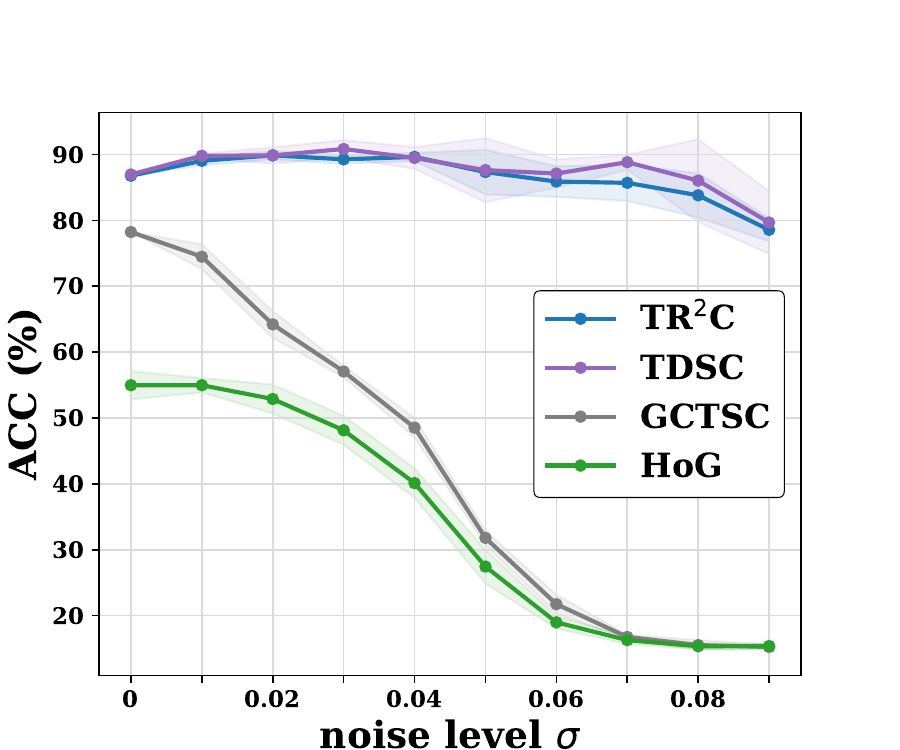}
            \caption{Keck}
    \end{subfigure}\hfill
    \begin{subfigure}[b]{0.2\linewidth}
           \includegraphics[trim=20pt 0pt 25pt 25pt, clip, height=85pt]{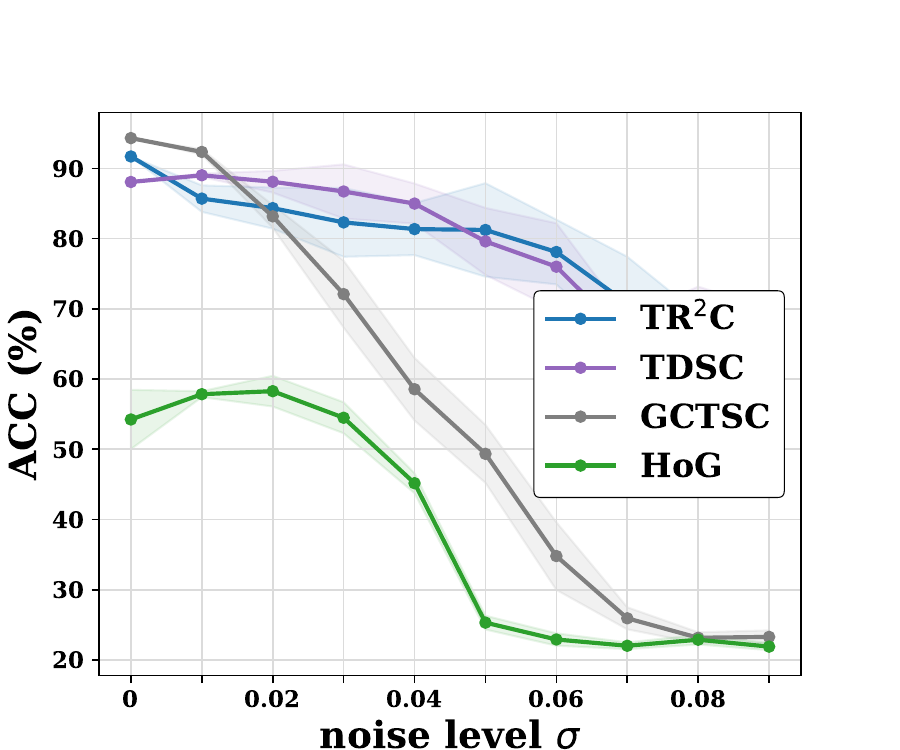}
            \caption{UT}
    \end{subfigure}\hfill
    \begin{subfigure}[b]{0.2\linewidth}
           \includegraphics[trim=20pt 0pt 25pt 25pt, clip, height=85pt]{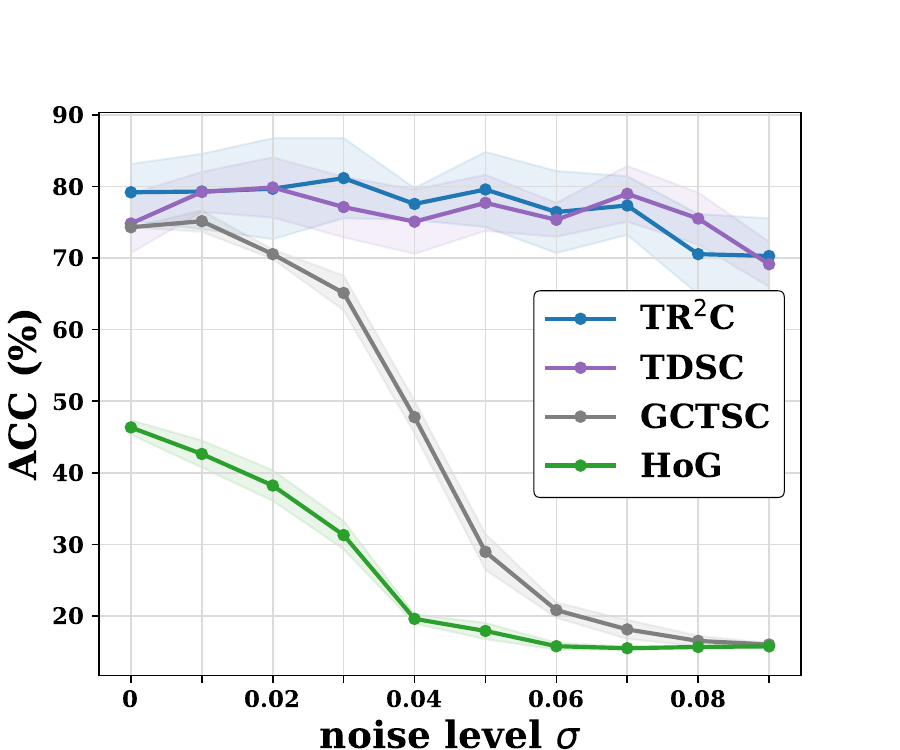}
            \caption{MAD}
    \end{subfigure}\hfill
    \begin{subfigure}[b]{0.2\linewidth}
           \includegraphics[trim=20pt 0pt 25pt 25pt, clip, height=85pt]{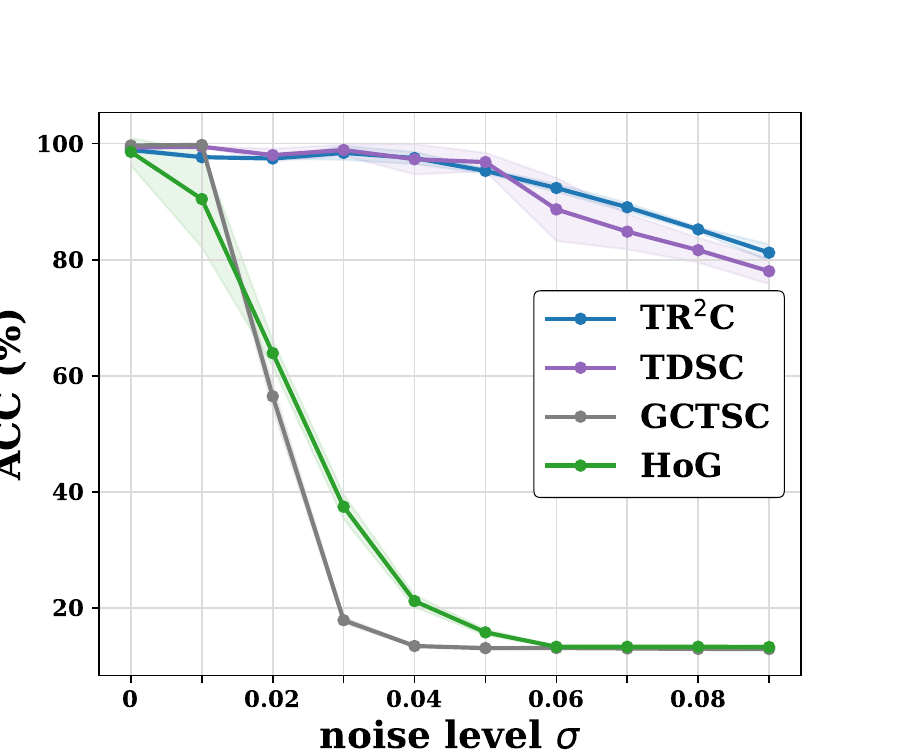}
            \caption{YouTube}
    \end{subfigure}
\caption{\textbf{Clustering accuracy of features under noise corruption.} We test on features learned by \name{}, GCTSC, and HoG features, using LSR for clustering.}  
\label{fig:noise}
\end{figure*}

\myparagraph{Ablation study}
To assess the contribution of each component in \name{}, we conduct an ablation study, with the results summarized in Table~\ref{tab:ablation}.
Overall, the terms $\mathcal{L}_\rho$, $\mathcal{L}_{\rho^c_{\text{Exp}}}$, and $\mathcal{L}_r$ all prove to be essential for learning well-structured representations that are beneficial for HMS.
As shown in the table, removing either $\mathcal{L}_\rho$ or $\mathcal{L}_{\rho^c_{\text{Exp}}}$ leads to a clear degradation in performance: the representations tend to become overly compact when $\mathcal{L}_\rho$ is absent (line 2), or over-segmented when $\mathcal{L}_{\rho^c_{\text{Exp}}}$ is omitted (line 3).
Furthermore, the temporal regularization term $\mathcal{L}_r$ also plays a crucial role (line 1), confirming that enforcing temporal consistency in the representations is an indispensable prior for the HMS task.

\begin{table}[tbp]
  \centering
  \caption{\textbf{Ablation study.} We report the average performance of all the sequences on Weiz, Keck and UT datasets after repeating experiments with five trials.}
  \resizebox{\linewidth}{!}{
    \begin{tabular}{P{0.6cm}P{0.6cm}P{0.6cm}P{0.8cm}P{0.8cm}P{0.8cm}P{0.8cm}P{0.8cm}P{0.8cm}}
    \toprule
    \multicolumn{3}{c}{Loss} & \multicolumn{2}{c}{Weiz} & \multicolumn{2}{c}{Keck}& \multicolumn{2}{c}{UT}\\
     $\mathcal{L}_{\rho}$ & $\mathcal{L}_{\rho^c_{\text{Exp}}}$&$\mathcal{L}_{r}$ & ACC   & NMI & ACC   & NMI & ACC   & NMI\\
    \cmidrule(lr){1-3}\cmidrule(lr){4-5}\cmidrule(lr){6-7}\cmidrule(lr){8-9}\\
    $\checkmark$& $\checkmark$&   & 61.56  & 67.86   & 55.98 & 63.11  & 80.65 &  83.00 \\
         & $\checkmark$&  $\checkmark$& 87.37  & 92.34 & 81.64 & 85.66& 88.84 & 88.35 \\
     $\checkmark$&    & $\checkmark$& 73.20  & 79.87& 70.51 & 74.85 & 85.09 & 84.59 \\
     $\checkmark$ &  &  & 54.21  & 61.52 & 52.35 & 54.13 & 61.94 & 59.63 \\
         &  $\checkmark$ & & 84.55  & 89.30  & 68.72 & 81.25 & 86.62 &  87.01 \\
      &    & $\checkmark$ & 54.75  & 65.30& 44.53 & 46.16& 73.92 & 69.75 \\
     \rowcolor{myGray}$\checkmark$& $\checkmark$& $\checkmark$& \textbf{95.57}  & \textbf{96.87}& \textbf{86.84} & \textbf{87.43}& \textbf{96.94} & \textbf{95.55} \\ 
    \bottomrule
    \end{tabular}%
    }
  \label{tab:ablation}%
\end{table}%

\myparagraph{Effectiveness of TMA and temporally masking}
To further validate the effectiveness of TMA and the temporal masking strategy in computing self-expressive coefficient matrices, 
we conduct a set of experiments by ablating the respective operations. For TMA, we compare the performance of \name{} 
when using fixed momentum parameter $\alpha^{(t)}\in\{0.1, 0.5, 0.9\}$ and when using 
time-varying 
momentum parameter with different $\alpha^{(0)}\in\{0.1, 0.5, 0.9\}$. For the temporal masking strategy, we evaluate the performance of \name{} under the mask range $\tau$ varying from 50 to 500.

As shown in Table~\ref{tab:TMA_and_masking} (left), incorporating TMA improves clustering performance by stabilizing the learning of the self-expressive matrix in most settings. This improvement is observed both when the momentum $\alpha^{(t)}$ is set to a constant value and when the initial momentum $\alpha^{(0)}$ is varied, while our default configuration ($\alpha^{(0)}=0.9$ with time-varying schedule) works reliably across all datasets.

From Table~\ref{tab:TMA_and_masking} (right), we observe that the optimal value of the temporal masking hyper-parameter $\tau$ is dataset-dependent. Nevertheless, all configurations with temporal masking outperform the variant without masking ($\tau=$N/A), which confirms the necessity of incorporating temporal neighboring prior into the self-expressive model for HMS.


\begin{table}[htbp]
  \centering
  \caption{\textbf{Evaluation on effectiveness of TMA and temporal masking for computing self-expressive coefficient matrix.} We report the average performance of all the sequences on datasets Weiz, Keck and UT over five trials.}
  \resizebox{\linewidth}{!}{
    \begin{tabular}{lccc|lccc}
    \toprule
    TMA      & Weiz  & Keck & UT & Masking  & Weiz  & Keck & UT\\
    \midrule
    N/A  & 93.20  & 84.50  & 94.16 & N/A & 92.39  & 81.72 & 90.33  \\
    $\alpha^{(t)}=0.1$ & 87.86  & \underline{85.97}& 94.12   & $\tau=500$ & 93.69  & 83.88 & 96.78  \\
    $\alpha^{(t)}=0.5$ & 94.88  & 84.62 & 94.05  & $\tau=200$ & 94.51  & 86.12  & 96.79 \\
    $\alpha^{(t)}=0.9$ & \underline{95.04}  & 85.32 & \underline{94.35}  & $\tau=150$ & 94.91  & 86.83  & \underline{96.91} \\
    $\alpha^{(0)}=0.1$ & 82.37  & 81.79 & 92.23  & $\tau=100$ & 95.54  & \textbf{87.60}  & 96.89 \\
    $\alpha^{(0)}=0.5$ & 94.43  & 83.83 & 94.24  & $\tau=75$ & \textbf{96.30}  & \underline{87.54}  & 96.90 \\
    \rowcolor{myGray} $\alpha^{(0)}=0.9$ & \textbf{95.57}  & \textbf{86.84} & \textbf{96.94} & $\tau=50$ & \underline{95.57}  & 86.84 &  \textbf{96.94} \\
    \bottomrule
    \end{tabular}}
  \label{tab:TMA_and_masking}%
\end{table}%

\myparagraph{Sensitivity to hyper-parameters}
We further investigate the sensitivity of our \name{} to the hyper-parameters $\lambda_1$, $\lambda_2$, the sliding-window size $s$, and the coding precision $\epsilon$.
As illustrated in Figure~\ref{fig:sensitivity}, our \name{} still 
yields near-optimal results (with above 90\% clustering accuracy), especially on the UT dataset,
while the hyper-parameters of \name{} vary in a relatively broad range. 
More specifically, since that $\mathcal{L}_{\rho^c_{\text{Exp}}}$ (weighted by $\lambda_1$) and $\mathcal{L}_r$ (weighted by $\lambda_2$) are both used 
to compress the representations toward linear subspaces and along the temporal dimension, respectively, we note that a relatively large $\lambda_2$ typically permits 
a smaller $\lambda_1$ to achieve optimal performance.
Overall, our proposed framework TDSC is insensitive 
to these hyper-parameters.

\begin{figure*}[ht]
\centering
    \begin{subfigure}[b]{0.24\linewidth}
           \includegraphics[trim=0pt 20pt 10pt 30pt, clip,width=\textwidth]{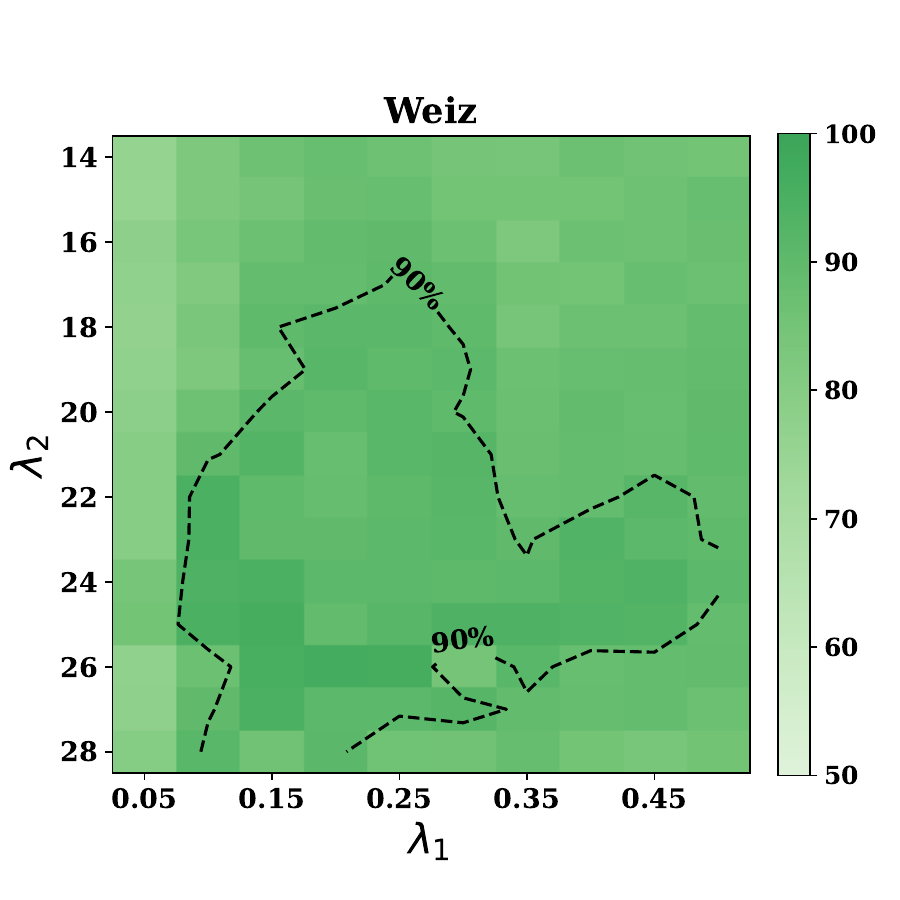}
    \end{subfigure}\hfill
    \begin{subfigure}[b]{0.24\linewidth}
           \includegraphics[trim=0pt 20pt 10pt 30pt, clip,width=\textwidth]{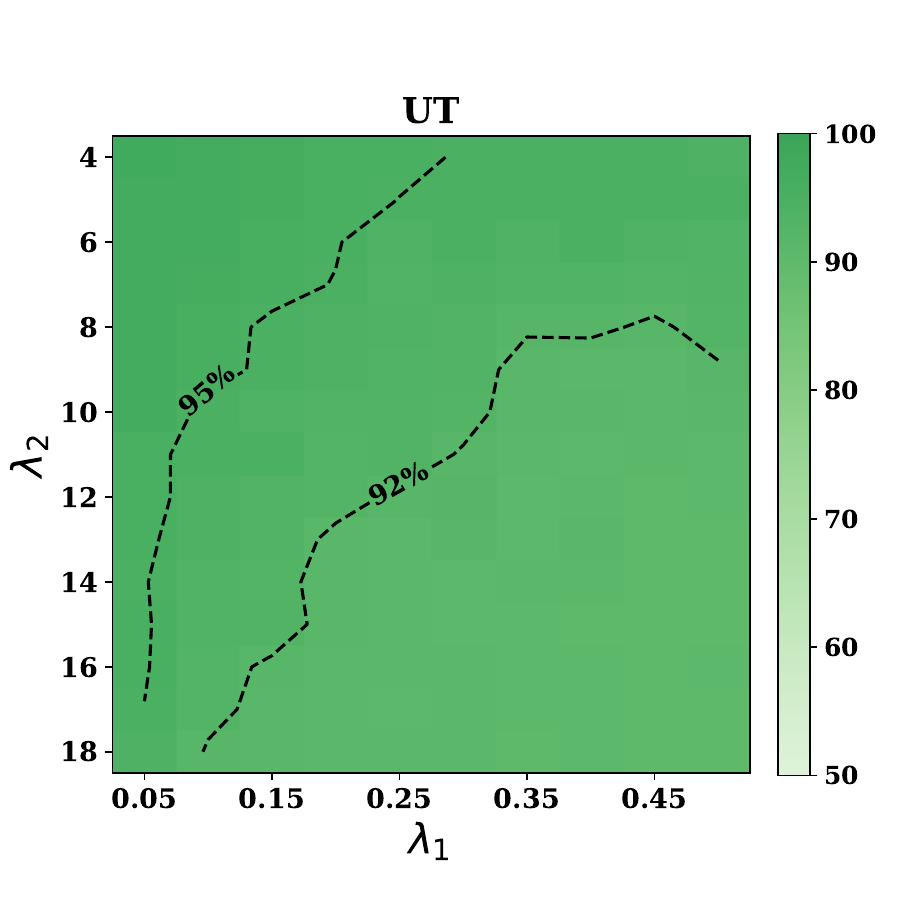}
    \end{subfigure}\hfill
    \begin{subfigure}[b]{0.24\linewidth}
           \includegraphics[trim=0pt 20pt 10pt 30pt, clip,width=\textwidth]{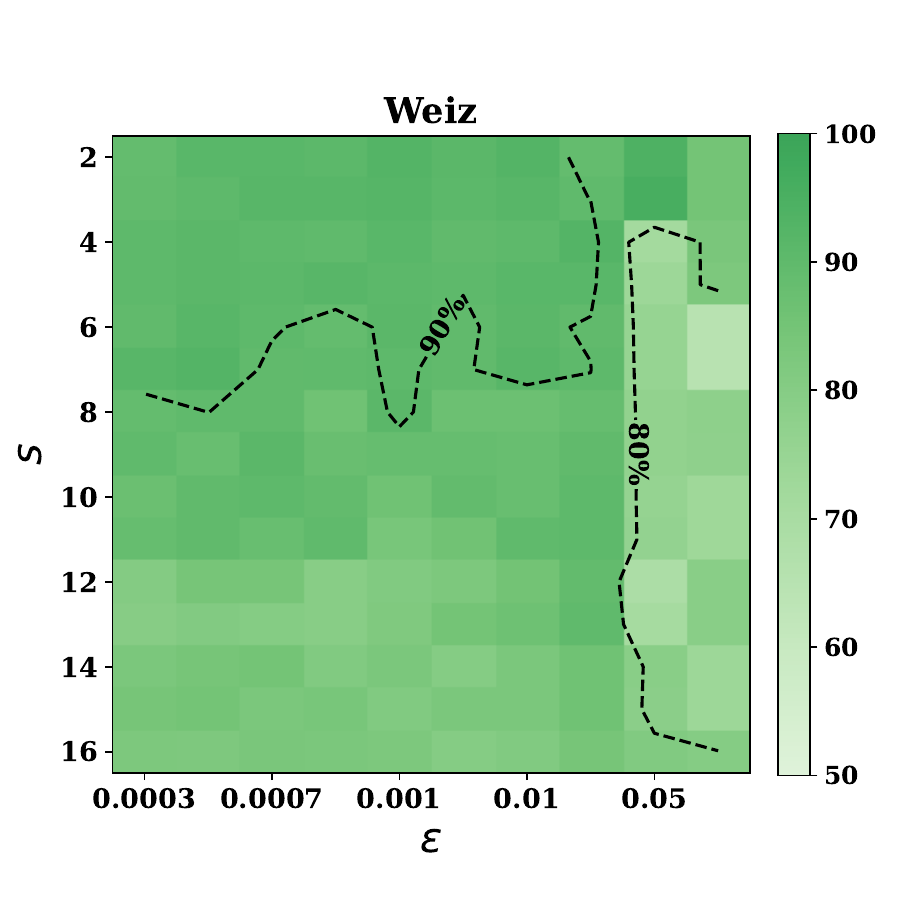}
    \end{subfigure}\hfill
    \begin{subfigure}[b]{0.24\linewidth}
           \includegraphics[trim=0pt 20pt 10pt 30pt, clip,width=\textwidth]{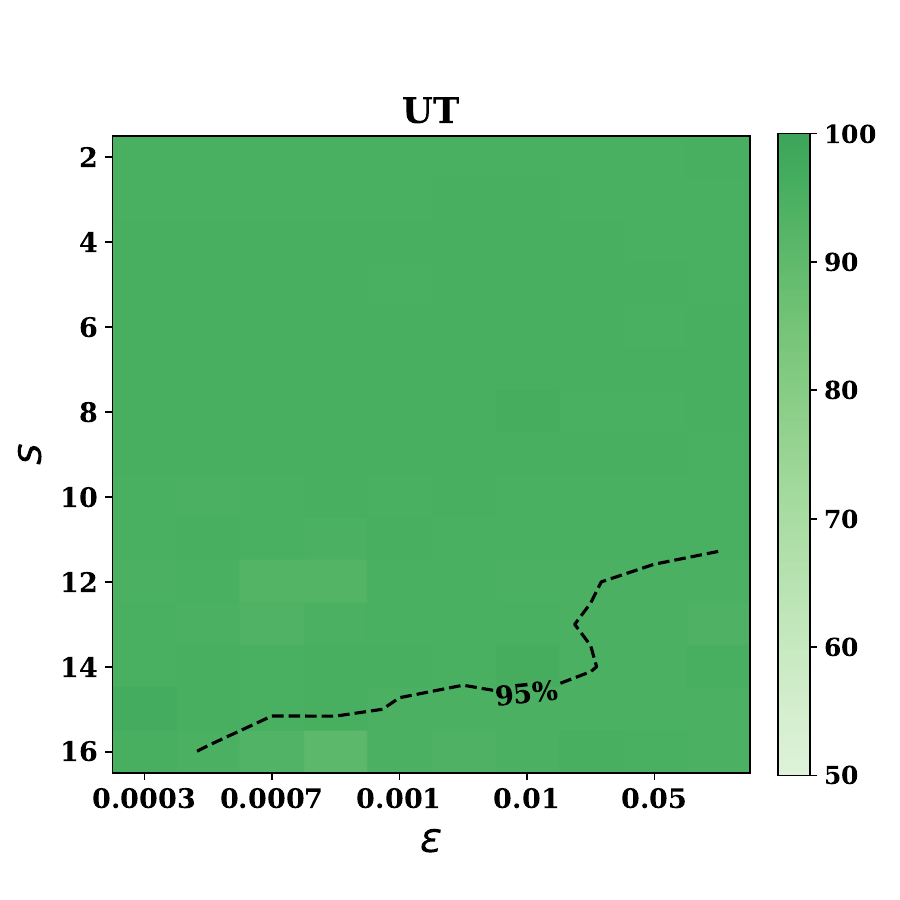}
    \end{subfigure}
\caption{\textbf{Sensitivity to hyper-parameters.} The sensitivity of \name{} with respect to $\lambda_1$, $\lambda_2$, $s$ and $\epsilon$ is studied through experiments on the first sequence of datasets Weiz and UT  with five different random seeds.}
\label{fig:sensitivity}
\end{figure*}

\begin{figure*}[bp]
\centering
    \begin{subfigure}[b]{0.25\linewidth}
        \includegraphics[trim=5pt 10pt 0pt 10pt, clip,width=\textwidth]{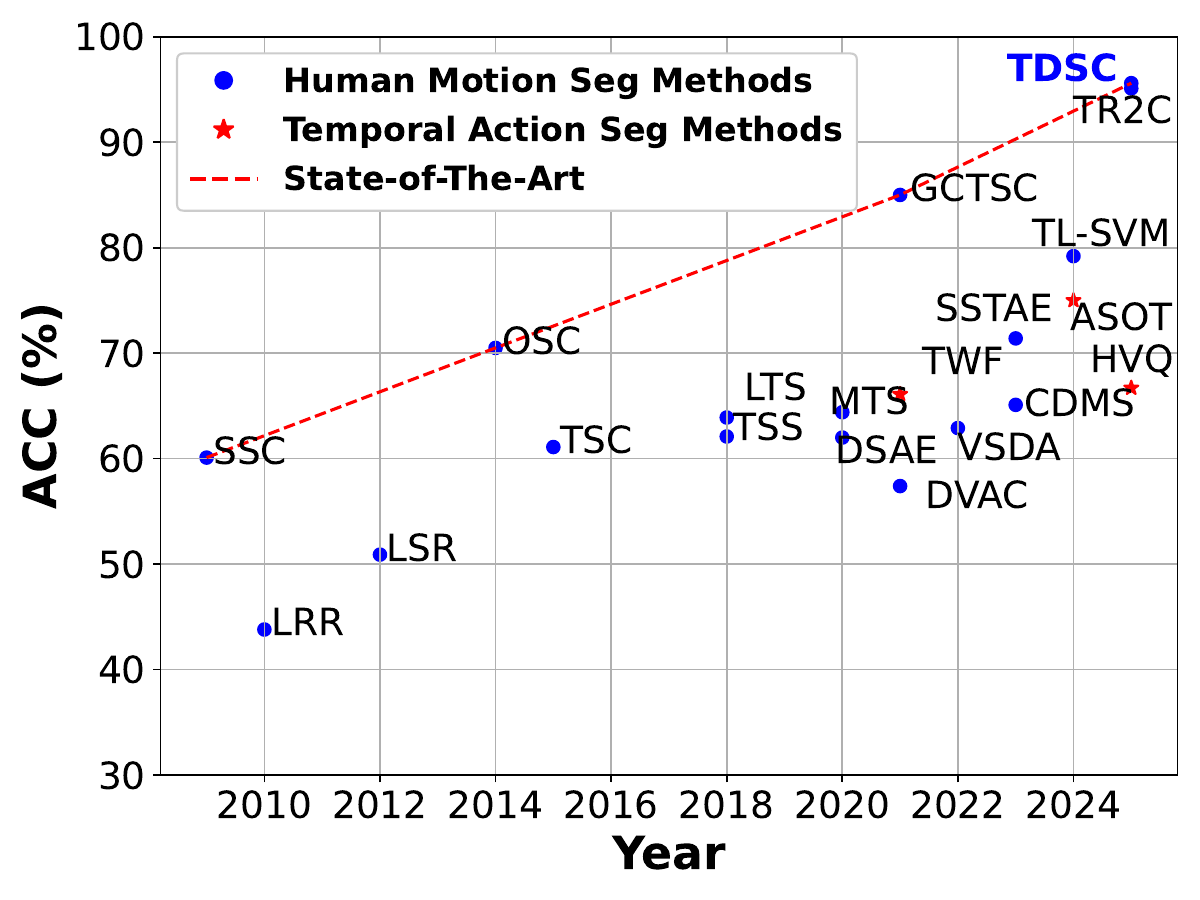}
        \caption{Weiz}
    \end{subfigure}\hfill
    \begin{subfigure}[b]{0.25\linewidth}
        \includegraphics[trim=5pt 10pt 0pt 10pt, clip,width=\textwidth]{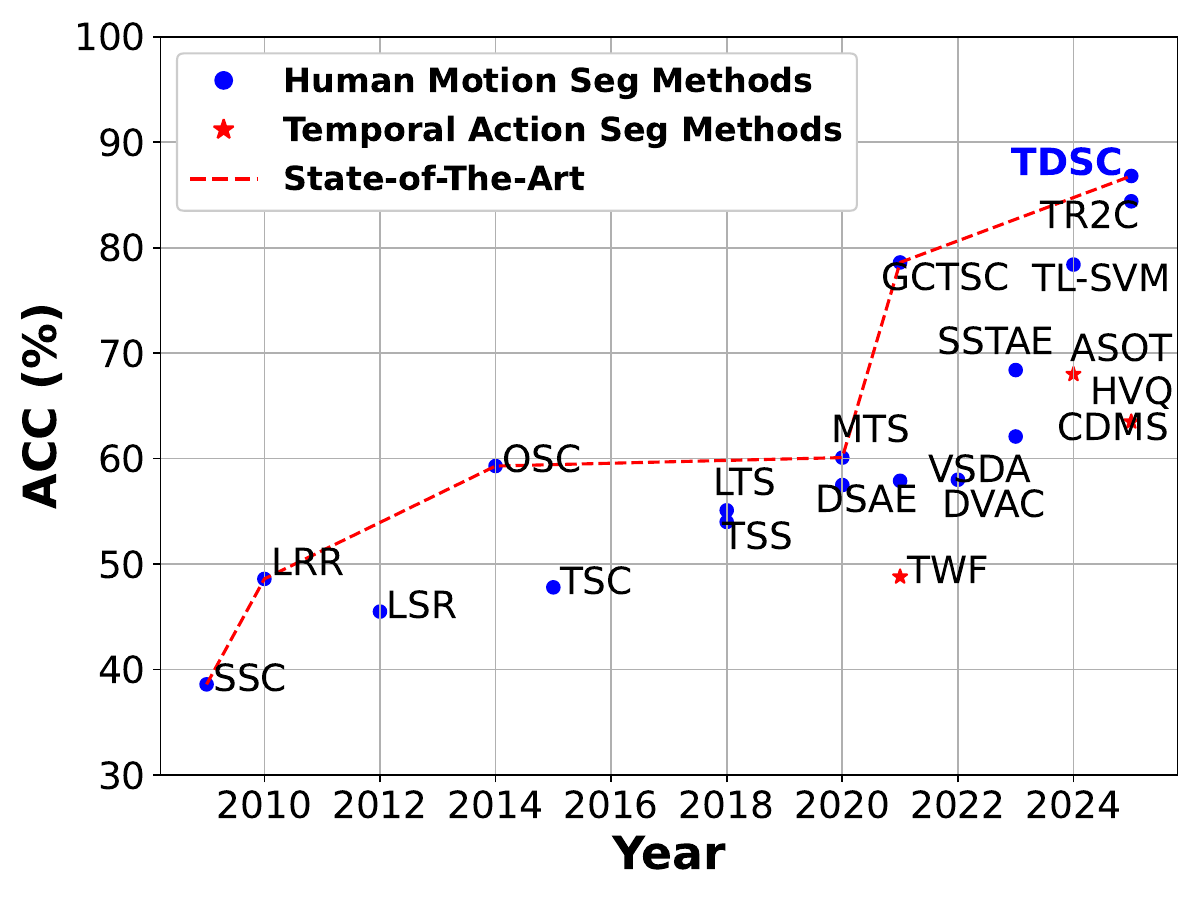}
        \caption{Keck}
    \end{subfigure}\hfill
    \begin{subfigure}[b]{0.25\linewidth}
        \includegraphics[trim=5pt 10pt 0pt 10pt, clip,width=\textwidth]{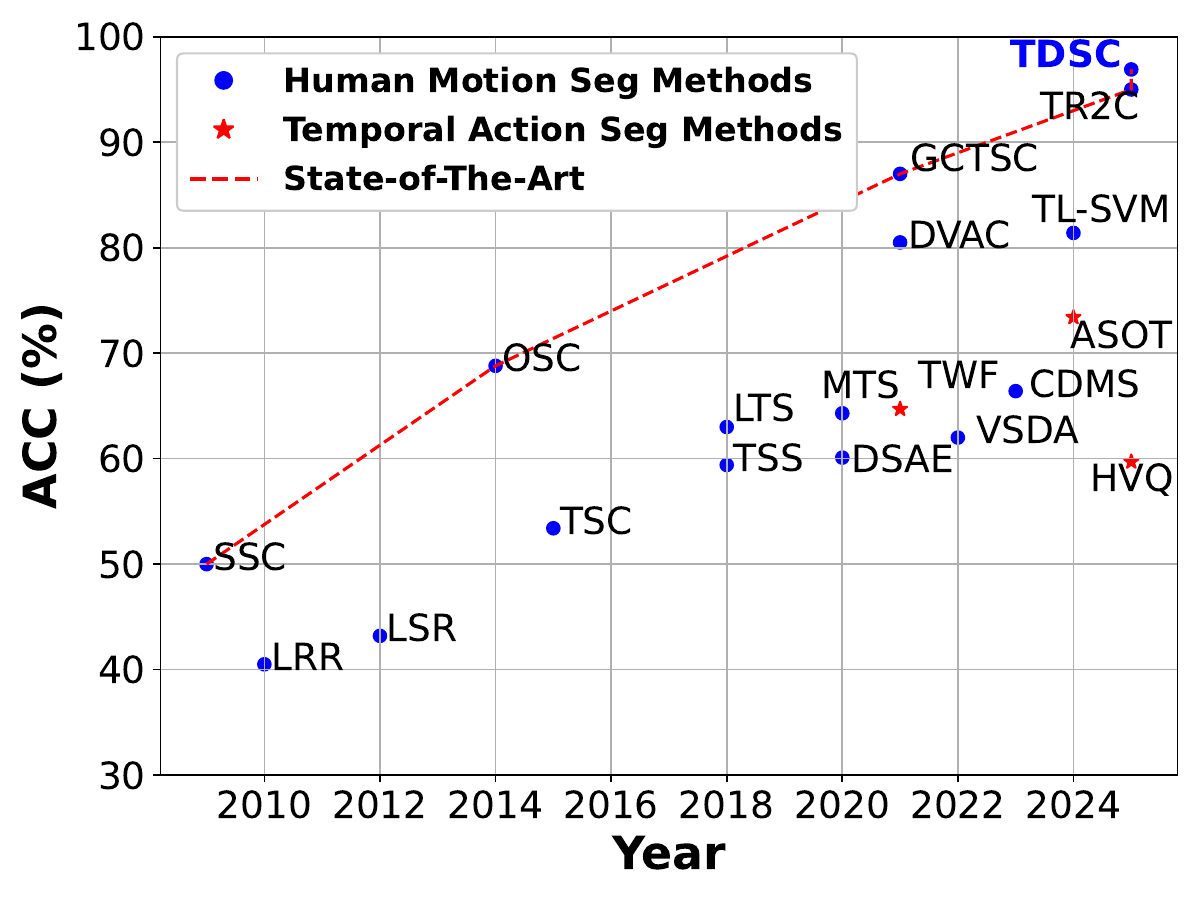}
        \caption{UT}
    \end{subfigure}\hfill
    \begin{subfigure}[b]{0.25\linewidth}
        \includegraphics[trim=5pt 10pt 0pt 10pt, clip,width=\textwidth]{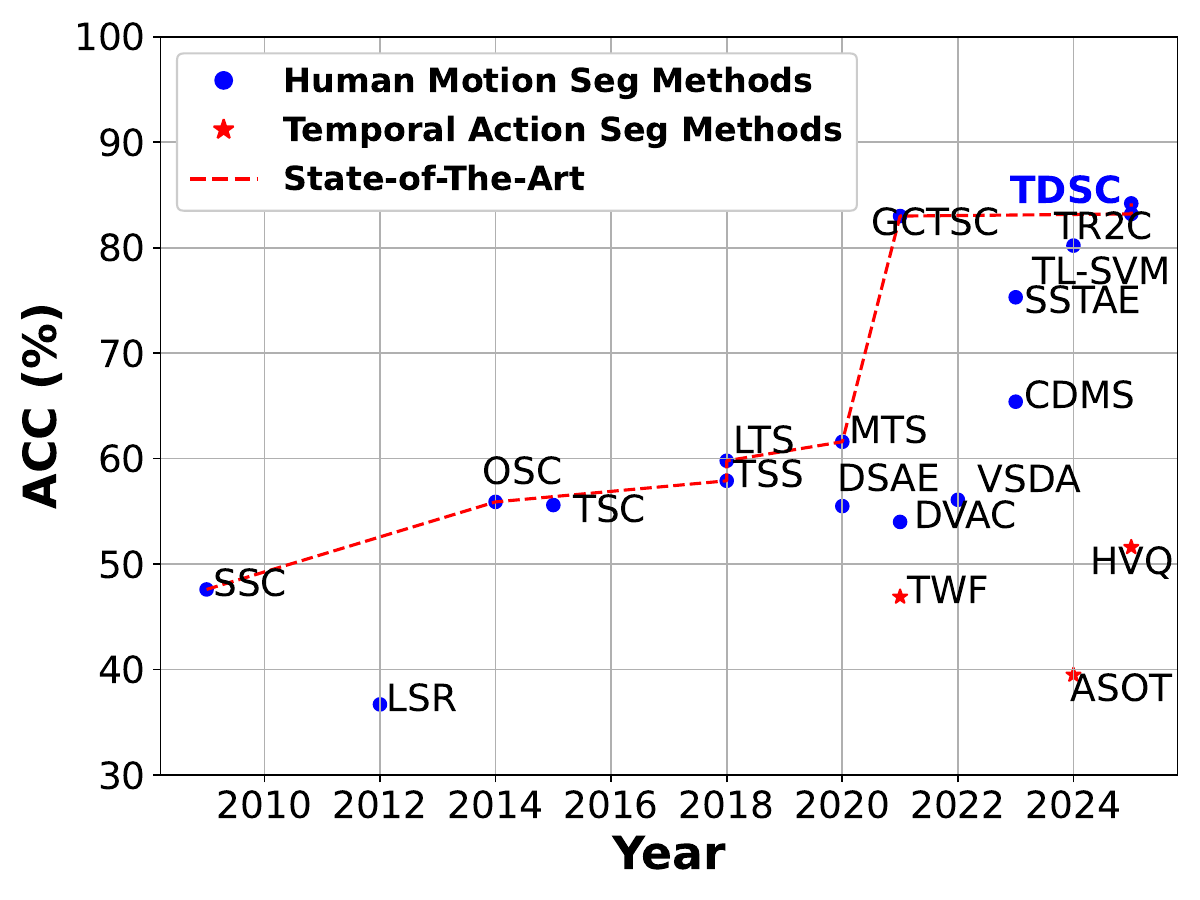}
        \caption{MAD}
    \end{subfigure}
\caption{Comparison to SoTA approaches from both HMS and TAS fields on Weiz, Keck, UT and MAD datasets.}
\label{fig:comparison_to_SOTA}
\end{figure*}

\myparagraph{Time cost comparison}
For \trc{} \cite{Meng:ICCV25-TR2C}, the term $-\mathcal{L}_{\rho}+\mathcal{L}_{\rho^c}$ involves computing $\log\det(\cdot)$ for $N+1$ times, which leads to an overall complexity of $\mathcal{O}(Nd^3)$.
In contrast, our extended \name{} computes $\log\det(\cdot)$ only once, resulting in a complexity of $\mathcal{O}(d^3)$.

To quantify the computational overhead in practice, we report the training time on the first sequence for each benchmark and hardware configuration.\footnote{Since that all methods use spectral clustering as the final segmentation step, we herein focus our comparison on the training time only.}
For TSC and GCTSC (including its GPU implementation), we use the official code released in~\cite{Dimiccoli:ICCV21-GCTSC} and 
train for $T=15$ and $T=100$ iterations, respectively, following the configurations in their official implementations. Our \name{} is trained for $T=500$ iterations as the default configuration to ensure convergence (see Table~\ref{tab:hyperpara} and Figure~\ref{fig:learning_curves}). 
All experiments are conducted on a single NVIDIA RTX 3090 GPU with an Intel Xeon Platinum 8255C CPU.
As summarized in Table~\ref{tab:Time cost}, the training time of our \name{} is on par with TSC 
and is substantially lower than 
GCTSC which is considerably more time-consuming.
Owing to the more efficient computation of $\rho^c_\text{Exp}$ compared to $\rho^c$, our extended \name{} is faster than \trc{}, even without using acceleration via GPU.

\begin{table}[htbp]
    \centering
\caption{\textbf{Comparison on training time (s).} The %
best time cost is marked in \textbf{bold} and the second best result %
is %
\underline{underlined}.}
\label{tab:Time cost}
\resizebox{\linewidth}{!}{
    \begin{tabular}{l|c|ccccc}
    \toprule
      & $T$ &  Weiz&  Keck&  UT& MAD & YouTube\\
     \midrule
         TSC& 15 & 20.0 & 20.4 & \underline{5.6} & 9.2 & 116.5\\
         GCTSC& 100 & 1551.7 & 1554.1 & 415.4 & 810.3 & 12677.8\\
 GCTSC$+$GPU& 100 & 1122.2 &1142.4 & 374.8 & 622.4 & 8474.7\\
         \trc& 500 & 91.0 & 228.2 & 82.2 & 138.4 & 929.1\\
    \trc$+$GPU& 500 & 7.7 & \underline{20.0} & 7.4 & 13.8 & 37.9\\
         \name{}& 500 & \underline{6.8} & 22.4 & 6.8 & \underline{8.6} & \underline{13.5} \\
    \rowcolor{myGray} \name{}$+$GPU& 500 &\textbf{6.4} & \textbf{13.6} & \textbf{5.2} & \textbf{5.8} & \textbf{6.0}\\
     \bottomrule
    \end{tabular}
    }
    \vskip -0.2in
\end{table}

\myparagraph{Comparing to recent temporal action segmentation approaches}
As the performance of recent HMS methods has begun to saturate (see Table~\ref{tab:benchmark-performance}), we further compare our approaches against state-of-the-art Temporal Action Segmentation (TAS) algorithms, namely TWF~\cite{Sarfraz:CVPR21}, ASOT~\cite{Xu:CVPR24}, and HVQ~\cite{Spurio:AAAI25}.
TAS is an unsupervised task closely related to HMS, as both aim to partition videos into non-overlapping segments~\cite{Ding:TPAMI23}.
The main difference lies in the scale and nature of the actions: HMS typically concerns \emph{macro-scale} motions (\eg, walking, running) characterized by global and clearly distinguishable movements, whereas TAS focuses on \emph{micro-scale} manipulative actions (\eg, adding oil, adding salt), which are more subtle and fine-grained.

For ASOT~\cite{Xu:CVPR24}, we report the best performance by searching over the KOT weight $\alpha\in\{0.2,0.5\}$, global structure factor $\rho\in\{0.3,0.5,0.7\}$, penalty weight $\lambda\in\{0.08,0.11,0.14,0.17,0.2\}$, and radius parameter $r\in\{0.02,0.04,0.06,0.08,0.1\}$.
TWF~\cite{Sarfraz:CVPR21} is an automatic clustering algorithm which does not require tuning hyper-parameter.
For HVQ~\cite{Spurio:AAAI25}, we report the best result by searching $\alpha\in\{1,2,3,4\}$ and $\lambda_{\text{rec}}\in\{0.0005,0.002,0.1\}$.

Figure~\ref{fig:comparison_to_SOTA} plots the clustering accuracy of 20 methods (including both HMS and TAS SoTA methods) on datasets Weizmann, Keck, UT and MAD as a function of their publication year.
It can be clearly observed that our proposed \name{} achieves the highest accuracy by a substantial margin, outperforming all existing methods, including those published after 2025.

\myparagraph{Performance evaluation on SoTA feature extractors}
To assess the effectiveness of our \name{} with modern feature extractors (rather than HoG descriptors), we conduct experiments on datasets Weizmann and Keck using features extracted by the pretrained image encoders of CLIP~\cite{Radford:ICML21-CLIP} and DINOv2~\cite{oquab:TMLR24-dinov2}.

As reported in Table~\ref{tab:TAS_sota_on_HMS}, our \name{} achieves higher clustering accuracy with CLIP and DINOv2 features than with HoG features, with performance gains of more than 2\% on each dataset.
The improvement can be attributed to stronger discriminative power 
of the CLIP and DINOv2 pretrained models: 
the image encoders capture richer high-level semantic information from each frame, which are crucial for distinguishing different human motions.

\begin{table}[hbp]
  \centering
  \caption{\textbf{Evaluating on state-of-the-art feature extractors for comparing to recent HMS and TAS algorithms}.}
  \resizebox{\linewidth}{!}{
    \begin{tabular}{ccccccccc}
    \toprule
     &   &    & \multicolumn{2}{c}{Weiz} & \multicolumn{2}{c}{Keck} & \multicolumn{2}{c}{AVG} \\
     \multirow{-2}{*}{Feature} &  \multirow{-2}{*}{Methods} & \multirow{-2}{*}{Venue} & ACC   & NMI   & ACC   & NMI   & ACC   & NMI \\
    \midrule
    \multirow{5}[2]{*}{HoG} & TWF~\cite{Sarfraz:CVPR21} & CVPR'21 & 66.1  & 84.3  & 48.8  & 63.7  & 57.5  & 74.0  \\
          & ASOT~\cite{Xu:CVPR24} & CVPR'24 & 68.0  & 77.6  & 66.4  & 76.0  & 67.2  & 76.8  \\
          & HVQ~\cite{Spurio:AAAI25} & AAAI'25 & 66.7  & 61.5  & 63.5  & 75.2  &    65.1  & 68.4  \\
          & GCTSC~\cite{Dimiccoli:ICCV21-GCTSC} & ICCV'21 & 85.0  & 90.5  & 78.6  & 83.3  & 81.8  & 86.9  \\
         &  \trc{}~\cite{Meng:ICCV25-TR2C}  & ICCV'25 & \underline{95.1}  & \underline{96.0}  & \underline{84.4}  & \underline{86.1}  & \underline{89.8}  & \underline{91.1}  \\
    \rowcolor{myGray}      & \textbf{\name{}}  & Ours & \textbf{95.6}  &  \textbf{96.9} & \textbf{86.8}  & \textbf{87.4}  &  \textbf{91.2} & \textbf{92.2}  \\
    \midrule
    \multirow{5}{*}{CLIP} & TWF~\cite{Sarfraz:CVPR21} & CVPR'21    & 76.8  & 89.3  & 70.4  & 79.4  & 73.6  & 84.4  \\
    \multirow{5}{*}{(ViT-L/14)} & ASOT~\cite{Xu:CVPR24} & CVPR'24   & 71.1  & 79.4  & 67.0  & 76.4  & 69.1  & 77.9  \\
    & HVQ~\cite{Spurio:AAAI25} & AAAI'25  & 72.6  & 85.2  & 73.5  & 78.5    & 73.1  & 81.9  \\
          & GCTSC~\cite{Dimiccoli:ICCV21-GCTSC}  & ICCV'21 & 89.4  & 89.9  & 83.3  & 84.6  & 86.4  & 87.3  \\
         & \trc{}~\cite{Meng:ICCV25-TR2C}   & ICCV'25 & \textbf{97.9} & \underline{97.7}  & \textbf{92.0}  & \underline{90.8}  & \textbf{95.0}  &  \underline{94.3} \\
    \rowcolor{myGray}      & \textbf{\name{}} & Ours & \textbf{97.9} & \textbf{98.0}  & \textbf{92.0}  & \textbf{91.6}  & \textbf{95.0}  &  \textbf{94.8} \\
    \midrule
    \multirow{5}{*}{DINOv2} & TWF~\cite{Sarfraz:CVPR21} & CVPR'21  & 66.8  & 83.1  & 65.2  & 70.8  & 66.0  & 77.0  \\
    \multirow{5}{*}{(ViT-L/14)} & ASOT~\cite{Xu:CVPR24} & CVPR'24 & 71.4  & 80.9  & 60.1  & 74.4  & 65.8  & 77.7  \\
    & HVQ~\cite{Spurio:AAAI25}& AAAI'25 & 73.0  & 85.5  & 67.2  & 77.1   & 70.1  & 81.3  \\
          & GCTSC~\cite{Dimiccoli:ICCV21-GCTSC}& ICCV'21 & 90.8  & 91.8  & 82.8  & 84.8  & 86.8  & 88.3  \\
        & \trc{}~\cite{Meng:ICCV25-TR2C} & ICCV'25 & \underline{97.9} & \underline{98.6}  & \underline{87.1}  & \underline{87.6}  & \underline{92.5}  & \underline{93.1}  \\
    \rowcolor{myGray}      & \textbf{\name{}} & Ours & \textbf{99.5} & \textbf{99.6}  & \textbf{90.0}  & \textbf{89.7}  & \textbf{94.8}  & \textbf{94.7}  \\
    \bottomrule
    \end{tabular}}
  \label{tab:TAS_sota_on_HMS}%
\end{table}%

\FloatBarrier
\section{Conclusion}
\label{Sec:Conclusion}
We have proposed an efficient and effective approach for HMS, namely \name{}, which jointly learns structured representations with temporal consistency and stabilized affinity between frames with temporal momentum averaging mechanism. 
Specifically, the proposed \name{} is developed by integrating a coding-rate-regularized representation learning term and 
temporal constraints to the self-expressive model for temporal deep subspace clustering, which 
enjoys theoretical guarantee, improved computation efficiency, and superior segmentation performance. 
We have demonstrated that our proposed \name{} could achieve state-of-the-art performance on five benchmark HMS datasets with various feature extractors. 

%
It is worth noting that from a broader perspective, the objectives of our \name{} can be generalized to a higher-level modeling framework for temporal segmentation task, which is composed of three functional components: a $\rho^c$ term for discovering subspace structure and learning structured representations that conform to a UoS distribution, a $\rho$ term for preventing degenerated solutions, and a temporal regularizer $r$ for enforcing temporal continuity.
%
By replacing each component with alternative formulations (\eg, SMCE~\cite{Elhamifar:NIPS11} for $\rho^c$, nuclear norm~\cite{Fazel:ACC01} for $\rho$, $\|\cdot\|_{1,2}$~\cite{Tierney:CVPR14-OSC} for $r$), one can potentially derive a family of new models for more complicated temporal segmentation tasks.
We leave the development of more promising methods for future work.

\section*{Acknowledgments}
This work is supported by the National Natural Science Foundation of China under Grant 62576048. 

\FloatBarrier
\bibliographystyle{IEEEtran}
\bibliography{./biblio/IEEEabrv,./biblio/xhmeng,./biblio/yc,./biblio/cgli,./biblio/zhjj,./biblio/learning,./biblio/temp,./biblio/temporal}

\onecolumn

\setcounter{page}{1}
\setcounter{section}{0}
\mymaketitlesupplementary

\setcounter{section}{1}
\renewcommand{\thesection}{\Alph{section}}

\setcounter{figure}{0}
\setcounter{table}{0}

\renewcommand{\thefigure}{\Alph{section}.\arabic{figure}}
\renewcommand{\thetable}{\Alph{section}.\arabic{table}}

\subsection{Datasets Description}
\label{Sec:Datasets}

\myparagraph{Weizmann action dataset (Weiz)}
The Weizmann dataset~\cite{Gorelick:TPAMI07-Weiz} contains 90 motion sequences, with 9 individuals each completing 10 motions, \eg, running, jumping, skipping, waving and bending. 
The resolution of video is $180\times 144$ pixels with 50 FPS.\\
\myparagraph{Keck gesture dataset (Keck)}
The Keck dataset~\cite{Jiang:TPAMI12-Keck} contains 56 action sequences, with 4 individuals each performing 14 motions derived from military hand signals, \eg, turning left, turning right, starting, and speeding up.
The resolution of video is $640\times 480$ pixels with 15 FPS.\\
\myparagraph{UT interaction dataset (UT)}
The UT dataset~\cite{Ryoo:ICCV09-Ut} contains 10 video sequences, each of which consists of 2 people completing 6 different motions, \eg, shaking hands, hugging, pointing, and kicking.
The resolution of video is $720\times 480$ pixels with 30 FPS.\\
\myparagraph{Multi-model Action Detection dataset (MAD)}
The MAD dataset~\cite{Huang:ECCV14-MAD} contains 40 video sequences (20 people, 2 videos each) with 35 motions in each video.
The resolution of video is $320\times 240$ pixels with 30 FPS.
The dataset gives both depth data and skeleton data.\\
\myparagraph{UCF-11 YouTube action dataset (YouTube)}
The YouTube dataset \cite{Liu:CVPR09-youtube} contains 1168 video sequences with 11 motions, \eg, biking, diving, and golf swinging.
The resolution of video is $320\times 240$ pixels with 30 FPS.
Specifically, the human motions in the YouTube dataset are partially associated with objects such as horses, bikes, or dogs.

To have a fair comparison with the baselines, we reduce the number of human motions of Keck, MAD and YouTube datasets to $10$.
For Keck, Weiz and YouTube datasets in which each video captures only one human motion, we concatenate the original videos and conduct experiments on the resulting videos.

\subsection{List of Hyper-Parameters}
\label{sec:hyperpara}
The hyper-parameters of training \trc{} and \name{} are summarized in Table~\ref{tab:hyperpara}.
We choose the same hidden dimension $d_{pre}$, output dimension $d$, window size $s$, coding precision $\epsilon$, and learning rate $\eta$ for all the experiments and tune the weights $\lambda_1$ and $\lambda_2$ for each dataset.
For training on CLIP features, we decrease the number of training iterations from $500$ to $100$ due to the faster convergence, while keeping all the other hyper-parameters unchanged. 

\begin{table*}[h]
\caption{\textbf{Detailed hyper-parameters configuration for training \trc{} and \name{}.}}
\label{tab:hyperpara}
\begin{center}
\resizebox{0.9\linewidth}{!}{
    \begin{tabular}{P{1.5cm}P{1.5cm}P{0.9cm}P{0.9cm}P{0.9cm}P{0.9cm}P{0.9cm}P{0.9cm}P{0.9cm}P{1.5cm}}
    \toprule
         \rowcolor{myGray} Method & Dataset &  $d_{pre}$&  $d$&  $T$& $\lambda_1$& $\lambda_2$&  $s$& $\epsilon$&$\eta$\\
         \midrule
         \multirow{5}{*}{\trc{}} &Weiz&  512&  64&  500& 0.1& 12&  2& 0.1&$0.005$\\
         
         &Keck&  512&  64&  500& 0.1& 10&2& 0.1&$0.005$\\
         
         &UT&  512&  64&  500& 0.1& 10&
         2&0.1&$0.005$\\
         
         &MAD&  512&  64&  500& 0.15& 15&
         2& 0.1& $0.005$\\
         &YouTube & 512 & 64 & 500 & 1 & 2 & 2 & 0.1 & $0.005$\\
    \midrule
         \multirow{5}{*}{\name{}}&Weiz&  512&  64&  500& 0.2& 20&  2& 0.01&$0.001$\\
         &Keck&  512&  64&  500& 0.2& 25&  2& 0.01&$0.001$\\
         &UT&  512&  64&  500& 0.2& 5&  2& 0.01&$0.001$\\
         &MAD&  512&  64&  500& 0.2& 10&  2& 0.01&$0.001$\\
         &YouTube&  512&  64&  500& 0.05& 15&  2& 0.01&$0.001$\\
 \bottomrule
    \end{tabular}
    }
\end{center}
\end{table*}

\subsection{Clustering Performance Evaluation on Different Representations} %
To quantitatively evaluate the effectiveness of the representations learned by \name{}, we adopt subspace clustering on both the raw features (HoG or CLIP) and the learned representations from \name{} by LSR~\cite{Lu:ECCV12} and report the clustering performance.
We tune the hyper-parameter $\gamma$ of LSR over $\{1,2,5,10,20,50,100,200,400,800,1600,3200\}$ and report the best clustering result.
As shown in Table~\ref{tab:feature_acc}, the clustering performance obtained from the \name{} representations consistently surpasses that based on the input features, across all datasets and feature extractors.
Given that the effectiveness of clustering algorithms is highly sensitive to the underlying data distribution, these substantial gains clearly indicate the improved quality of the learned representations.

\begin{table}[htbp]
  \centering
  \caption{\textbf{The clustering performance of input features and representations learned by \name{}.} We choose LSR as the clustering approach.}
  \resizebox{\linewidth}{!}{
    \begin{tabular}{clcccccccccc}
    \toprule
          &       & \multicolumn{2}{c}{Weiz}      & \multicolumn{2}{c}{Keck}      & \multicolumn{2}{c}{UT}        & \multicolumn{2}{c}{MAD}       & \multicolumn{2}{c}{YouTube} \\
          &       & ACC   & NMI   & ACC     & NMI       & ACC     & NMI      & ACC     & NMI     & ACC   & NMI  \\
    \midrule
    \multirow{3}[1]{*}{HoG} & Input & 50.8     & 55.4     & 61.3    & 57.8   & 57.7    & 48.4     & 42.1      & 38.9   & 74.7  & 81.2  \\
          & \trc{} & 87.9  \mcgreen(37.1$\uparrow$)  & 86.7  \mcgreen(31.3$\uparrow$)  & 86.8  \mcgreen(25.5$\uparrow$)  & 86.3  \mcgreen(28.5$\uparrow$)  & 91.7  \mcgreen(34.0$\uparrow$)  & 86.9  \mcgreen(38.5$\uparrow$)  & 76.4  \mcgreen(34.3$\uparrow$)  & 78.9  \mcgreen(40.1$\uparrow$)  & 99.6  \mcgreen(24.8$\uparrow$)  & 98.8  \mcgreen(17.6$\uparrow$)  \\
          & \name{} & 83.3  \mcgreen(32.5$\uparrow$)  & 86.9  \mcgreen(31.5$\uparrow$)  & 85.4  \mcgreen(24.1$\uparrow$)  & 83.3  \mcgreen(25.5$\uparrow$)  & 89.8  \mcgreen(32.1$\uparrow$)  & 83.4  \mcgreen(35.0$\uparrow$)  & 84.7  \mcgreen(42.6$\uparrow$)  & 82.8  \mcgreen(43.9$\uparrow$)  & 93.6  \mcgreen(18.9$\uparrow$)  & 91.2  \mcgreen(10.0$\uparrow$)  \\
    \midrule
    \multirow{3}[1]{*}{CLIP} & Input & 70.0   & 78.7     & 64.6      & 65.6    & -        & -        & -      & -      & 80.5     & 88.8 \\
          & \trc{} & 95.6  \mcgreen(25.5$\uparrow$)  & 92.8  \mcgreen(14.1$\uparrow$)  & 88.5  \mcgreen(23.9$\uparrow$)  & 89.2  \mcgreen(23.6$\uparrow$)  & -      & -        & -          & -       & 100.0  \mcgreen(19.5$\uparrow$)  & 100.0  \mcgreen(11.2$\uparrow$)  \\
          & \name{} & 97.0  \mcgreen(27.0$\uparrow$)  & 94.5  \mcgreen(15.8$\uparrow$)  & 88.5  \mcgreen(23.9$\uparrow$)  & 86.3  \mcgreen(20.8$\uparrow$)  & -        & -         & -       & -        & 97.2  \mcgreen(16.7$\uparrow$)  & 95.6  \mcgreen(6.8$\uparrow$)  \\
    \bottomrule
    \end{tabular}}
  \label{tab:feature_acc}%
\end{table}%

\subsection{Complexity Analysis}
\label{sec:Complexity_Analysis}

We analyze the time complexity of $\log\det(\cdot)$ operation, as it is the most computationally intensive component in \name{}.
By exploiting the commutative property $\log\det(\textbf{I}+\Z\Z^\top)=\log\det(\textbf{I}+\Z^\top\Z)$~\cite{Ma:TPAMI07}, we reduce the matrix size involved in $\log\det(\cdot)$ from $N\times N$ to $d\times d$, which substantially improves both runtime and memory efficiency, especially in the common regime $d \ll N$.
For \trc{}, the term $-\mathcal{L}_{\rho}+\mathcal{L}_{\rho^c}$ requires evaluating $\log\det(\cdot)$ $(N+1)$ times, leading to an overall complexity of $\mathcal{O}(Nd^3)$, which can be further accelerated using GPU parallelization.
For \name{}, the loss involves a single $\log\det(\cdot)$ computation, resulting in a complexity of $\mathcal{O}(d^3)$, which is considerably more efficient than \trc{} (as also reflected in Table~\ref{tab:Time cost}).

To demonstrate the importance of using the commutative property, we report the time cost (ms/iter) of \trc{} for varying sequence length $N$ on HoG features ($d=324$) in Table~\ref{tab:time_vary_N}.
As shown, both the runtime and memory usage of \trc{} are dramatically reduced when computing $\log\det(\textbf{I}+\Z^\top\Z)$ instead of $\log\det(\textbf{I}+\Z\Z^\top)$.

\begin{table}[htbp]
  \centering
  \caption{Time cost (ms/iter) with varying $N$ on HoG features. ``OOM'' refers to out-of-memory.}
  \label{tab:time_vary_N}
  \resizebox{0.7\linewidth}{!}{
    \begin{tabular}{lcccccccc|c}
    \toprule
    $N$  & 200   & 400   & 600   & 800   & 1000  & 2000  & 3000  & 4000 & Complexity\\
    \midrule
    w/o Commutation & 33.2  & 97.1  & 229.1  & 546.8  & 1039.3  & OOM   & OOM   & OOM  & $\mathcal{O}(N^4)$\\
    \rowcolor{myGray} Our \trc{} & 16.1  & 17.7  & 21.3  & 23.6  & 28.0  & 53.9  & 105.2  & 162.9   & $\mathcal{O}(Nd^3)$ \\
    \bottomrule
    \end{tabular}}
\end{table}%

\subsection{Learning Curves}
We plot the learning curves of $\mathcal{L}_{\rho}-\lambda_1\mathcal{L}_{\rho^c_\text{Exp}}$, $\mathcal{L}_{\rho}$, $\mathcal{L}_{\rho^c_\text{Exp}}$, $\mathcal{L}_{r}$, together with the clustering performance in Figure~\ref{fig:learning_curves}.
As shown, the loss $\mathcal{L}_{\rho}-\lambda_1\mathcal{L}_{\rho^c_\text{Exp}}$ (blue curve) decreases for the \name{} model, which progressively enforces a UoS structure on the learned representations.
Meanwhile, $\mathcal{L}_{r}$ (purple curve) consistently decreases, promoting temporal continuity in the representations.
As a result, the clustering performance steadily improves and eventually converges to state-of-the-art levels.

\begin{figure*}[bt]
    \centering
    \begin{subfigure}[b]{\linewidth}
    \begin{subfigure}[b]{0.25\linewidth}
           \includegraphics[width=\textwidth, height=0.75\textwidth]{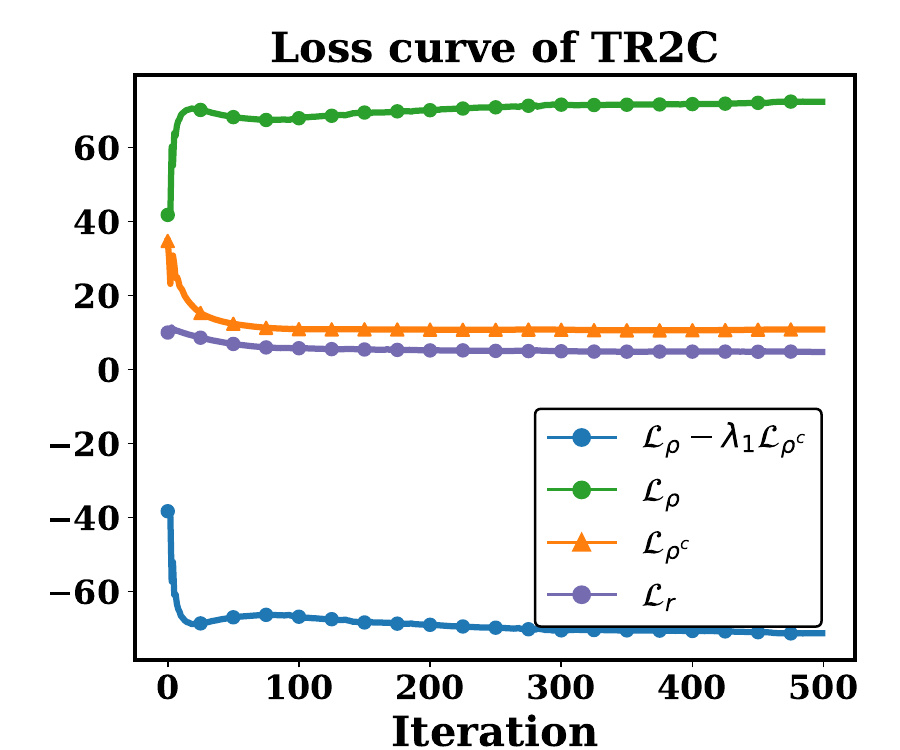}
    \end{subfigure}\hfill
    \begin{subfigure}[b]{0.25\linewidth}
           \includegraphics[width=\textwidth, height=0.75\textwidth]{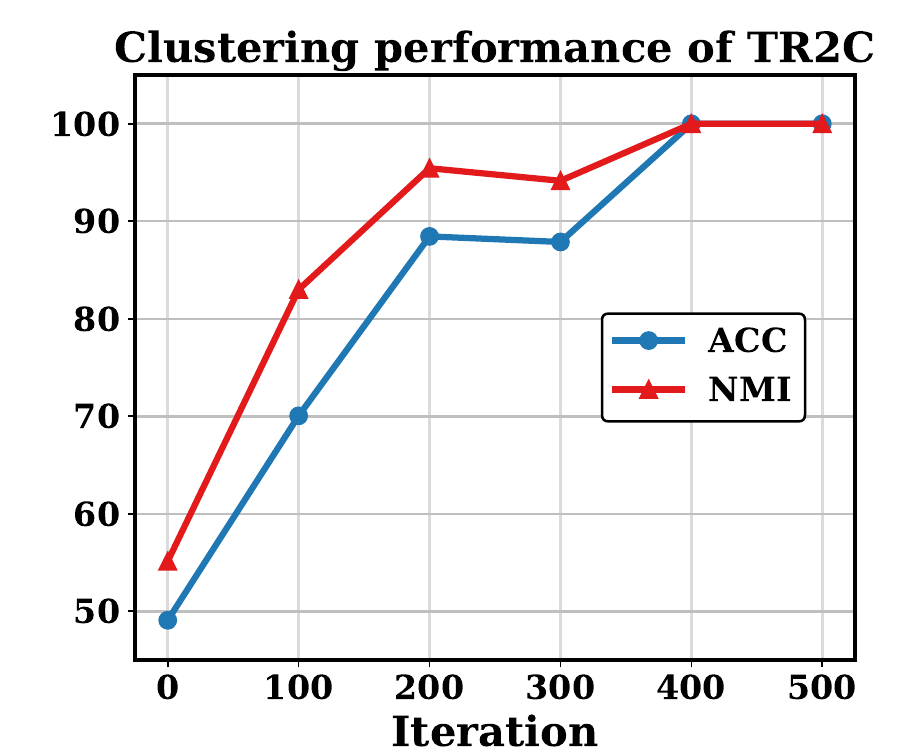}
    \end{subfigure}\hfill
    \begin{subfigure}[b]{0.25\linewidth}
           \includegraphics[width=\textwidth, height=0.75\textwidth]{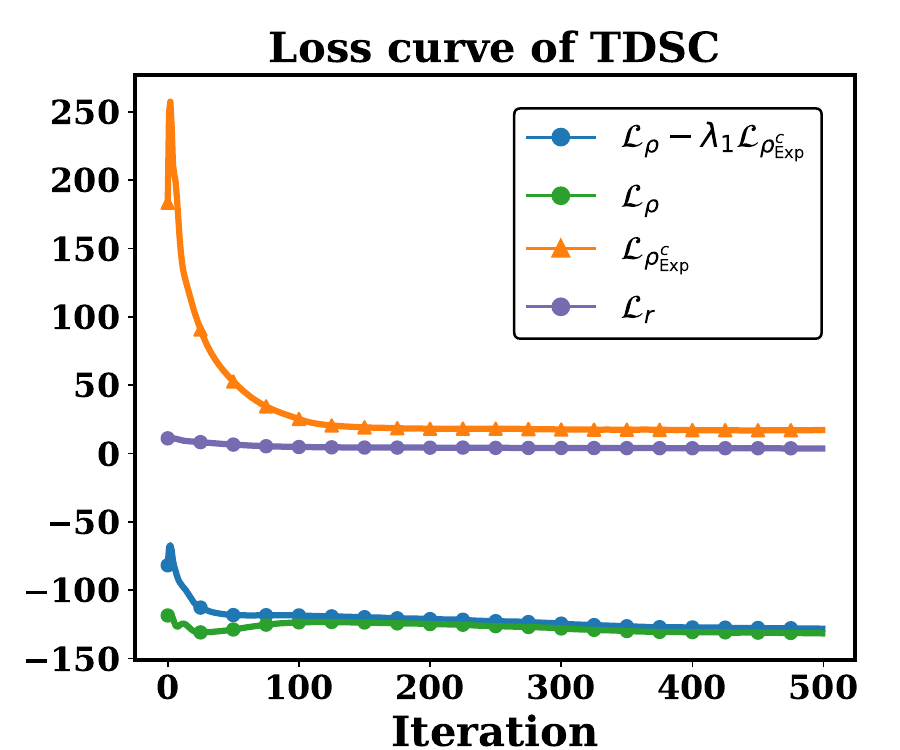}
    \end{subfigure}\hfill
    \begin{subfigure}[b]{0.25\linewidth}
           \includegraphics[width=\textwidth, height=0.75\textwidth]{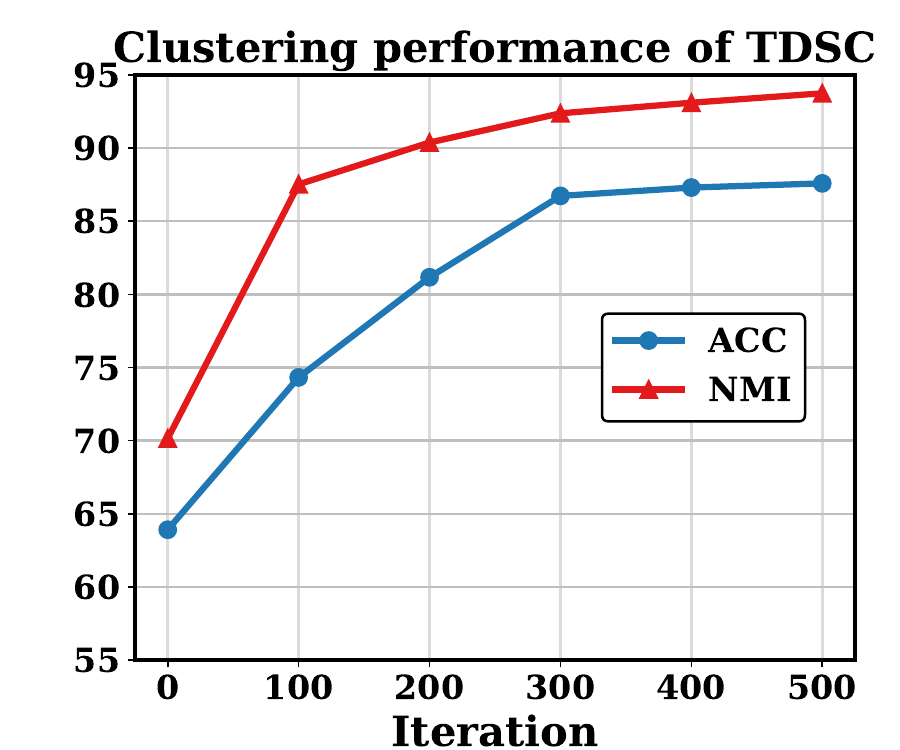}
    \end{subfigure}
    \caption{Weizmann Dataset}
    \end{subfigure}
    \begin{subfigure}[b]{\linewidth}
    \begin{subfigure}[b]{0.25\linewidth}
           \includegraphics[width=\textwidth, height=0.75\textwidth]{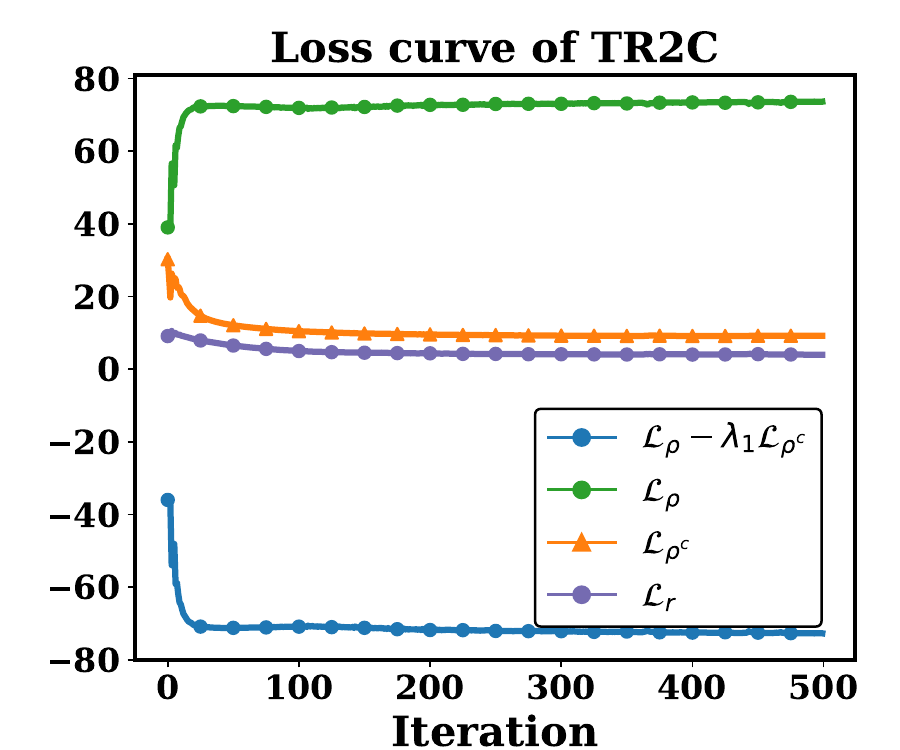}
    \end{subfigure}\hfill
    \begin{subfigure}[b]{0.25\linewidth}
           \includegraphics[width=\textwidth, height=0.75\textwidth]{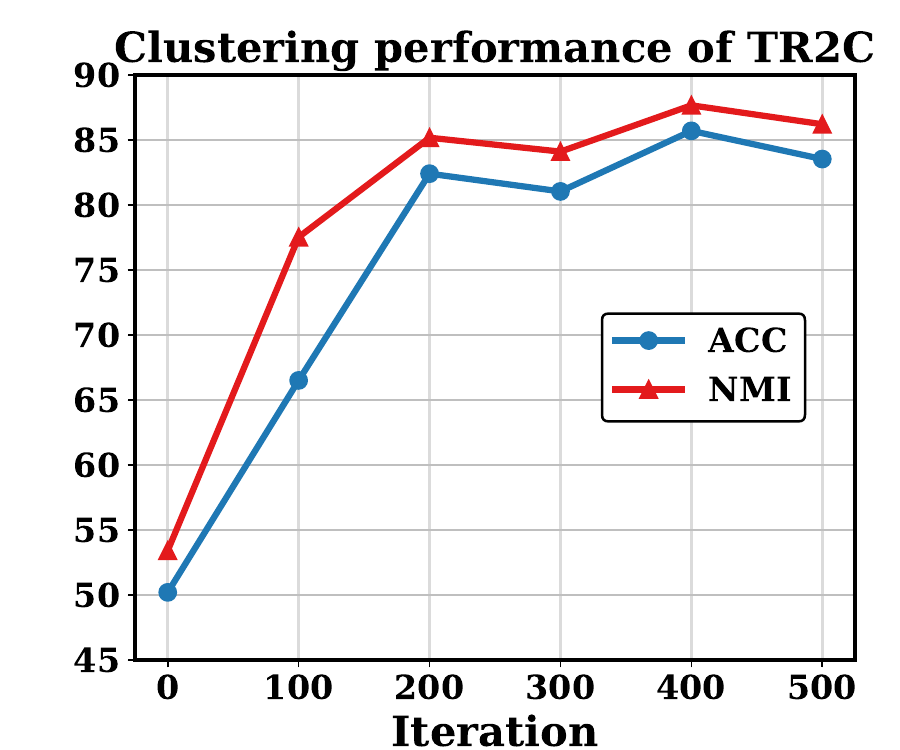}
    \end{subfigure}\hfill
    \begin{subfigure}[b]{0.25\linewidth}
           \includegraphics[width=\textwidth, height=0.75\textwidth]{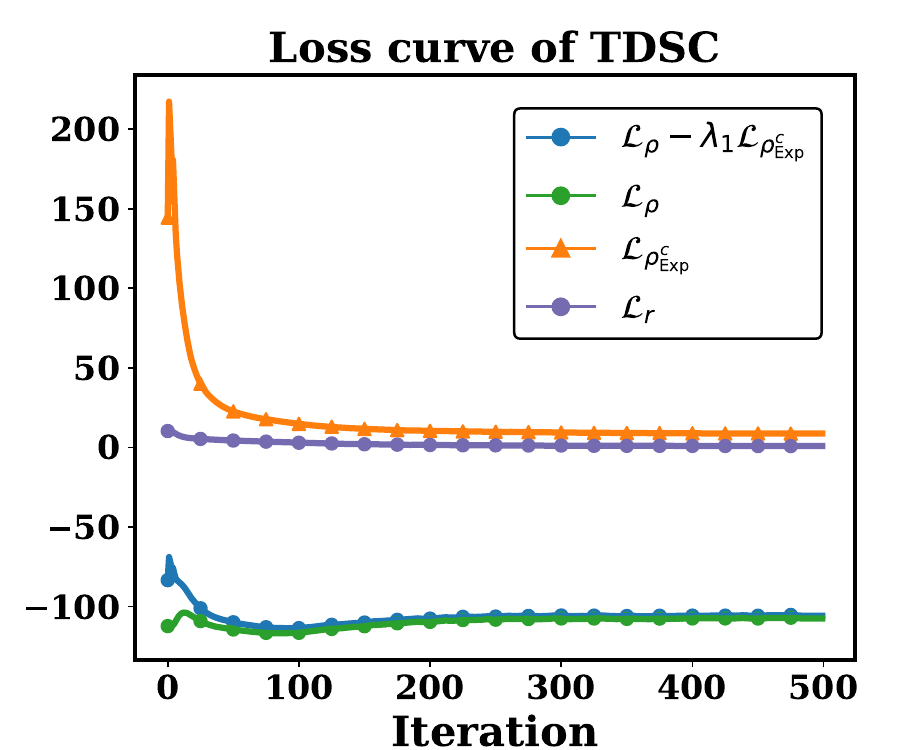}
    \end{subfigure}\hfill
    \begin{subfigure}[b]{0.25\linewidth}
           \includegraphics[width=\textwidth, height=0.75\textwidth]{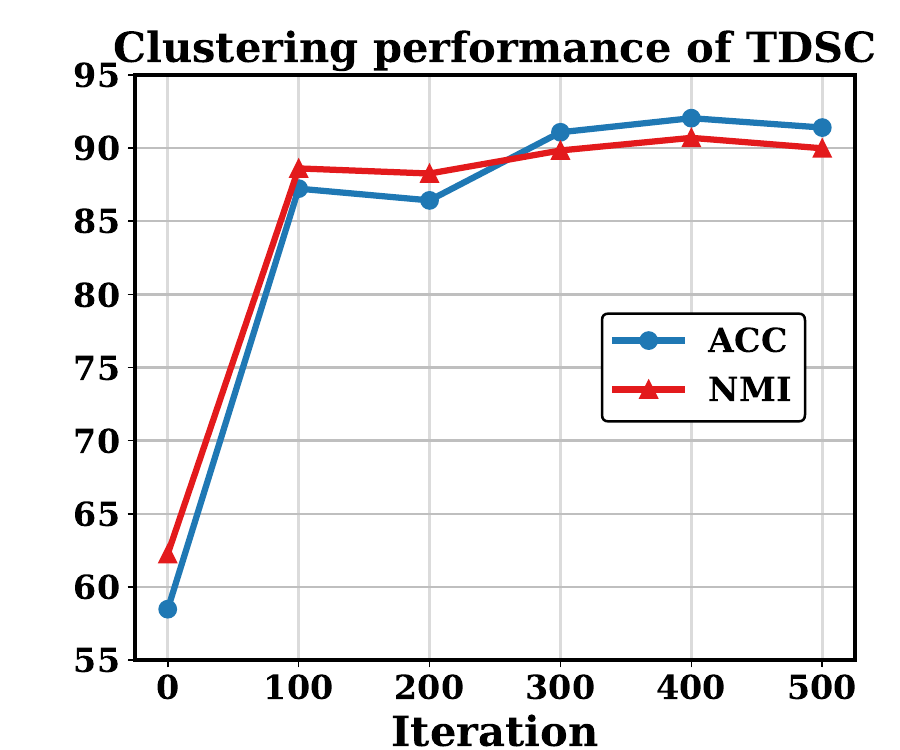}
    \end{subfigure}
    \caption{Keck Dataset}
    \end{subfigure}
    \begin{subfigure}[b]{\linewidth}
    \begin{subfigure}[b]{0.25\linewidth}
           \includegraphics[width=\textwidth, height=0.75\textwidth]{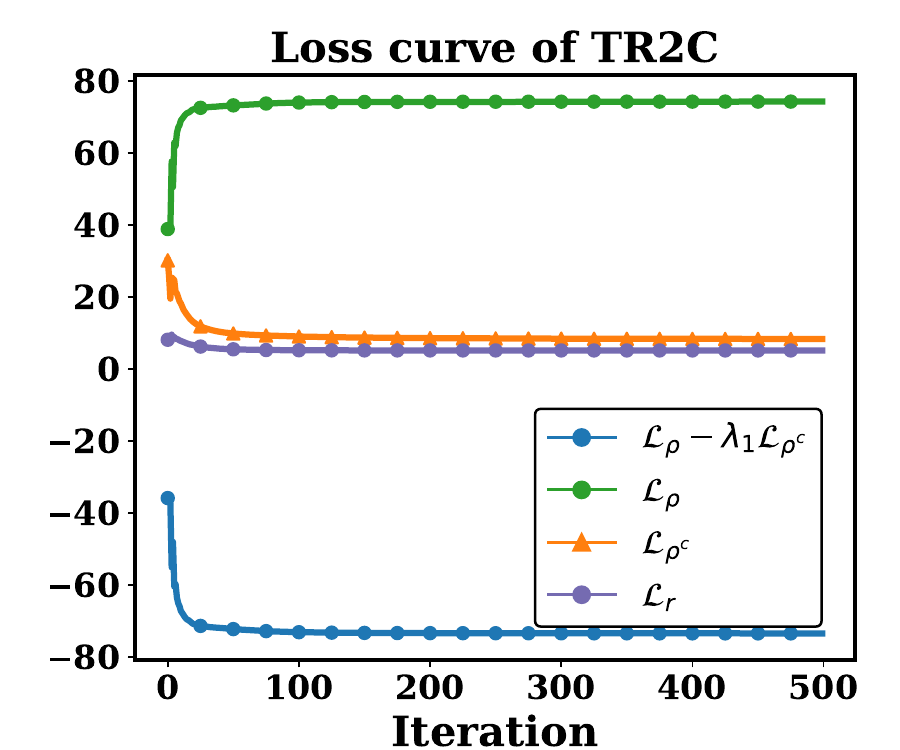}
    \end{subfigure}\hfill
    \begin{subfigure}[b]{0.25\linewidth}
           \includegraphics[width=\textwidth, height=0.75\textwidth]{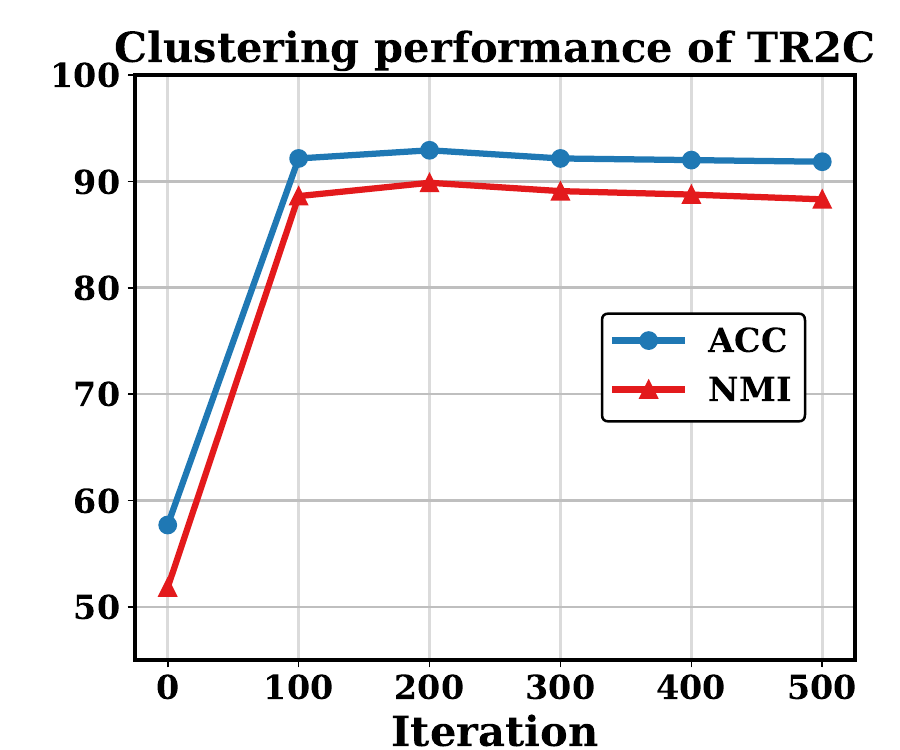}
    \end{subfigure}\hfill
    \begin{subfigure}[b]{0.25\linewidth}
           \includegraphics[width=\textwidth, height=0.75\textwidth]{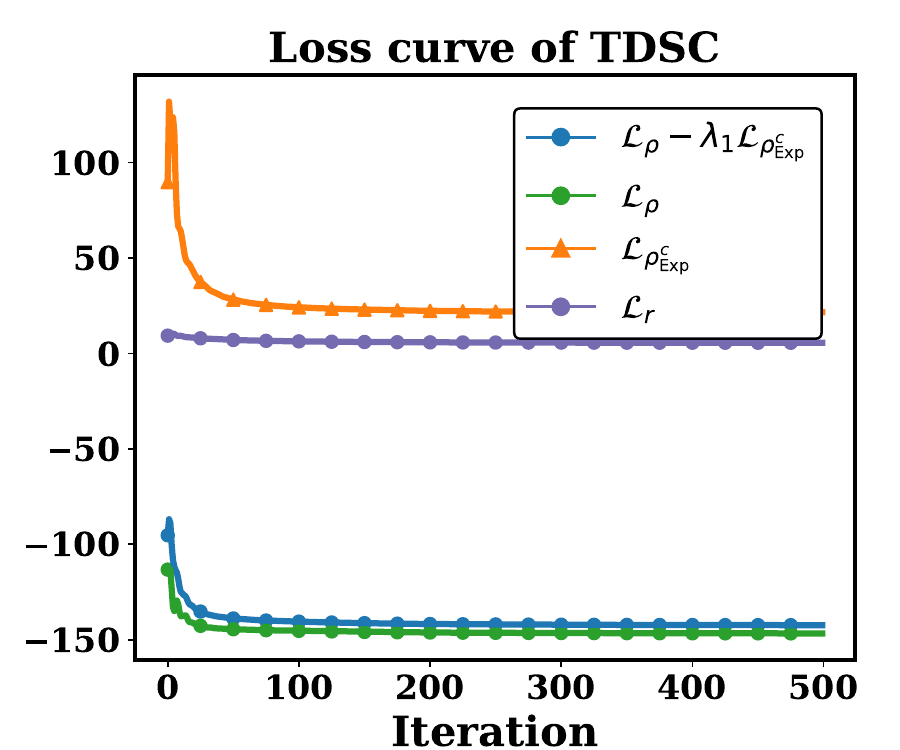}
    \end{subfigure}\hfill
    \begin{subfigure}[b]{0.25\linewidth}
           \includegraphics[width=\textwidth, height=0.75\textwidth]{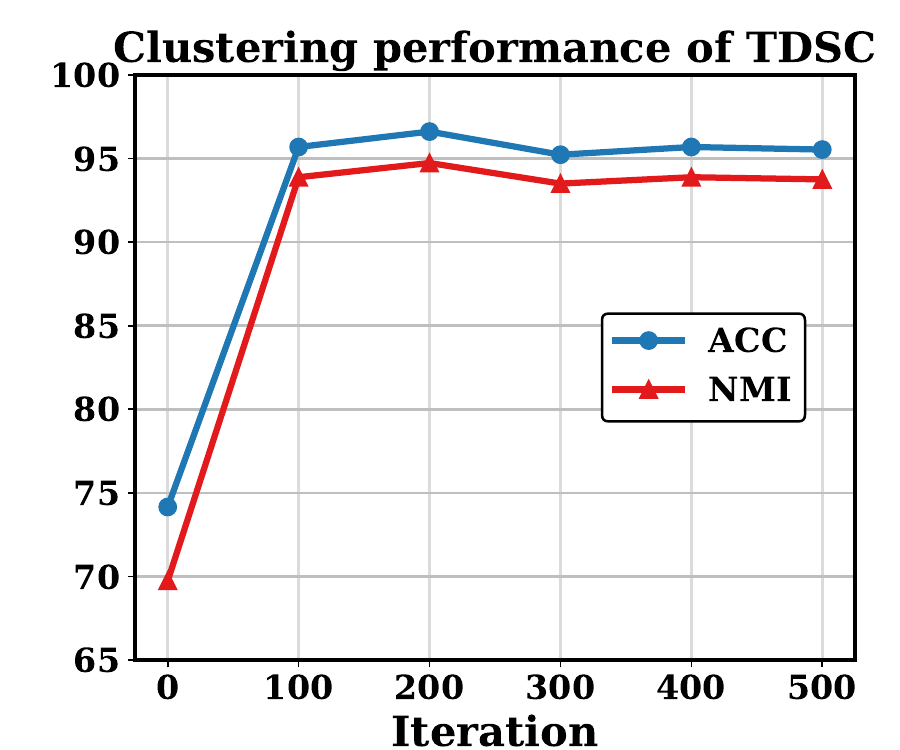}
    \end{subfigure}
    \caption{UT Dataset}
    \end{subfigure}
    \begin{subfigure}[b]{\linewidth}
    \begin{subfigure}[b]{0.25\linewidth}
           \includegraphics[width=\textwidth, height=0.75\textwidth]{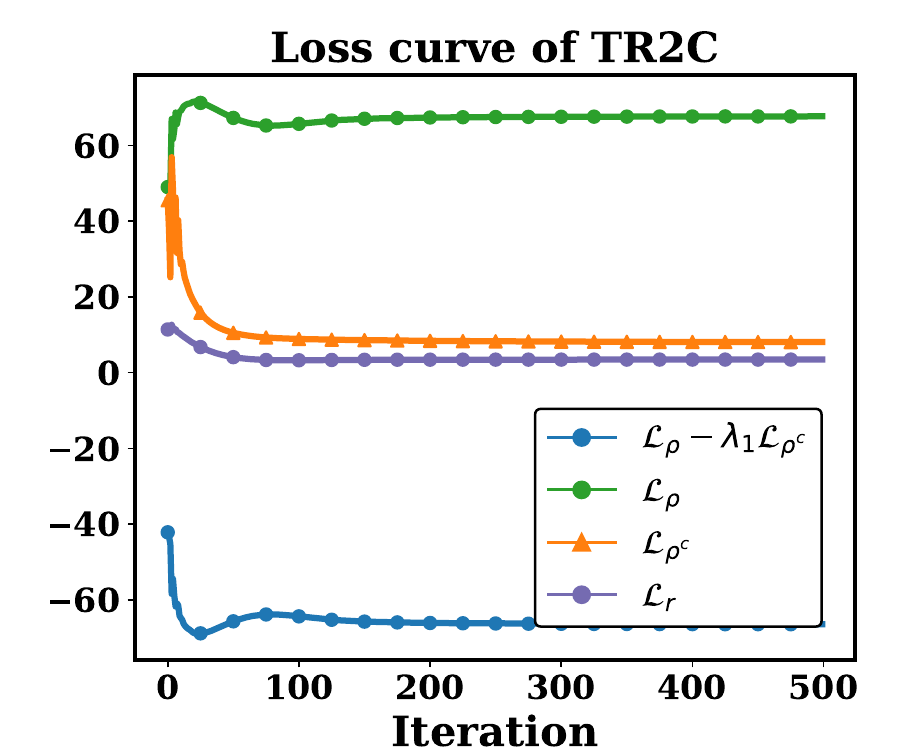}
    \end{subfigure}\hfill
    \begin{subfigure}[b]{0.25\linewidth}
           \includegraphics[width=\textwidth, height=0.75\textwidth]{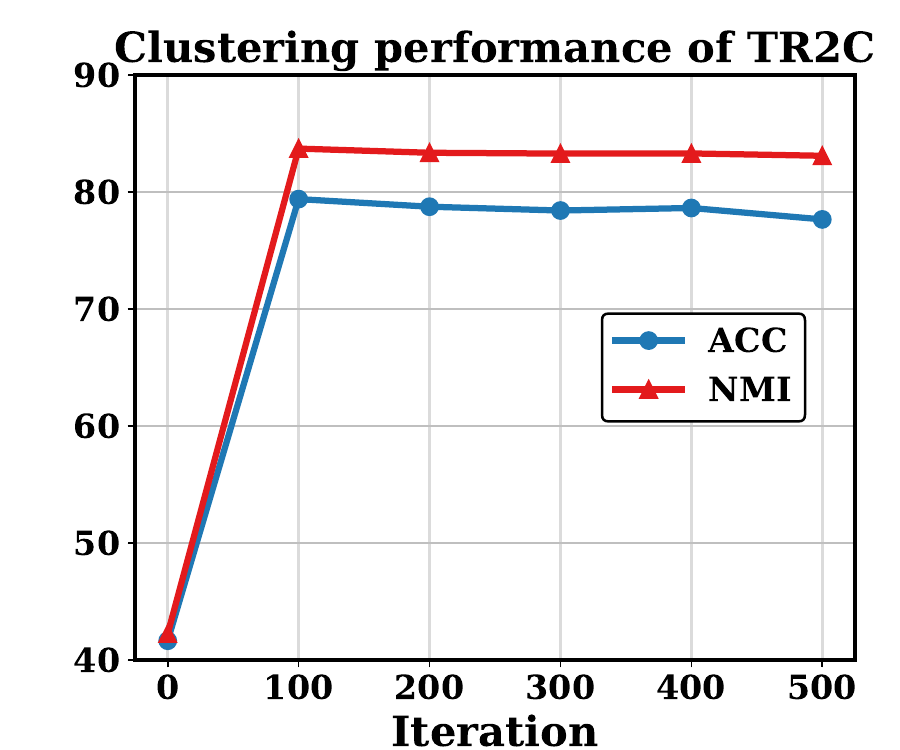}
    \end{subfigure}\hfill
    \begin{subfigure}[b]{0.25\linewidth}
           \includegraphics[width=\textwidth, height=0.75\textwidth]{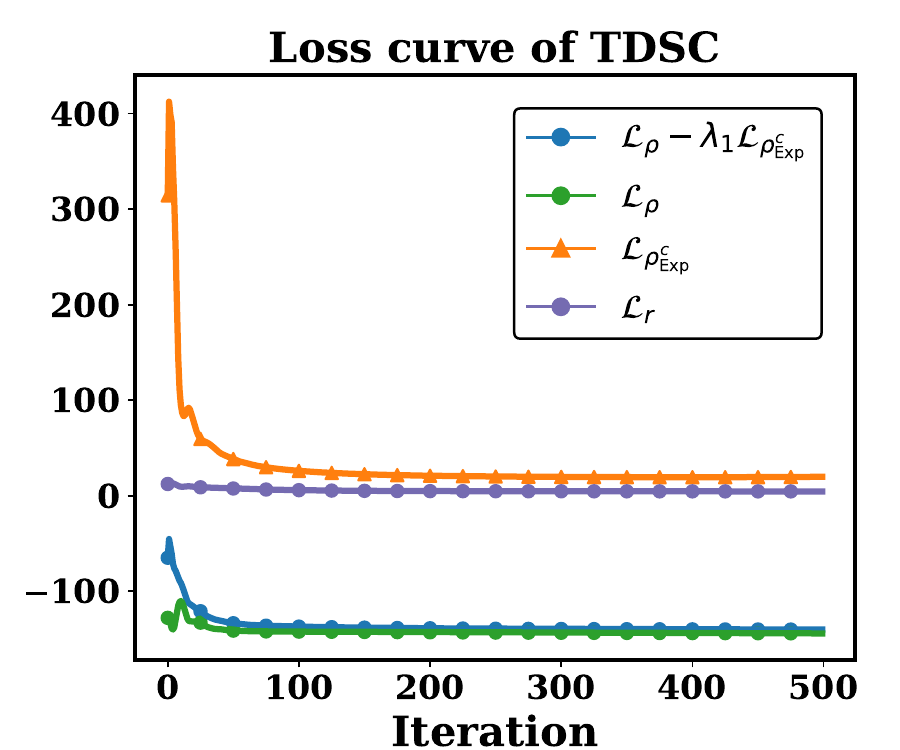}
    \end{subfigure}\hfill
    \begin{subfigure}[b]{0.25\linewidth}
           \includegraphics[width=\textwidth, height=0.75\textwidth]{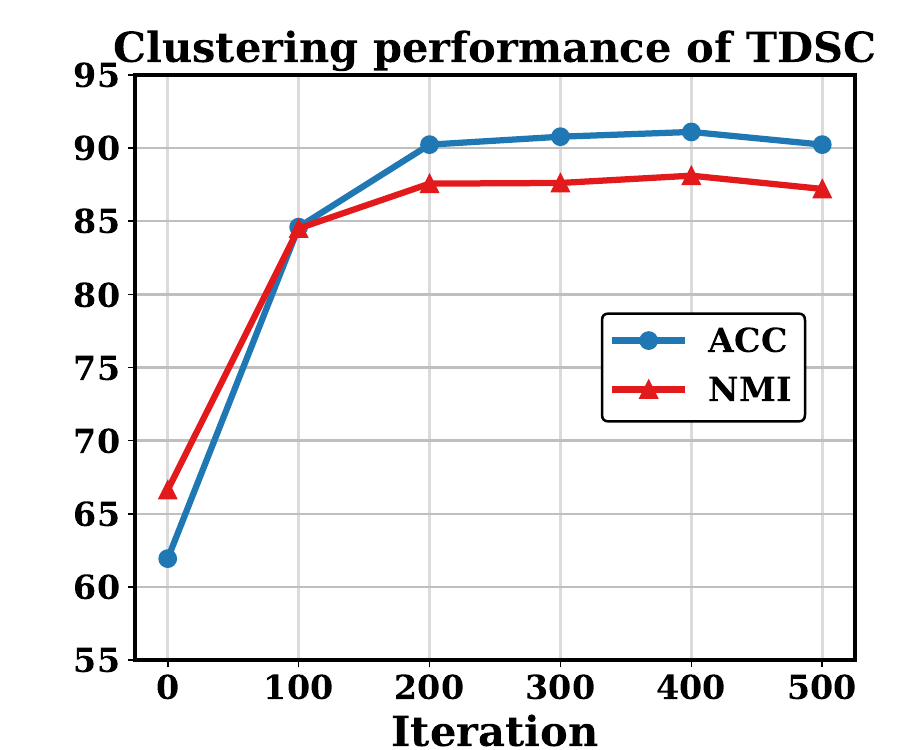}
    \end{subfigure}
    \caption{MAD Dataset}
    \end{subfigure}
     \begin{subfigure}[b]{\linewidth}
    \begin{subfigure}[b]{0.25\linewidth}
           \includegraphics[width=\textwidth, height=0.75\textwidth]{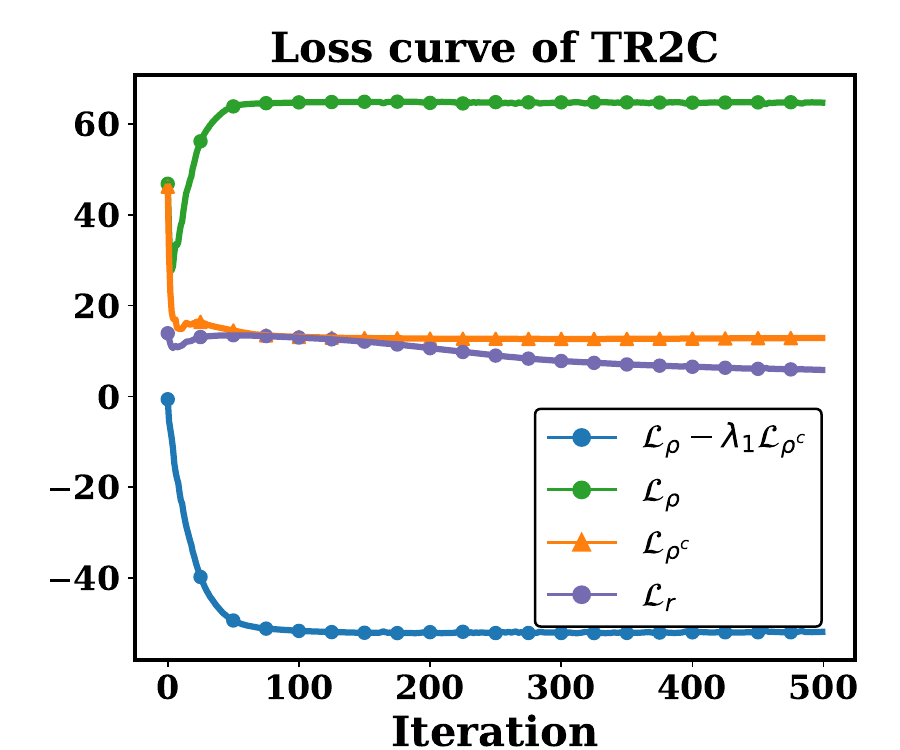}
    \end{subfigure}\hfill
    \begin{subfigure}[b]{0.25\linewidth}
           \includegraphics[width=\textwidth, height=0.75\textwidth]{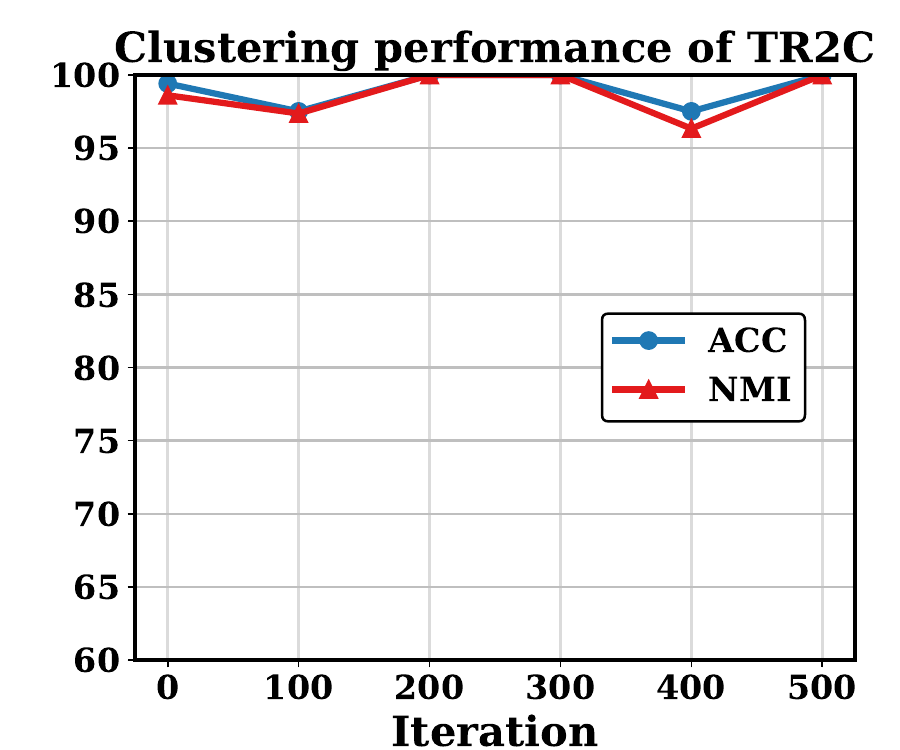}
    \end{subfigure}\hfill
    \begin{subfigure}[b]{0.25\linewidth}
           \includegraphics[width=\textwidth, height=0.75\textwidth]{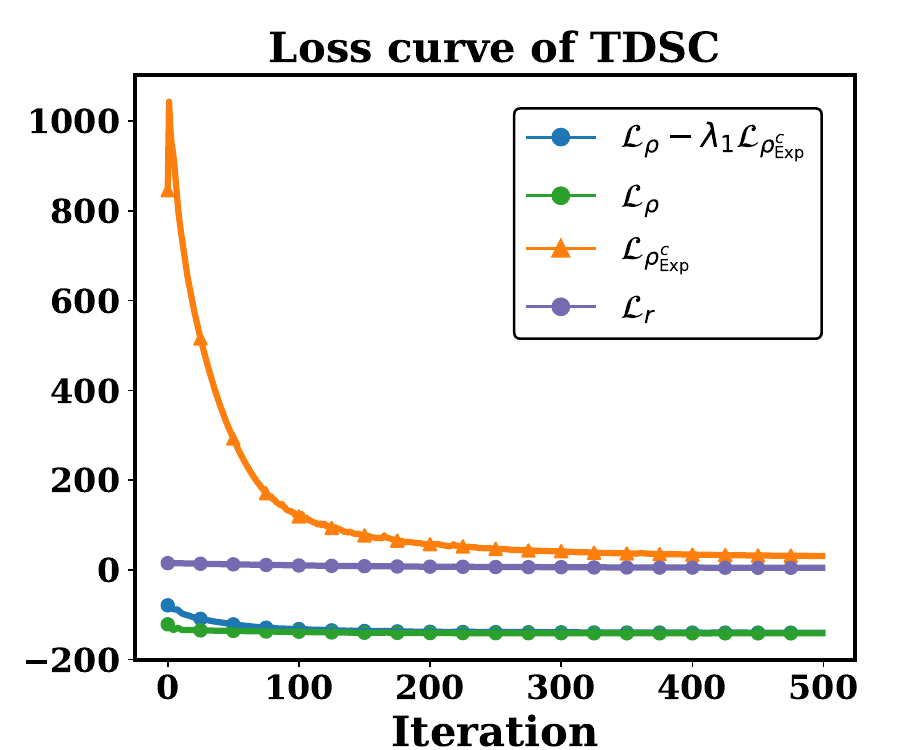}
    \end{subfigure}\hfill
    \begin{subfigure}[b]{0.25\linewidth}
           \includegraphics[width=\textwidth, height=0.75\textwidth]{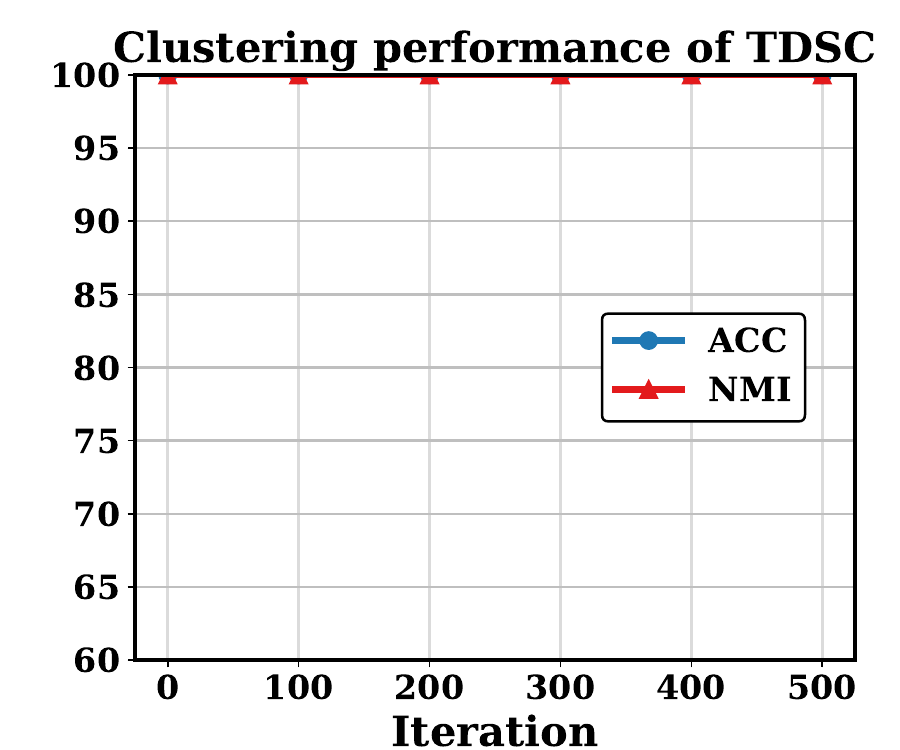}
    \end{subfigure}
    \caption{YouTube Dataset}
    \end{subfigure}
\caption{\textbf{Learning curves of the \name{} framework on HoG features.}}
\label{fig:learning_curves}
\end{figure*}

\subsection{Segmentation Results Visualization}
\label{sec:Segmentation Results Visualization}
To qualitatively assess the effectiveness of \name{}, we visualize the segmentation results together with the ground-truth labels for the first three sequences of all five benchmark datasets.
On the Weizmann, UT, and YouTube datasets, the segmentation obtained by our \name{} models closely aligns with the manually annotated ground truth.
For the Keck and MAD datasets, most segmentation errors occur in frames corresponding to transitions between different human motions.
For example, in the Keck dataset, such frames often depict subjects slightly adjusting their standing posture, making it inherently ambiguous whether they should be assigned to the preceding or the subsequent motion.

\begin{figure*}[tbp]
    \centering
    \begin{subfigure}[b]{0.33\linewidth}
    \begin{subfigure}[b]{\linewidth}
           \includegraphics[width=\textwidth]{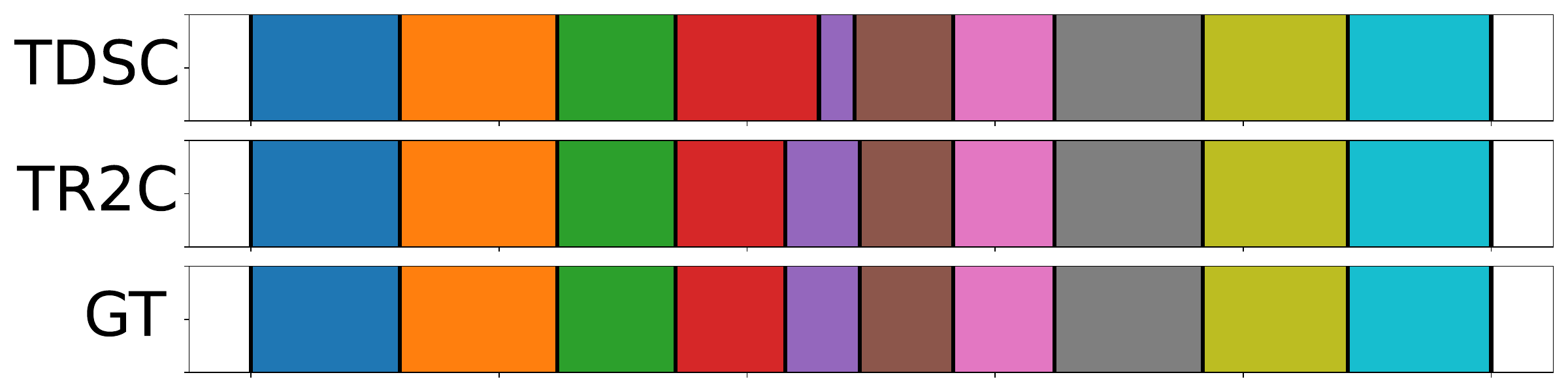}
    \end{subfigure}\\
    \begin{subfigure}[b]{\linewidth}
           \includegraphics[width=\textwidth]{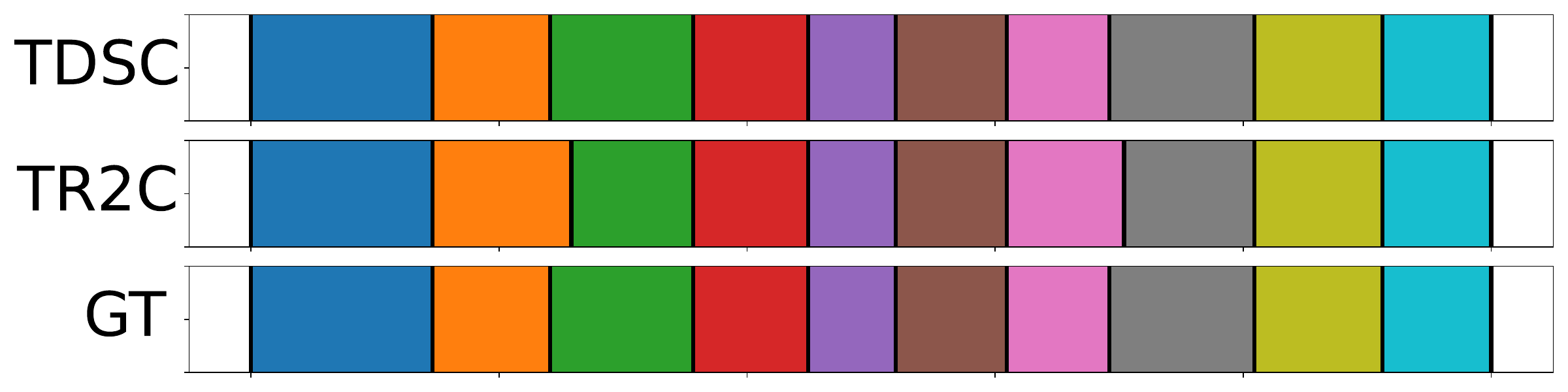}
    \end{subfigure}\\
    \begin{subfigure}[b]{\linewidth}
           \includegraphics[width=\textwidth]{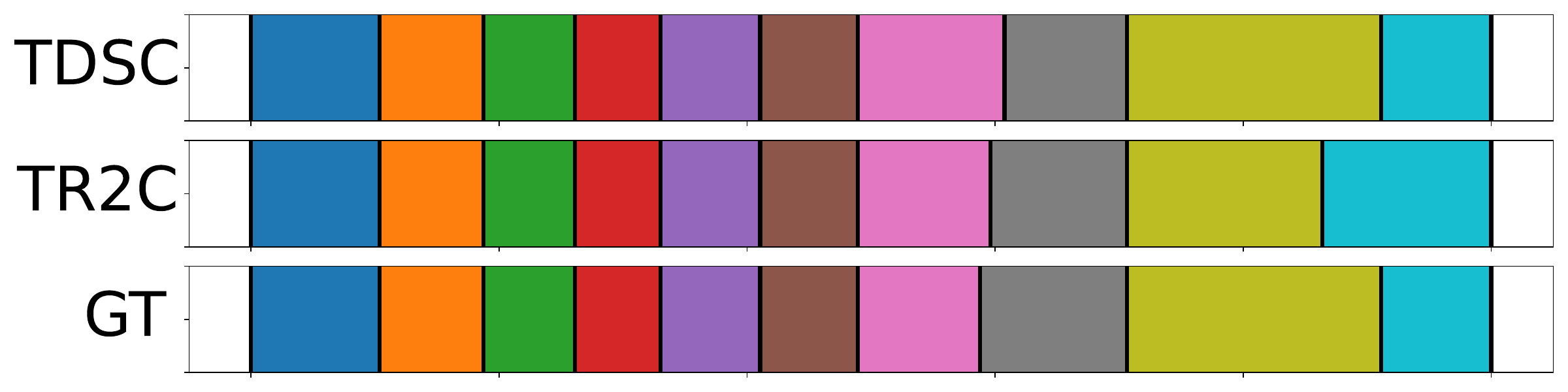}
    \end{subfigure}
    \caption{Weizmann Dataset}
    \end{subfigure}
    \begin{subfigure}[b]{0.32\linewidth}
    \begin{subfigure}[b]{\linewidth}
           \includegraphics[width=\textwidth]{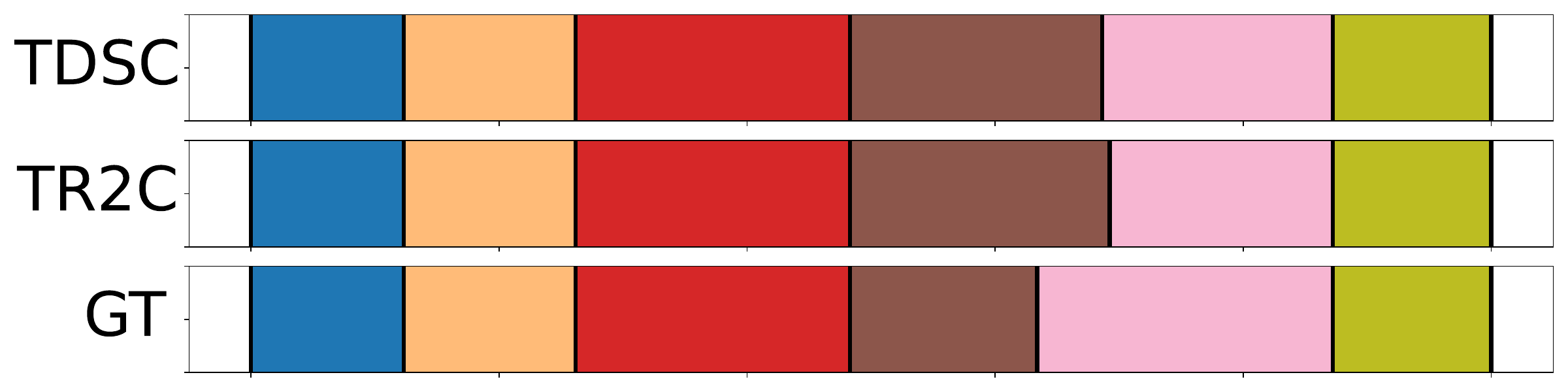}
    \end{subfigure}\\
    \begin{subfigure}[b]{\linewidth}
           \includegraphics[width=\textwidth]{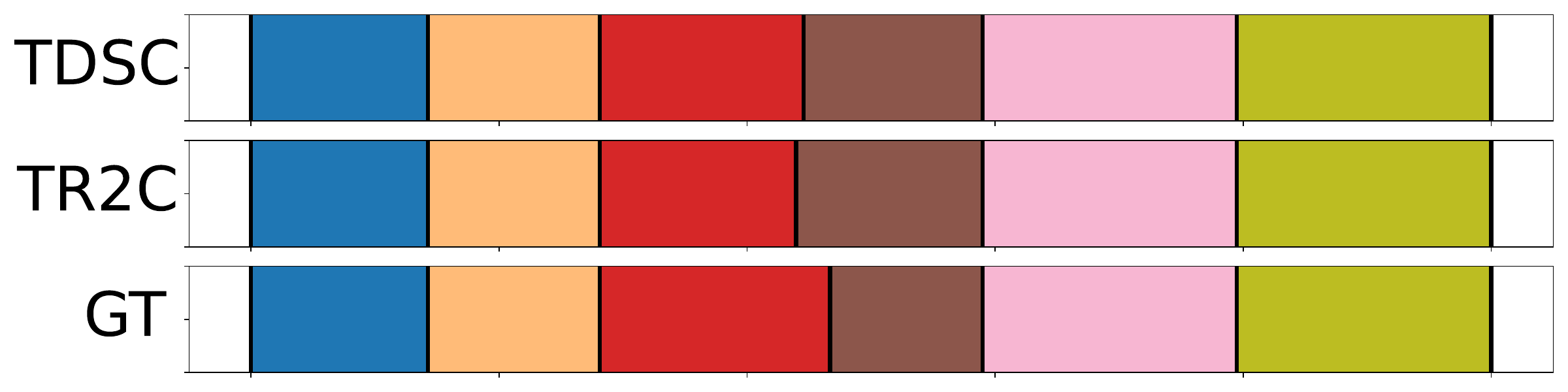}
    \end{subfigure}\\
    \begin{subfigure}[b]{\linewidth}
           \includegraphics[width=\textwidth]{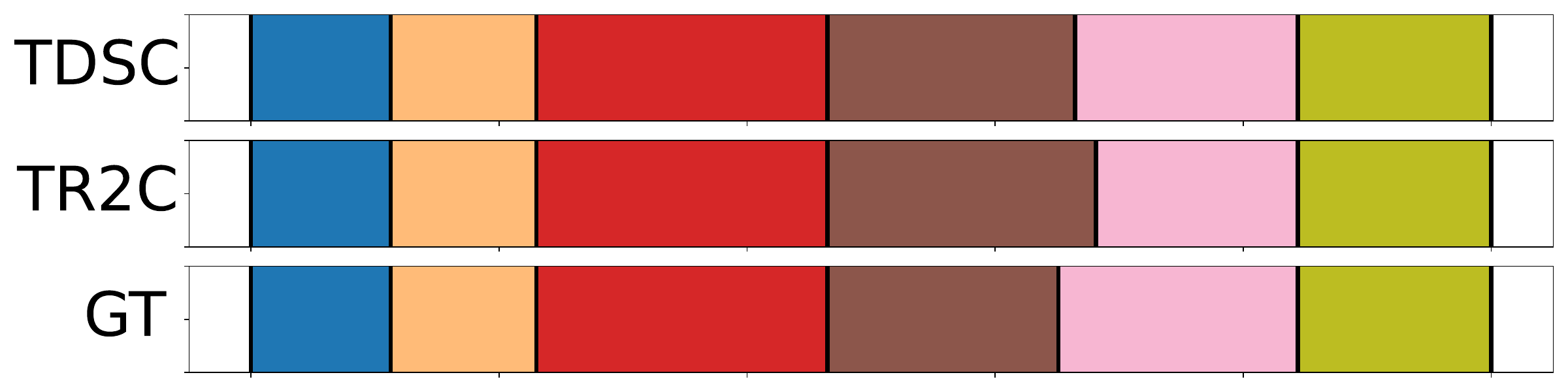}
    \end{subfigure}
    \caption{UT Dataset}
    \end{subfigure}
    \begin{subfigure}[b]{0.32\linewidth}
    \begin{subfigure}[b]{\linewidth}
           \includegraphics[width=\textwidth]{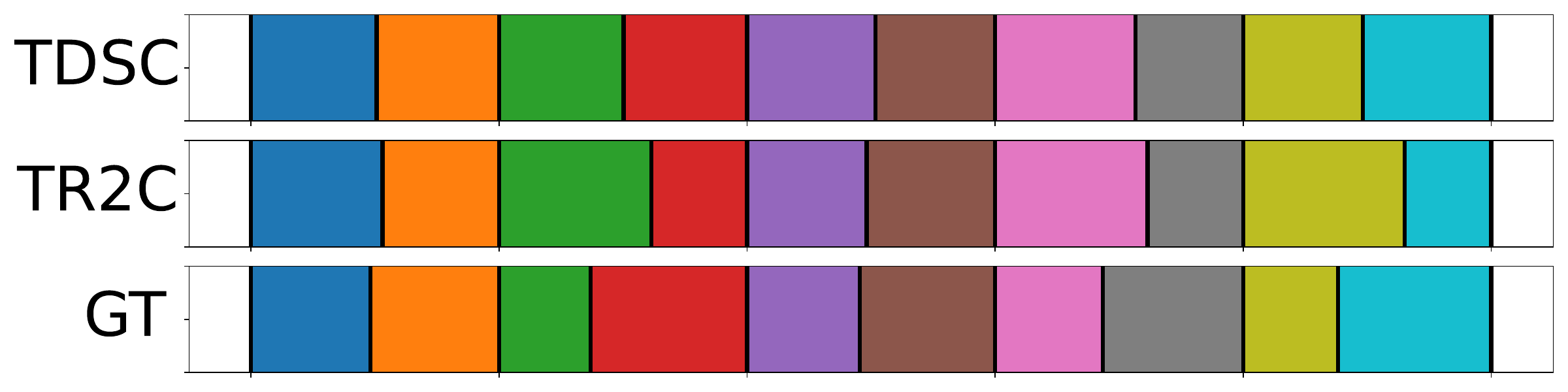}
    \end{subfigure}\\
    \begin{subfigure}[b]{\linewidth}
           \includegraphics[width=\textwidth]{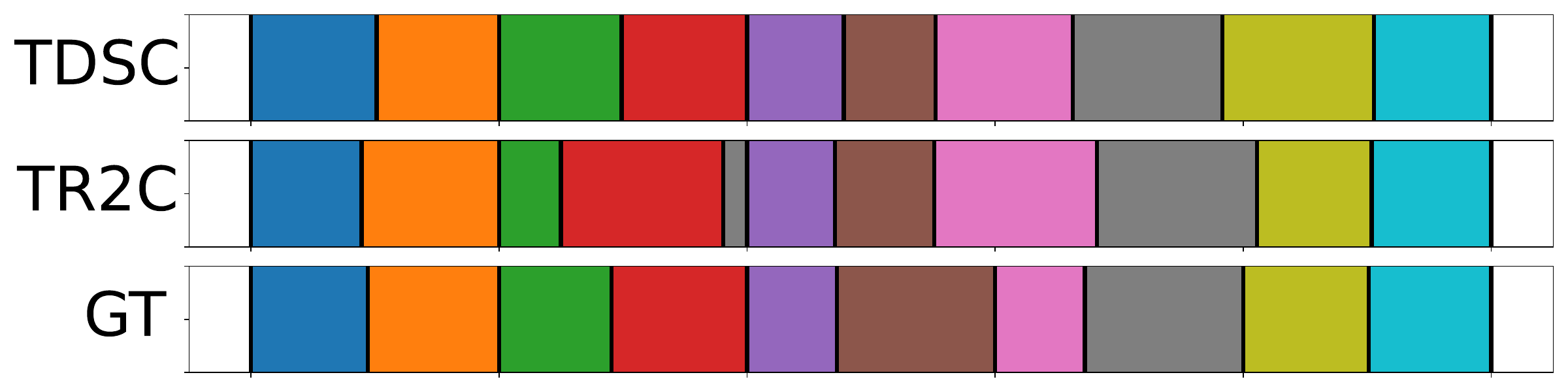}
    \end{subfigure}\\
    \begin{subfigure}[b]{\linewidth}
           \includegraphics[width=\textwidth]{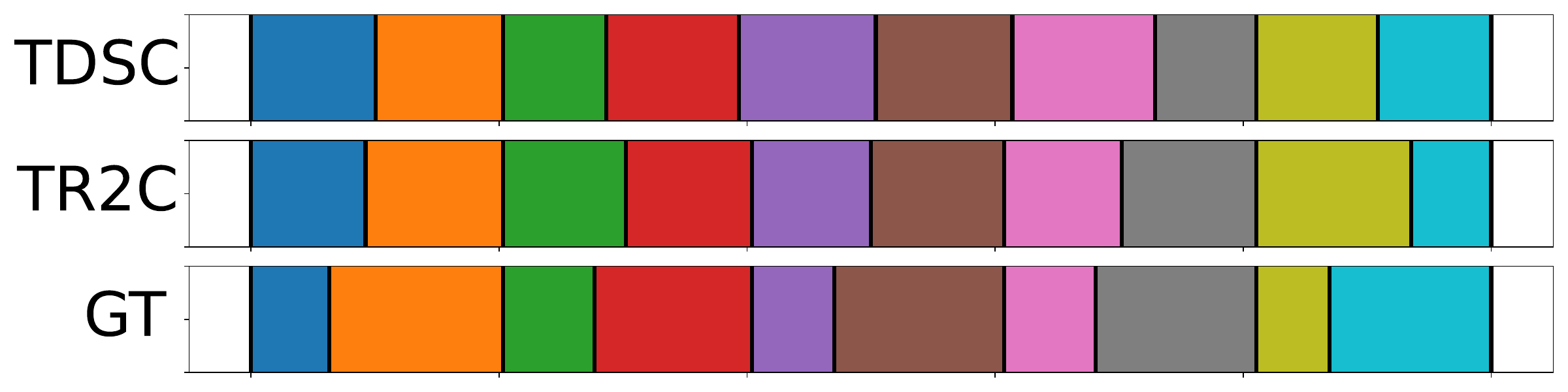}
    \end{subfigure}
    \caption{Keck Dataset}
    \end{subfigure}\\
    \begin{subfigure}[b]{0.33\linewidth}
    \begin{subfigure}[b]{\linewidth}
           \includegraphics[width=\textwidth]{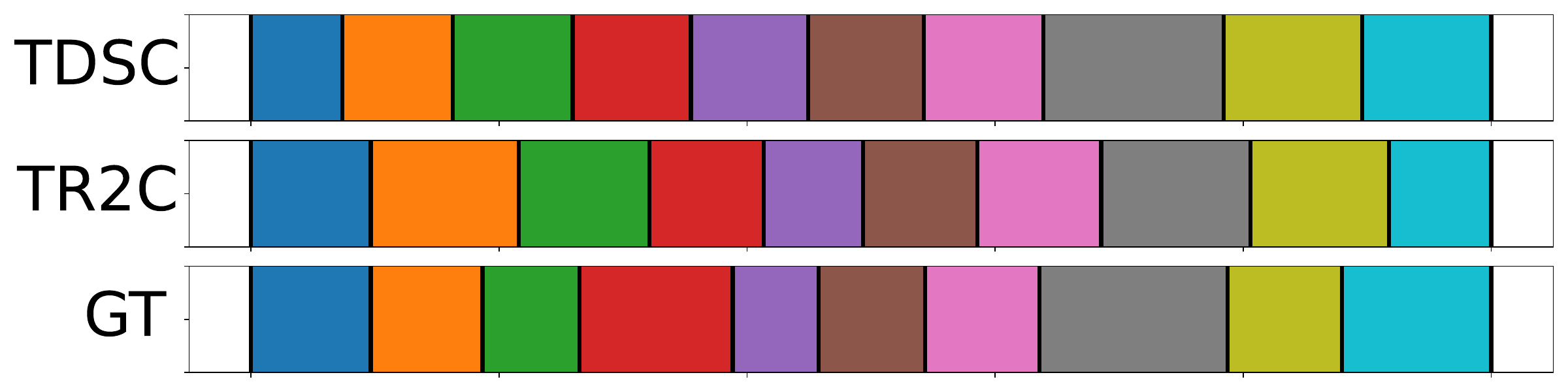}
    \end{subfigure}\\
    \begin{subfigure}[b]{\linewidth}
           \includegraphics[width=\textwidth]{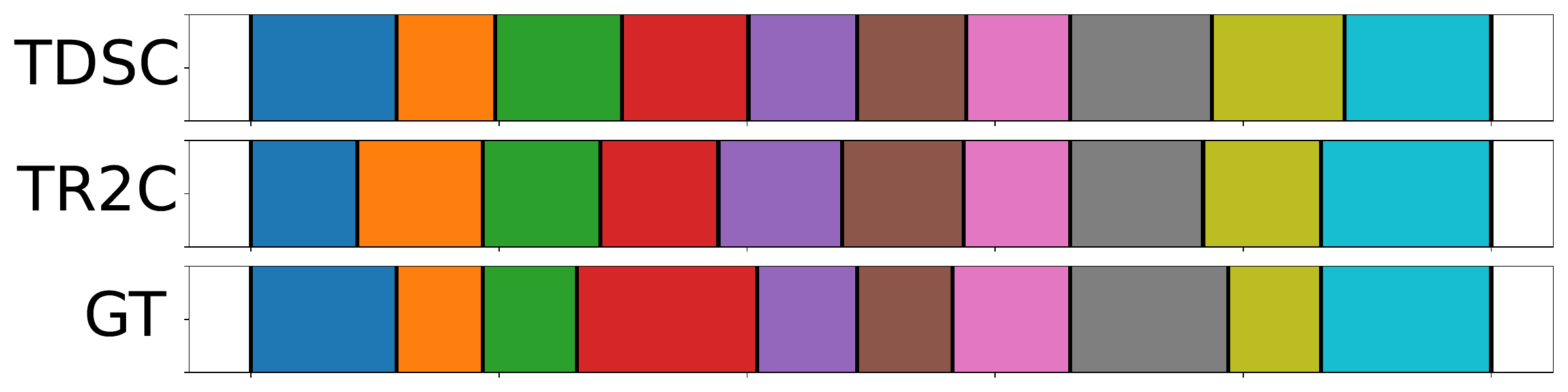}
    \end{subfigure}\\
    \begin{subfigure}[b]{\linewidth}
           \includegraphics[width=\textwidth]{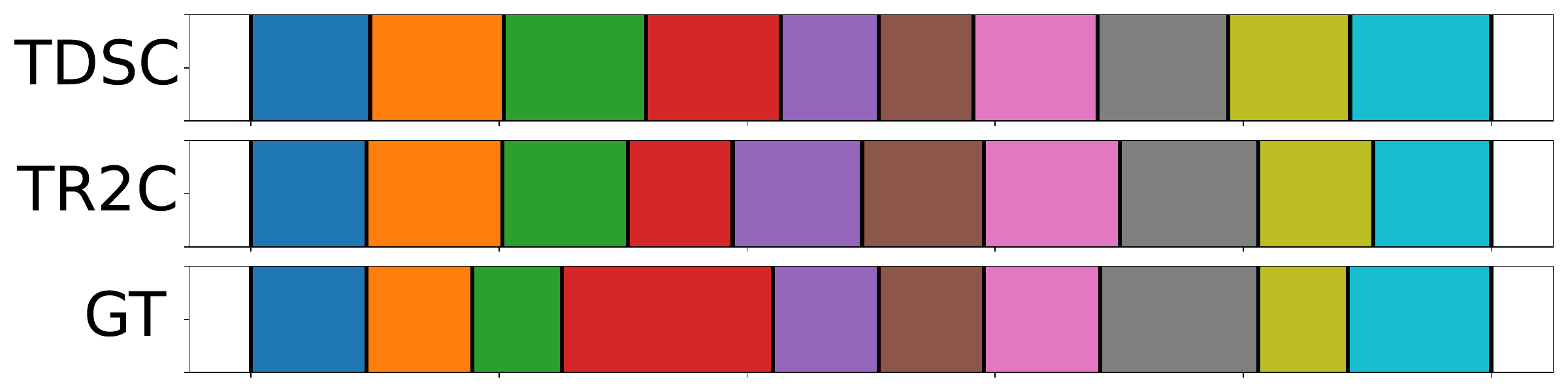}
    \end{subfigure}
    \caption{MAD Dataset}
    \end{subfigure}
    \begin{subfigure}[b]{0.33\linewidth}
    \begin{subfigure}[b]{\linewidth}
           \includegraphics[width=\textwidth]{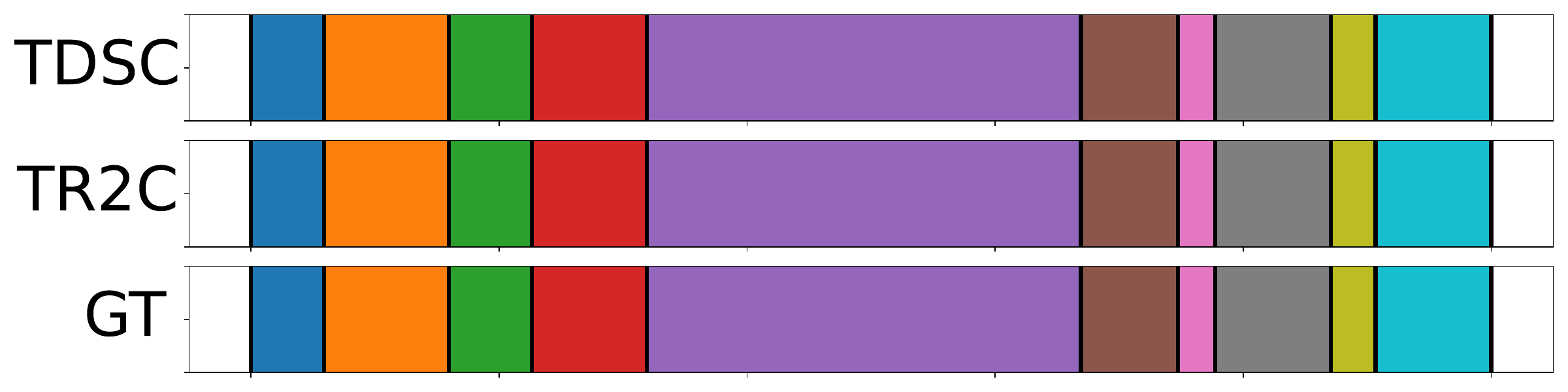}
    \end{subfigure}\\
    \begin{subfigure}[b]{\linewidth}
           \includegraphics[width=\textwidth]{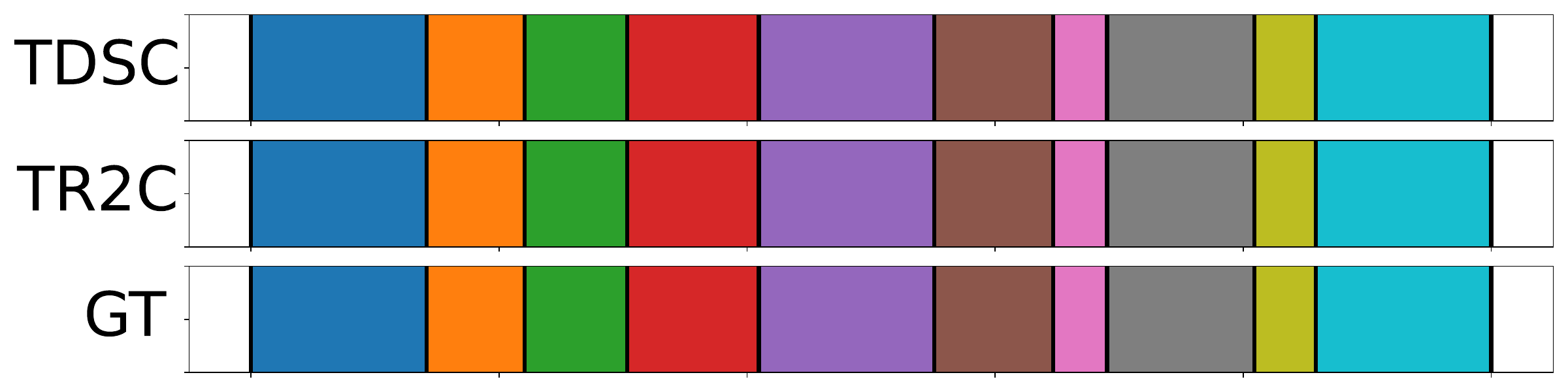}
    \end{subfigure}\\
    \begin{subfigure}[b]{\linewidth}
           \includegraphics[width=\textwidth]{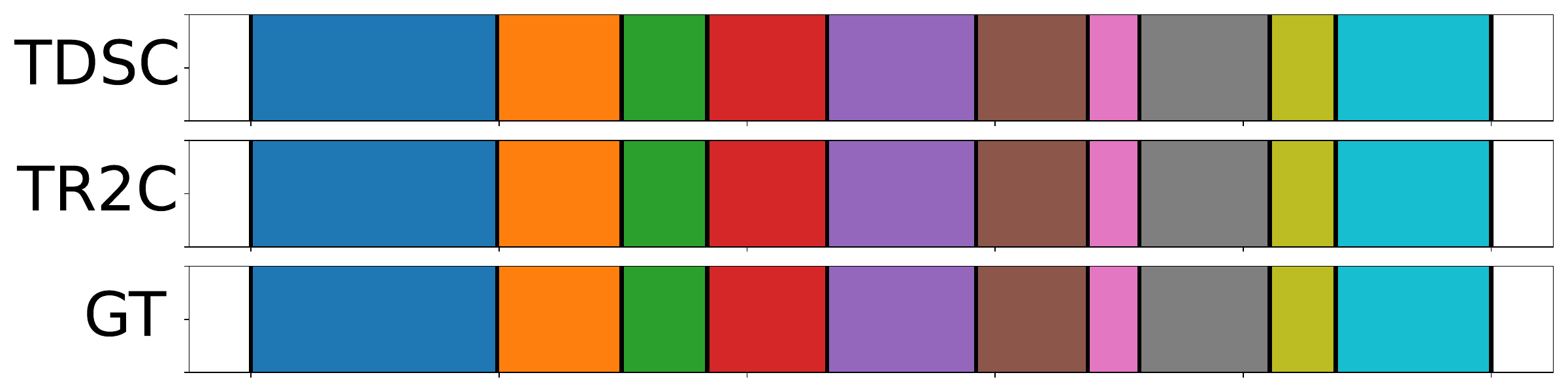}
    \end{subfigure}
    \caption{YouTube Dataset}
    \end{subfigure}
\caption{\textbf{Segmentation results visualization.} }
\label{fig:mask-visualization}
\end{figure*}

\subsection{Compared to CLIP Zero-Shot}
\label{sec:zero-shot}
Next, we investigate the zero-shot capability of a pretrained CLIP model on the HMS task.
Using the Weizmann dataset as a case study, we first convert each ground-truth motion label into a short textual description.
For example, the action ``Wave1'' is described as “A photo of people waving one hand.” (see Table~\ref{tab:Zero-shot text desc} for the complete list of descriptions).
We then encode all descriptions with the CLIP text encoder to obtain their text embeddings.
For each video frame, we compute its image embedding using the CLIP image encoder, match it to the text embedding with the highest cosine similarity, and assign the corresponding description as the zero-shot classification result for that frame.

\definecolor{myblue}{rgb}{0.29, 0.6, 0.78}

\begin{figure}[tbp]
\centering
\begin{minipage}[t]{0.4\textwidth}
    \centering
    \includegraphics[trim=0pt 50pt 0pt 30pt, clip, width=\linewidth]{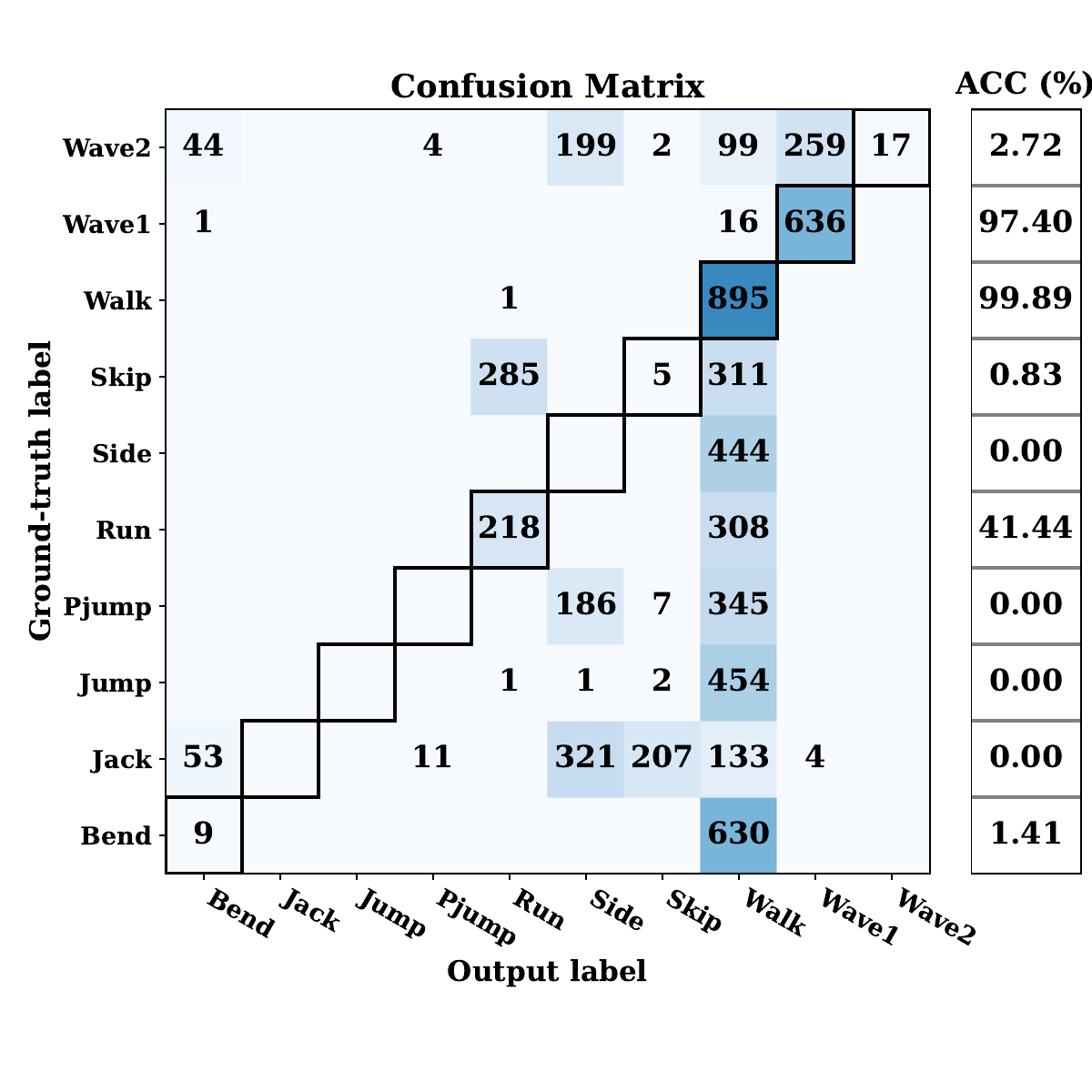}
    \caption{\textbf{Confusion matrix of zero-shot classification result for HMS task.}}
    \label{fig:confusion}
\end{minipage}%
\hfill
\begin{minipage}[t]{0.55\textwidth}
    \vspace{-2.2 in}
    \centering
    \captionof{table}{\textbf{Zero-shot classification with coarse label for HMS task.} The coarse ground-truth label is marked in \colorbox{myblue!40}{blue}.}
    \label{tab:zero-shot coarse label}
    \resizebox{0.9\linewidth}{!}{
    \begin{tabular}{>{\columncolor{myGray}}P{0.9cm}|P{1.0cm}P{1.0cm}P{1.0cm}P{1.0cm}P{1.0cm}|P{1.0cm}}
    \toprule
     \rowcolor{myGray} & \multicolumn{5}{c|}{Target} & ACC\\
    \rowcolor{myGray} \multirow{-2}{*}{GT} & Bend& Jump& Run& Walk&Wave & (\%)\\
    \midrule
         Bend&  \cellcolor{myblue!40}10&  &  &  629& & 1.56\\
         Jack&  83&  \cellcolor{myblue!40}270&  7&  327& 42 & 37.04\\
         Jump&  &  \cellcolor{myblue!40}&  &  458& & 0.00\\
         Pjump&  & \cellcolor{myblue!40} 107&  &  431& & 19.89\\
         Run&  &  & \cellcolor{myblue!40} 218&  308& & 41.44\\
         Side&  & \cellcolor{myblue!40} &  &  444& & 0.00\\
         Skip&  & \cellcolor{myblue!40} &  290&  311& & 0.00\\
         Walk&  &  &  1& \cellcolor{myblue!40} 895& & 99.89\\
         Wave1&  4&  &  &  44& \cellcolor{myblue!40}605 & 92.65\\
         Wave2&  77&  &  &  217& \cellcolor{myblue!40}330 & 52.88\\
    \bottomrule
    \end{tabular}
    }
\end{minipage}
\end{figure}

As shown in Figure~\ref{fig:confusion}, the classes ``Walk'' and ``Wave1'' achieve very high per-class accuracies of 99.89\% and 97.40\%, respectively, making them two of the best-performing categories.
However, the overall classification accuracy is only 29.14\%, which is dramatically lower than that of \trc{} or \name{} with CLIP features input (97.9\%).
A closer look reveals that 63.31\% of all frames are incorrectly predicted as ``Walk'', and none of the samples from the ``Jack'', ``Jump'', ``PJump'', and ``Side'' classes are correctly identified.

When we reduce the difficulty by mapping the fine-grained labels into five coarse categories (``bend'', ``jump'', ``run'', ``walk'', ``wave''; see Table~\ref{tab:zero-shot coarse label}), the zero-shot accuracy increases to 39.87\%, but still remains far below the performance of \trc{} or \name{} with CLIP features input.

These findings indicate that directly applying zero-shot classification is not well suited for HMS, mainly because it treats each frame independently and thus fails to exploit temporal context.
In contrast, \name{} learns temporally consistent representations that align with a union-of-orthogonal-subspaces structure, leading to substantially better segmentation performance.

\begin{table}[h]
    \centering
\caption{\textbf{Textual description for zero-shot classification of Weizmann dataset.}}
\label{tab:Zero-shot text desc}
  \resizebox{0.9\linewidth}{!}{
    \begin{tabular}{lccc|cccc}
    \toprule
          \rowcolor{myGray} \#&Label&  Icon& Textual Description&\#&Label&  Icon& Textual Description\\
    \midrule
          1&Bend
&  \begin{minipage}[c][0.7cm][c]{1.5cm}
\centering
\includegraphics[width=0.7cm, height=0.7cm]{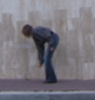}%
\includegraphics[width=0.7cm, height=0.7cm]{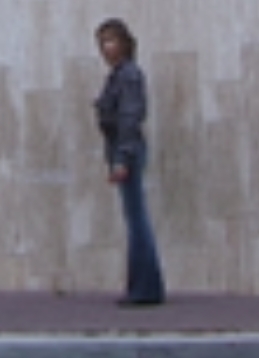}
\end{minipage}& A photo of people bending.&
 6&Side
& \begin{minipage}[c][0.7cm][c]{1.5cm}
\centering
\includegraphics[width=0.7cm, height=0.7cm]{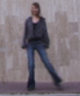}%
\includegraphics[width=0.7cm, height=0.7cm]{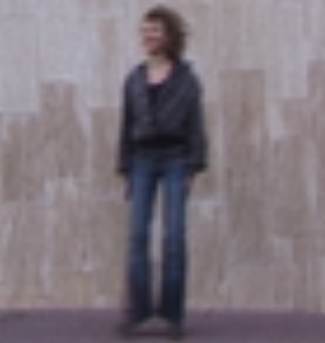}
\end{minipage} & A photo of people side jumping.\\
          2&Jack
&  \begin{minipage}[c][0.7cm][c]{1.5cm}
\centering
\includegraphics[width=0.7cm, height=0.7cm]{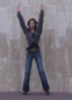}%
\includegraphics[width=0.7cm, height=0.7cm]{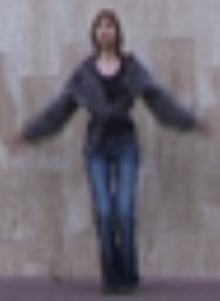}
\end{minipage}& A photo of people jumping jacks.&
7&Skip
& \begin{minipage}[c][0.7cm][c]{1.5cm}
\centering
\includegraphics[width=0.7cm, height=0.7cm]{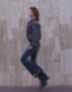}%
\includegraphics[width=0.7cm, height=0.7cm]{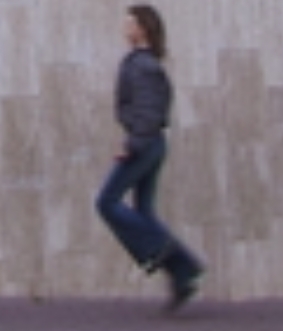}
\end{minipage} & A photo of people skipping jump.\\
          3&Jump
&  \begin{minipage}[c][0.7cm][c]{1.5cm}
\centering
\includegraphics[width=0.7cm, height=0.7cm]{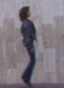}%
\includegraphics[width=0.7cm, height=0.7cm]{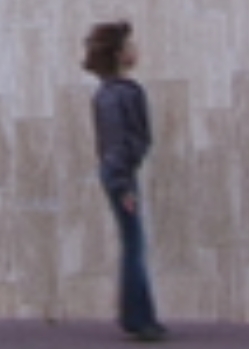}
\end{minipage}& A photo of people jumping.&
8&Walk
&\begin{minipage}[c][0.7cm][c]{1.5cm}
\centering
\includegraphics[width=0.7cm, height=0.7cm]{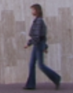}%
\includegraphics[width=0.7cm, height=0.7cm]{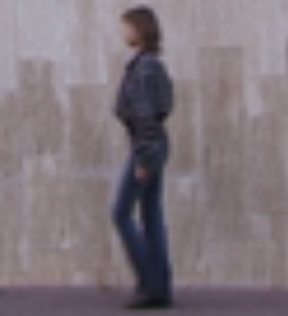}
\end{minipage}  & A photo of people walking.\\
          4&Pjump
& \begin{minipage}[c][0.7cm][c]{1.5cm}
\centering
\includegraphics[width=0.7cm, height=0.7cm]{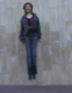}%
\includegraphics[width=0.7cm, height=0.7cm]{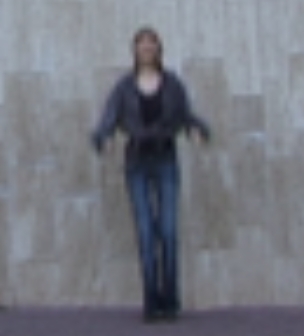}
\end{minipage} & A photo of people jumping in place.&
9&Wave1
& \begin{minipage}[c][0.7cm][c]{1.5cm}
\centering
\includegraphics[width=0.7cm, height=0.7cm]{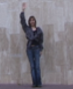}%
\includegraphics[width=0.7cm, height=0.7cm]{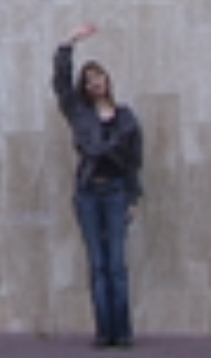}
\end{minipage} & A photo of people waving one hand.\\
          5&Run
& \begin{minipage}[c][0.7cm][c]{1.5cm}
\centering
\includegraphics[width=0.7cm, height=0.7cm]{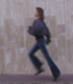}%
\includegraphics[width=0.7cm, height=0.7cm]{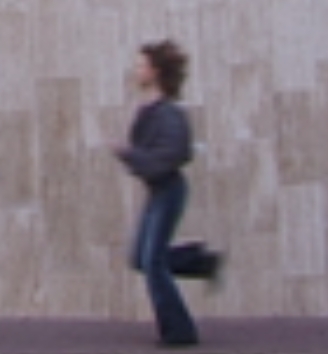}
\end{minipage} & A photo of people running.&    
  10&Wave2& \begin{minipage}[c][0.7cm][c]{1.5cm}
\centering
\includegraphics[width=0.7cm, height=0.7cm]{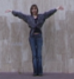}%
\includegraphics[width=0.7cm, height=0.7cm]{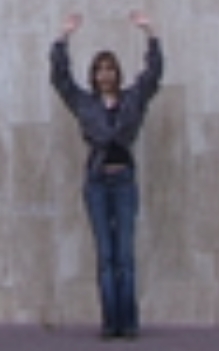}
\end{minipage}&A photo of people waving two hands.\\
\bottomrule
    \end{tabular}}
\end{table}

\end{document}